\let\zz\[\let\zzz\]
\def\set@curr@file#1{%
  \begingroup
    \escapechar\m@ne
    \xdef\@curr@file{\expandafter\string\csname #1\endcsname}%
  \endgroup
}
\def\quote@name#1{"\quote@@name#1\@gobble""}
\def\quote@@name#1"{#1\quote@@name}
\def\unquote@name#1{\quote@@name#1\@gobble"}
\crefname{theorem}{theorem}{Theorems}
\Crefname{Theorem}{Theorem}{Theorems}
\newtheorem{proposition}{Proposition}
\crefname{proposition}{proposition}{propositions}
\Crefname{Proposition}{Proposition}{Propositions}
\newtheorem{lemma}{Lemma}
\crefname{lemma}{lemma}{lemmas}
\Crefname{Lemma}{Lemma}{Lemmas}
\newtheorem{corollary}{Corollary}
\crefname{corollary}{corollary}{corollaries}
\Crefname{Corollary}{Corollary}{Corollaries}
\crefname{definition}{definition}{definitions}
\Crefname{Definition}{Definition}{Definitions}
\newtheorem{remark}{Remark}
\crefname{remark}{remark}{remarks}
\Crefname{Remark}{Remark}{Remarks}
\def\rmd{\operatorname{d}\hspace{-2pt}}
\def\Id{\operatorname{Id}}
\def\iid{i.i.d.}
\def\Ind{\mathbb{I}}
\def\rset{\mathbb{R}}
\def\brset{\overline{\mathbb{R}}}
\def\nset{\mathbb{N}}
\def\nsets{\mathbb{N}^*}
\def\Ber{\mathrm{Ber}}
\newcommandx{\marginal}[2][1=]{\xi^{#2}_{#1}}
\newcommandx{\margindensm}[2]{m_{#1}^{#2}}
\newcommandx{\margindensmu}[4]{m_{#1}^{#2}(#3|#4)}
\newcommandx{\margindens}[4][4=]{\ifthenelse{\equal{#3}{}}{q_{#1}^{#4}(#2)}{q_{#1,#3}^{#4}(#2)}}
\newcommandx{\margindensw}[3][3=]{\ifthenelse{\equal{#3}{}}{q_{#1}^{#2}}{q_{#1,#3}^{#2}}}
\newcommand{\Mtransw}[2]{Q_{#1,#2}}
\newcommand{\chunk}[3]{#1_{#2:#3}}
\newcommand{\KL}[2]{\operatorname{KL}\left(#1\Vert #2\right)}
\newcommand{\KLLigne}[2]{\operatorname{KL}(#1| #2)}
\def\lowerbound{\mathcal{L}}
\def\lowerboundaux{\mathcal{L}_{\mathrm{aux}}}
\def\rmd{\mathrm{d}}
\def\eqsp{\,}
\def\fwdtransfo{T}
\newcommandx{\fwdtransfoparam}[2]{\fwdtransfo_{#1,#2}}
\def\invtransfo{\mathring{\fwdtransfo}}
\def\trueflow{\mathsf{T}}
\def\dmhratio{\mathring{\alpha}_{\phi,\tu}}
\def\msa{\mathsf{A}}
\def\mcu{\mathcal{U}}
\def\wrt{w.r.t.}
\newcommandx{\Vnorm}[2][1=V]{\| #2 \|_{#1}}
\newcommandx{\VnormEq}[2][1=V]{\ensuremath{\left\| #2 \right\|_{#1}}}
\newcommand{\parentheseDeux}[1]{\left[ #1 \right]}
\newcommand{\defEns}[1]{\left\lbrace #1 \right\rbrace }
\newcommandx\probaMarkovTilde[2][2=]
\newcommand{\plusinfty}{+\infty}
\def\ie{\textit{i.e.}}
\def\eqsp{\;}
\newcommand{\ooint}[1]{\left(#1\right)}
\newcommand{\ccint}[1]{\left[#1\right]}
\newcommandx{\weight}[2][2=n]{\omega_{#1,#2}^N}
 \newcommand{\alaini}[1]{\todo[color=black!20,inline]{{\bf AD:} #1}}
\newcommandx\sequence[3][2=,3=]
\newcommandx\sequenceD[3][2=,3=]
\newcommandx{\sequencen}[2][2=n\in\N]{\ensuremath{\{ #1_n, \eqsp #2 \}}}
\newcommandx\sequenceDouble[4][3=,4=]
\newcommandx{\sequencenDouble}[3][3=n\in\N]{\ensuremath{\{ (#1_{n},#2_{n}), \eqsp #3 \}}}
\def\iid{i.i.d.}
\newcommand{\opnorm}[1]{{\left\vert\kern-0.25ex\left\vert\kern-0.25ex\left\vert #1
    \right\vert\kern-0.25ex\right\vert\kern-0.25ex\right\vert}}
\def\Id{\operatorname{Id}}
\def\varphibf{\operatorname{g}}
\newcommandx{\CPE}[3][1=]{{\mathbb E}_{#1}\left[\left. #2 \middle \vert #3 \right. \right]} %%%% esperance conditionnelle
\newcommandx{\CPVar}[3][1=]{\mathrm{Var}^{#3}_{#1}\left\{ #2 \right\}}
\newcommand{\CPP}[3][]
{\ifthenelse{\equal{#1}{}}{{\mathbb P}\left(\left. #2 \, \right| #3 \right)}{{\mathbb P}_{#1}\left(\left. #2 \, \right | #3 \right)}}
\newcommandx{\osc}[2][1=]{\mathrm{osc}_{#1}(#2)}
\def\Id{\operatorname{Id}}
\def\v{v}
\def\z{z}
\newcommand\coupling[2]{\Gamma(\mu,\nu)}
\def\tpi{\tilde{\pi}}
\newcommand{\complementary}{\mathrm{c}}
\def\Rad{\operatorname{Rad}}
\def\msi{\mathsf{I}}
\def\msa{\mathsf{A}}
\def\msb{\mathsf{B}}
\def\msu{\mathsf{U}}
\def\tmsu{{\mathsf{U}}}
\def\msv{\mathsf{V}}
\def\msphi{\mathsf{\Phi}}
\def\mcbb{\mathcal{B}}  %%% \mcb est déjà pris
\newcommand{\mcb}[1]{\mathcal{B}(#1)}
\def\mcu{\mathcal{U}}
\def\mcq{\mathcal{Q}}
\def\rset{\mathbb{R}}
\def\nset{\mathbb{N}}
\def\nsets{\mathbb{N}^*}
\def\rmd{\mathrm{d}}
\def\rme{\mathrm{e}}
\def\rmC{\mathrm{C}}
\def\tu{u}
\def\FamilyVar{\mathcal{Q}}
\def\RWM{\scriptscriptstyle{\operatorname{RWM}}}
\def\MALA{{\scriptscriptstyle{\operatorname{MALA}}}}
\def\MH{\mathsf{MH}}
\def\stephmc{\upgamma}
\def\densgauss{\varphibf}
\def\LF{\mathsf{LF}}
\def\Refresh{\mathsf{ref}}
\newcommand{\card}[1]{\vert #1\vert}
\def\tz{\tilde{z}}
\icmltitlerunning{MetFlow: MCMC \& VI}
\begin{document}

\twocolumn[
%\icmltitle{Metropolized Flow for Variational Inference}
%\icmltitle{Simple and efficient MCMC Variational Inference using Metropolized Flows}
\icmltitle{MetFlow: A New Efficient Method for Bridging the Gap between Markov Chain Monte Carlo and Variational Inference}

% It is OKAY to include author information, even for blind
% submissions: the style file will automatically remove it for you
% unless you've provided the [accepted] option to the icml2019
% package.

% List of affiliations: The first argument should be a (short)
% identifier you will use later to specify author affiliations
% Academic affiliations should list Department, University, City, Region, Country
% Industry affiliations should list Company, City, Region, Country

% You can specify symbols, otherwise they are numbered in order.
% Ideally, you should not use this facility. Affiliations will be numbered
% in order of appearance and this is the preferred way.

\icmlsetsymbol{equal}{*}

\begin{icmlauthorlist}
  \icmlauthor{Achille Thin}{ecole}
  \icmlauthor{Nikita Kotelevskii}{sk}
  \icmlauthor{Jean-Stanislas Denain}{ecole}
  \icmlauthor{Leo Grinsztajn}{ecole}
  \icmlauthor{Alain Durmus}{ens}
  \icmlauthor{Maxim Panov}{sk}
  \icmlauthor{Eric Moulines}{ecole}
\end{icmlauthorlist}

\icmlaffiliation{ecole}{CMAP, Ecole Polytechnique, Universite Paris-Saclay, 91128 Palaiseau, France}

\icmlaffiliation{ens}{Ecole Normale Supérieure Paris-Saclay, Cachan}

\icmlaffiliation{sk}{CDISE, Skolkovo Institute of Science and Technology, Moscow, Russia}

\icmlcorrespondingauthor{Achille Thin}{achille.thin@polytechnique.edu}

% You may provide any keywords that you
% find helpful for describing your paper; these are used to populate
% the "keywords" metadata in the PDF but will not be shown in the document
\icmlkeywords{Machine Learning, ICML}

\vskip 0.3in
]

% this must go after the closing bracket ] following \twocolumn[ ...

% This command actually creates the footnote in the first column
% listing the affiliations and the copyright notice.
% The command takes one argument, which is text to display at the start of the footnote.
% The \icmlEqualContribution command is standard text for equal contribution.
% Remove it (just {}) if you do not need this facility.

%\printAffiliationsAndNotice{}  % leave blank if no need to mention equal contribution
\printAffiliationsAndNotice{}
%\icmlEqualContribution{} % otherwise use the standard text.

\begin{abstract}
  % A wealth of contributions has been devoted to harnessing the computational efficiency of Variational Inference (VI) and the exactness of Markov Chain Monte Carlo (MCMC) methods. Nevertheless, the proposed methods were only partially satisfactory, either since their implementation remains computationally prohibitive, or the optimization objective does not take into account the MCMC steps.

  In this contribution, we propose a new computationally efficient method to combine Variational Inference (VI) with Markov Chain Monte Carlo (MCMC).
  %It is based on the marginal density of the $K$-th iterate of the MCMC chain started from the point from standard variational posterior.
  This approach can be used with generic MCMC kernels, but is especially well suited to \textit{MetFlow}, a novel family of MCMC algorithms we introduce, in which proposals are obtained using Normalizing Flows. The marginal distribution produced by such MCMC algorithms is a mixture of flow-based distributions, thus drastically increasing the expressivity of the variational family. Unlike previous methods following this direction, our approach is amenable to the reparametrization trick and does not rely on computationally expensive reverse kernels.
  Extensive numerical experiments show clear computational and performance improvements over state-of-the art methods.
\end{abstract}

\section{Introduction}
\label{sec:intro}
% !TEX root = main.tex

One of the biggest computational challenge these days in machine learning and computational statistics is to sample from a complex distribution known up to a multiplicative constant. Indeed, this problem naturally appears in Bayesian inference~\cite{robert2007bayesian} or generative modeling~\cite{kingma2013auto}. Very popular methods to address this
problem are Markov Chain Monte Carlo (MCMC) algorithms~\cite{brooks2011handbook} and Variational Inference (VI)~\cite{wainwright:jordan:2008, Blei_2017}.
%The methods proposed so far to combine these two approaches have some drawbackFins.
The main contribution of this paper is to present a new methodology to successfully combine these two approaches mitigating their drawbacks and providing the state-of-the-art sampling quality from high dimensional unnormalized distributions.

% Sampling a high-dimensional distribution $\pi = \tilde{\pi}/\rmC_\pi$, known only up to a normalizing constant $C_\pi$, is an ubiquitous issue in many fields \cite{brooks2011handbook,levy2017generalizing,papamakarios2019normalizing}.  Sampling from such distribution is a very difficult problem in general, which is critical for learning and inference. This work focuses on Variational Inference (VI) \cite{wainwright:jordan:2008, Blei_2017}, and in particular Variational Auto Encoder \cite{kingma2013auto}, but the methodology which is introduced goes of course well beyond.

Starting from a parameterized family of distributions $\FamilyVar= \{\margindensw{\phi}{}{}\colon \, \phi \in \msphi \subset
\rset^q\}$, VI approximates the intractable distribution with density $\pi$ on $\rset^D$ by maximizing the evidence lower bound (ELBO) defined by
\begin{align}
\label{eq:lower-bound-marginal}
  \lowerbound(\phi) &= \int \log \bigl(\tilde{\pi}(z) / \margindens{\phi}{z}{}\bigr) \margindens{\phi}{z}{} \rmd z \eqsp,
  % \nonumber
  % \\
  %   \eqsp,
\end{align}
using an unnormalized version $\tilde{\pi}$ of $\pi$, \ie~$\pi=\tilde{\pi}/\rmC_{\pi}$ setting $\rmC_{\pi} = \int_{\rset^D} \tilde{\pi}(z)\rmd z$. Indeed, this
approach consists in minimizing $\phi \mapsto \KLLigne{\margindensw{\phi}{}{}}{\pi}$ since $\lowerbound(\phi) = \log(\rmC_\pi) - \KLLigne{\margindensw{\phi}{}{}}{\pi}$. The
design of the family $\FamilyVar$ of variational distributions has a huge influence on the overall performance -- more flexible families provide better approximations of the target.

Recently, it has been suggested to enrich the traditional mean field variational approximation by combining them with invertible mappings with additional trainable parameters.
A popular implementation of this principle is the Normalizing Flows (NFs) approach~\cite{dinh2016density,pmlr-v37-rezende15,kingma2016improved} in which a mean-field variational distribution is  deterministically transformed through a fixed-length sequence of parameterized invertible mappings. NFs have received a lot of attention recently and have proven to be very successful for VI and in particular for Variational Auto Encoder; see~\cite{kobyzev2019normalizing,papamakarios2019normalizing} and the references therein.

The drawback of variational methods is that they only allow the target distribution to be approximated by a parametric  family of distributions. On the contrary, MCMC are generic methods which have theoretical guarantees~\cite{robert2013monte}. The basic idea behind MCMC is to design a Markov chain $(z_k)_{k \in \nset}$ whose stationary distribution is $\pi$.
% Approximate samples from the target
% are obtained by drawing $z_0$ from an initial distribution $\margindensw{\phi}{0}$ and sampling iteratively from the Markov kernel $z_k \sim \Mtrans{\phi}{z_{k-1}}{\cdot}{}$ for $k \in  \{1, \dots, K\}$.
%By carefully designing $\Mtransw{\phi}{}$,
Under mild assumptions, the distribution of $z_K$ converges to the target $\pi$ as $K$ goes to infinity.
Yet, this convergence is in most cases very slow and therefore this class of methods can be prohibitively computationally expensive.
 % The main problem with MCMC algorithms is that the rate at which the marginal converges to the target (the mixing time) can be prohibitively slow, the problem becoming more acute when the  dimension of the state-space increases (note however that this curse of dimensionality can be at least partially alleviated using sophisticated MCMC samplers based on Langevin or Hamiltonian dynamics; see for example~\cite{ma2019sampling} and the references therein.

The idea to ``bridge the gap'' between MCMC and VI was first considered in~\cite{salimans2015markov} and has later been pursued in several works; see~\cite{wolf2016variational}, \cite{hoffman2019neutra} and~\cite{caterini2018hamiltonian} and the references therein. In these papers, based on a family of Markov kernels with target distribution $\pi$ and depending on trainable parameters $\phi$, the family $\mcq$ consists in the marginal distribution obtained after $K$ iterations of these Markov kernels.

In this paper, we develop a new approach to combine the VI and MCMC approaches.
Compared to~\cite{salimans2015markov} and~\cite{wolf2016variational}, we do not need to extend the variational approximation on the joint distribution of the $K$ samples of the Markov chain and therefore avoid to introduce and learn reverse Markov kernels.

Our main contributions can be summarized as follows:
\begin{enumerate}[wide, labelwidth=!, labelindent=0pt,label=\arabic*)]
  \item We propose a new computationally tractable ELBO which can be applied to most MCMC algorithms, including Metropolis-Adjusted Langevin Algorithm - MALA - and Hamiltonian Monte Carlo -HMC-. Compared to~\cite{pmlr-v70-hoffman17a}, the Markov kernels can be jointly optimized with the initial distribution $\margindensw{\phi}{0}{}$. 

  \item We propose an implementation of our approach \textit{MetFlow} using a new family of ergodic MCMC kernels in which the proposal distributions are constructed using Normalizing Flows. Then, combining these Markov kernels with classical mean-field variational initial distributions, we obtain variational distributions with more expressive power than NF models, at the reasonable increase of the computational cost. Moreover, unlike plain NFs, we guarantee that each Markov kernel leaves the target invariant and that each iteration improves the distribution.

  \item We present several numerical illustrations to show that our approach allows us to meaningfully trade-off between the approximation of the target distribution and computation, improving over state-of-the-art methods. The following link provides access to the implementation of the proposed method and all the experiments: \url{https://github.com/stat-ml/metflow}.
\end{enumerate}

Our paper is organized as follows. In \Cref{sec:main}, we start by describing our new methodology. Then, in \Cref{sec:deterministic_MCMC}, we introduce MetFlow, a class of ``deterministic'' MCMC algorithm, taking advantage of the flexibility of Normalizing Flows as proposals. \Cref{sec:related_work} discusses the related work in more detail. The benefits of our method are illustrated through several numerical experiments in \Cref{sec:experiments}. Finally, we discuss the outcomes of the study and some future research directions in Section~\ref{sec:conclusions}.

\section{A New Combination Between VI and MCMC}
\label{sec:main}
% !TEX root = main.tex
\label{sec:cond-metr-kern}

\subsection{Basics of Metropolis-Hastings}
  The Metropolis Hastings (MH) algorithm to sample a density $\pi$ \wrt~the Lebesgue measure on $\rset^D$ defines a Markov chain $(Z_k)_{k \in \nset}$ with stationary distribution $\pi$ as follows.
  Conditionally to the current state $Z_k \in \rset^D$, $k \in \nset$, %Denote by $Z_k \in \rset^D$ the current state of the Markov chain (MC). At each iteration,
  a proposal $Y_{k+1}=T_\phi(Z_k, U_{k+1})$ is sampled %from a transition density. We might always write the proposal
  %\begin{EQA}[c]
  %  Y_{k+1} = T_\phi(Z_k, U_{k+1}),
  %\end{EQA}
  where $(U_k)_{k \in \nsets}$ is a sequence of \iid~random variables valued in a measurable space $(\msu, \mcu)$, with density $h$ \wrt~to a $\sigma$-finite measure $\mu_{\msu}$, and $\fwdtransfo_\phi\colon \rset^D \times \msu \to \rset^D$ is a measurable function, referred to as the \textit{proposal mapping}, and parameterized by $\phi \in \msphi$. In this work, $\phi$ collectively denotes the parameters used in the proposal distribution, and $(U_k)$ is referred to as the \emph{innovation noise}.
 %Let $\mu_{\msu}$ be a $\sigma$-finite measure on $(\msu, \mcu)$ and $h$ the density function \wrt\ $\mu_{\msu}$ of the innovation noise.
  Then, $Y_{k+1}$ is accepted with probability $\alpha^{\MH}_\phi\bigl(Z_k, T_\phi(Z_k, U_{k+1})\bigr)$, where $\alpha^{\MH}_\phi \colon \rset^{2D}\to\ccint{0,1}$ is designed so that the resulting Markov kernel, denoted by $M_{\phi, h}$, is reversible \wrt~$\pi$, \ie~satisfies the detailed balance $\pi(\rmd z) M_{\phi, h}(z, \rmd z')= \pi(\rmd z')M_{\phi,h}(z', \rmd z)$. With this notation, $M_{\phi, h}$ can be written, for $z \in \rset^D$, $\msa \in \mcb{\rset^D}$, as
  \begin{equation}
    \label{eq:def_M}
    M_{\phi, h}(z,\msa) = \int_{\msu} h(u) Q_{\phi}\bigl((z, u), \msa\bigr) \mu_\msu(\rmd u) \eqsp,
  \end{equation}
  where $\{Q_{\phi}\colon \phi \in \msphi\}$ is the family of Markov kernels
  %on $ (\rset^D\times \msu) \times \mcbb(\rset^D)$,
   given for any $z \in \rset^D$, $u\in \msu$, $\msa \in \mcbb(\rset^D)$, by:
  \begin{multline}
  \label{eq:indexed_kernel}
    Q_{\phi}((z,u),\msa) = \alpha_{\phi}(z, u) \updelta_{ T_\phi(z, u)}(\msa)
    \\
    + \bigl\{1 - \alpha_{\phi}(z, u)\bigr\} \updelta_z(\msa) \eqsp.
  \end{multline}
  In this definition, $\updelta_z$ stands for the Dirac measure at $z$ and $\{\alpha_\phi\colon \rset^{D}\times \msu \to \ccint{0,1}\,,\, \phi \in \msphi\}$ is a family of acceptance functions related to the MH acceptance probabilities by $\alpha_{\phi}(z,u) = \alpha^{\MH}_\phi\bigl(z, T_\phi(z, u)\bigr)$.

  To illustrate this definition, consider first the symmetric Random Walk Metropolis Algorithm (RWM). In such case, $\msu =\rset^D$, $\mu_{\msu}$ is the Lebesgue measure, and $\densgauss$ is the $D$-dimensional standard normal density. The proposal mapping is
  \begin{EQA}[c]
    T^{\RWM}_{\phi}(z, u) = z + \Sigma_\phi^{1/2} u,
  \end{EQA}
  where $\{\Sigma_\phi, \phi \in \rset^q\}$ is a parametric family of positive definite matrices, and the acceptance function is given by $\alpha^{\RWM}_{\phi}(z, u) = 1 \wedge \bigl(\pi(T^{\RWM}_{\phi}(z,u))/\pi(z)\bigr)$.

  Consider now the Metropolis Adjusted Langevin Algorithm (MALA); see~\cite{besag1994comments}. Assume that $z \mapsto \log \pi(z)$ is differentiable and denote by $\nabla \log \pi(z)$ its gradient. The proposal mapping and the associated acceptance function for MALA algorithm are given by
  \begin{align}
  \label{eq:forward-transform-MALA}
    &T^{\MALA}_{\phi} (z,u) = z + \Sigma_\phi \nabla \log \pi(z)+ \sqrt{2} \Sigma_\phi^{1/2}u \eqsp,\\
    \nonumber
    &\alpha^{\MALA}_{\phi}(z,u) = 1 \wedge \frac{\pi\bigl(T^{\MALA}_{\phi} (z,u)\bigr) g_\phi\bigl(T^{\MALA}_{\phi} (z, u), z\bigr)} {\pi(z) g_\phi\bigl(z,T^{\MALA}_{\phi} (z, u)\bigr)}  \eqsp,
  \end{align}
  where $g_\phi(z_1,z_2)= |\Sigma_\phi|^{\scriptscriptstyle{-1/2}} \densgauss\bigl(\Sigma_\phi^{\scriptscriptstyle{-1/2}}(z_2-T^{\MALA}_{\phi} (z_1,0))\bigr)$ is the proposal kernel density.

  %Note that in these two examples, $\log(\tpi)$ should be tractable and it should be possible to evaluate this function. This can be a limitation in some cases as for example the number of observation is too large. A significant number of papers try to circumvent this problem, in particular using discretization of continuous dynamics. However, the Markov kernels they obtain does not target $\pi$ and are biased. In what follows we aim at using Markov kernels for which $\pi$ is really invariant which prevents us to use any biased procedure. We believe that
\subsection{Variational Inference Meets Metropolis-Hastings}
  Let $K \in \nsets$,  $\{\marginal[\phi]{0}\colon \phi \in \msphi\}$ on $\rset^D$ a parametric family of distributions and $\{h_i\}_{i=1}^K$ density functions w.r.t $\mu_{\msu}$. Consider now the following variational family
  \begin{equation}
  \label{eq:variational_family}
    \mathcal{Q}=\{\marginal[\phi]{K} = \marginal[\phi]{0} M_{\phi,h_1}\cdots M_{\phi,h_K}\colon \phi \in \msphi\} \eqsp,
  \end{equation}
  obtained by iteratively applying to the initial distribution $\marginal[\phi]{0}$ the Markov kernels $(M_{\phi,h_i})_{i=1}^K$.

  The objective of VI approach~\cite{wainwright:jordan:2008,Blei_2017} is to minimize the Kullback-Leibler (KL) divergence $\KLLigne{\marginal[\phi]{K}}{\pi}$ \wrt~the parameter $\phi \in \msphi$ to find the distribution which best fits the target $\pi$.
  %Recall that if the $\PP$ and $\QQ$ both have densities $p$, $q$ respectively \wrt~the measure $\mu$ on $\rset^D$, define $\KL{\PP}{\QQ} = \int_{\rset^D} \log(p(x)/q(x))p(x) \rmd \mu(x)$. We also abusively set $\KL{p}{q}= \KL{\PP}{\QQ}$. However, since $\KL{\PP}{\QQ}= \plusinfty$ if $\PP$ is singular \wrt~$\QQ$,
  %For the function $\phi \to \KLLigne{\marginal[\phi]{K}}{\pi}$ to be well-defined,  $\marginal[\phi]{K}$ must have a density  \wrt~the Lebesgue measure.
  For any $\phi \in \msphi$, $u \in \msu$ and $z \in \rset^D$, denote by $\fwdtransfoparam{\phi}{u}(z) = \fwdtransfo_\phi(z,u)$, $\alpha_{\phi,u}(z) = \alpha_{\phi}(z,u)$ and similarly for any $\msa \in \mcb{\rset^D}$, $\Mtransw{\phi}{u}(z, \msa) = Q_{\phi}\bigl((z,u), \msa\bigr)$.

  The key assumption in this section is that for any $\phi \in \msphi$ and $u \in \msu$, $\fwdtransfoparam{\phi}{u}$ is a $\rmC^1$ diffeomorphism. This property is satisfied under mild condition on the proposal. In particular, this holds for RWM and MALA associated with the proposal mappings $T^{\RWM}_{\phi,u}$ and $T^{\MALA}_{\phi,u}$. It holds also for Hamiltonian Monte Carlo at the expense of extending the state space to include a momentum variable, see the supplementary paper \Cref{SPsec:checking-RMW-MALA} and \Cref{SPsec:HMC}. However, it is in general not needed to specify a valid MCMC procedure. For a $\rmC^1(\rset^D,\rset^D)$ diffeomorphism $\psi$, define by $J_\psi(z)$ the absolute value of the Jacobian determinant at $z \in \rset^D$.
  \begin{lemma}
  \label{prop:density-one-iteration}
    Let $(u, \phi) \in \msu\times\msphi$. Assume that  $\marginal[\phi]{0}$ admits a density  $\margindensm{\phi}{0}$  \wrt~the Lebesgue
    measure. Assume in addition $\fwdtransfoparam{\phi}{u}$ is a $\rmC^1$
    diffeomorphism. Then, the distribution
    $\marginal[\phi]{1}(\cdot|u)= \int_{\rset^d}\margindensm{\phi}{0}(z_0) Q_{\phi,u}(z_0,\cdot) \rmd z_0$ has a density \wrt~the Lebesgue measure given by
    \begin{multline}
    \label{eq:margin_dens1}
      \margindensmu{\phi}{1}{z}{u} = \alpha_{\phi,u}\bigl(\fwdtransfoparam{\phi}{u}^{-1}(z)\bigr) \margindensm{\phi}{0}\bigl(\fwdtransfoparam{\phi}{u}^{-1}(z)\bigr) J_{\fwdtransfoparam{\phi}{u}^{-1}}(z) \\+ \bigl\{1-\alpha_{\phi, u}(z)\bigr\} \margindensm{\phi}{0}(z) \eqsp,
    \end{multline}
    and $\marginal[\phi]{1}$ has a density given by $m^1_\phi(z)= \int \margindensmu{\phi}{1}{z}{u} h(u) \mu_\msu(\rmd u)$.
    % and the distribution $\marginal{1}$ is defined for any $\msa \in \mcbb(\rset^D)$, $\marginal{1}(\msa)= \marginal{0} Q_u(\msa) = \int \marginal{0}(\rmd x)Q_u(x,\msa)$.
  \end{lemma}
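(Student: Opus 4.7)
The plan is to identify the density of $\marginal[\phi]{1}(\cdot|u)$ by testing against an arbitrary bounded measurable function $f\colon \rset^D \to \rset$ and re-expressing every term as a Lebesgue integral in the variable $z$. The kernel $Q_{\phi,u}$ is the convex combination of two Dirac masses given in \eqref{eq:indexed_kernel}, so the two branches (proposal accepted, proposal rejected) can be handled separately, and only the accept-branch requires any real work.

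Concretely, I would start from
\[
  \int f(z)\marginal[\phi]{1}(\rmd z|u) = \int \margindensm{\phi}{0}(z_0)\bigl[\alpha_{\phi,u}(z_0) f(\fwdtransfoparam{\phi}{u}(z_0)) + (1-\alpha_{\phi,u}(z_0)) f(z_0)\bigr]\rmd z_0.
\]
The reject-branch is already in the desired form: it equals $\int (1-\alpha_{\phi,u}(z))\margindensm{\phi}{0}(z) f(z)\rmd z$. For the accept-branch I would apply the change of variables $z = \fwdtransfoparam{\phi}{u}(z_0)$; since $\fwdtransfoparam{\phi}{u}$ is a $\rmC^1$ diffeomorphism of $\rset^D$ by assumption, the standard substitution formula produces the Jacobian factor $J_{\fwdtransfoparam{\phi}{u}^{-1}}(z)$ and yields $\int \alpha_{\phi,u}(\fwdtransfoparam{\phi}{u}^{-1}(z)) \margindensm{\phi}{0}(\fwdtransfoparam{\phi}{u}^{-1}(z)) J_{\fwdtransfoparam{\phi}{u}^{-1}}(z) f(z)\rmd z$. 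Summing the two branches and invoking the fact that $f$ is arbitrary identifies the density $\margindensmu{\phi}{1}{\cdot}{u}$ with the expression claimed in \eqref{eq:margin_dens1}.

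For the unconditional marginal, I would combine the definition of $M_{\phi,h}$ in \eqref{eq:def_M} with the expression for $Q_{\phi,u}$ to write $\marginal[\phi]{1}(\msa) = \int h(u)\marginal[\phi]{1}(\msa|u) \mu_\msu(\rmd u)$, and then apply Fubini (everything is nonnegative) to push the integration against $z$ outside, yielding the density $m^1_\phi(z) = \int \margindensmu{\phi}{1}{z}{u} h(u) \mu_\msu(\rmd u)$.

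The only real subtlety is justifying the change of variables cleanly: one must check that $\alpha_{\phi,u}\circ\fwdtransfoparam{\phi}{u}^{-1}$ and $\margindensm{\phi}{0}\circ\fwdtransfoparam{\phi}{u}^{-1}$ are measurable (immediate from $\rmC^1$ regularity of $\fwdtransfoparam{\phi}{u}^{-1}$ and measurability of $\alpha_{\phi,u}$, $\margindensm{\phi}{0}$), and that the Jacobian $J_{\fwdtransfoparam{\phi}{u}^{-1}}$ is well-defined and strictly positive, which follows from the diffeomorphism hypothesis. No integrability issue arises because $0\le \alpha_{\phi,u}\le 1$ and $\margindensm{\phi}{0}$ is a probability density. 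Beyond this bookkeeping, the argument is essentially a direct application of the image-measure formula, so I do not anticipate any genuine technical obstacle.
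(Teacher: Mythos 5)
Your proposal is correct and follows essentially the same route as the paper's own proof: test against a nonnegative (or bounded) measurable $f$, split the kernel $Q_{\phi,u}$ into its accept and reject branches, and apply the change of variables $z = \fwdtransfoparam{\phi}{u}(z_0)$ on the accept branch to produce the Jacobian factor $J_{\fwdtransfoparam{\phi}{u}^{-1}}(z)$. The additional remarks on measurability and the Fubini step for the unconditional marginal are correct bookkeeping that the paper leaves implicit.
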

  \begin{proof}
    Let $f$ be a nonnegative measurable function on $\rset^D$. \eqref{eq:margin_dens1} follows from the
    change of variable $z_1= T_{\phi,u}(z_0)$:
    \begin{align*}
      &\int_{\rset^D} f(z) \margindensm{\phi}{0}(z_0) Q_{\phi, u}(z_0, \rmd z)  \\
      & = \int_{\rset^D} \Bigl[\margindensm{\phi}{0}(z_0) \bigl\{\alpha_{\phi, u}(z_0)f\bigl(\fwdtransfoparam{\phi}{u}(z_0)\bigr)
      \\
      &\qquad + \bigl(1 - \alpha_{\phi, u}(z_0)\bigr) f(z_0)\bigr\}\Bigr]\rmd z_0 \eqsp,
      \\
      &= \int_{\rset^D} \Bigl[ \{\alpha_{\phi, u}\bigl(\fwdtransfoparam{\phi}{u}^{-1}(z_1)\bigr) \margindensm{\phi}{0}(\fwdtransfoparam{\phi}{u}^{-1}(z_1)) J_{T_u^{-1}}(z_1)
      \\
      & \qquad  + (1-\alpha_{\phi, u}(z_1)) \margindensm{\phi}{0}(z_1) \}f(z_1) \Bigr]\rmd z_1 \eqsp.
    \end{align*}
  \end{proof}
  
  An induction argument extends this property to the $K$-th marginal $\marginal[\phi]{K}$.
  Let us define $\fwdtransfo^0=\Id$. For a family $\{\fwdtransfo_i\}_{i=1}^{K}$ of mappings on $\rset^D$ and $1 \leq i \leq k < K$, define $\bigcirc_{j=i}^k \fwdtransfo_j = \fwdtransfo_i \circ \dots \circ \fwdtransfo_k$ and for a sequence $\{h_i\}_{i=1}^K$ of innovation noise densities \wrt~$\mu_\msu$, define $\chunk{h}{1}{K}(\chunk{u}{1}{K})= \prod_{i=1}^K h_i(u_i)$. Finally, set $\alpha^1_{\phi,u}(z) = \alpha_{\phi,u}(z)$ and $\alpha^0_{\phi,u}(z) = 1-\alpha_{\phi,u}(z)$.
  \begin{proposition}
  \label{coro:induction-argument}
    Assume that for any $(u, \phi) \in \msu\times\msphi$, $\fwdtransfoparam{\phi}{u}$ is a $\rmC^1$ diffeomorphism and  $\marginal[\phi]{0}$ admits a density  $\margindensm{\phi}{0}$  \wrt~the Lebesgue
    measure.
    For any $\{u_i\}_{i=1}^K \in \mathsf{U}^K$, the distribution $\marginal[\phi]{K}(\cdot\mid \chunk{u}{1}{K})= \marginal[\phi]{0}\Mtransw{\phi}{u_1}\cdots \Mtransw{\phi}{u_K}$ has a density $\margindensm{\phi}{K}$ given by
    \begin{align}
    \label{eq:margin_densK}
      &\margindensmu{\phi}{K}{z}{\chunk{u}{1}{K}} = \sum_{\chunk{a}{1}{K} \in \{0,1\}^K}\margindensmu{\phi}{K}{z, \chunk{a}{1}{K}}{\chunk{u}{1}{K}} \eqsp,
    \end{align}
    where
    \begin{multline}
    \label{eq:def_marginal_given_u}
      \margindensmu{\phi}{K}{z, \chunk{a}{1}{K}}{\chunk{u}{1}{K}} = \prod_{i=1}^K \alpha^{a_i}_{\phi,u_i} \bigl(\bigcirc_{j=i}^K\fwdtransfoparam{\phi}{u_j}^{-a_j}(z)\bigr)   \\
      \times \margindensm{\phi}{0} \bigl(\bigcirc_{j=1}^K\fwdtransfoparam{\phi}{u_j}^{-a_j}(z)\bigr) J_{\bigcirc_{j=1}^K\fwdtransfoparam{\phi}{u_j}^{-a_j}}(z) \eqsp.
    \end{multline}
    In particular, for a sequence $\{h_i\}_{i=1}^K$ of innovation noise densities, $\marginal[\phi]{K}$ \eqref{eq:variational_family} has a density \wrt~the Lebesgue measure, explicitly given, for any $z \in \rset^D$, by
    \begin{equation}
    \label{eq:def_marginal_family_distribution}
      \margindensm{\phi}{K}(z) = \int_{\msu^K} \defEns{\margindensm{\phi}{K}(z|\chunk{u}{1}{K}) \chunk{h}{1}{K}(\chunk{u}{1}{K})} \rmd \mu_{\msu}^{\otimes K} (\chunk{u}{1}{K}) \eqsp.
    \end{equation}
  \end{proposition}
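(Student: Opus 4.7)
The plan is to prove the statement by induction on $K$. The base case $K=1$ is exactly \Cref{prop:density-one-iteration}, since with a single step the sum over $\chunk{a}{1}{1} \in \{0,1\}$ simply reproduces the two terms of \eqref{eq:margin_dens1}. For the induction step, I would use the semi-group identity $\marginal[\phi]{K}(\cdot\mid\chunk{u}{1}{K}) = \marginal[\phi]{K-1}(\cdot\mid\chunk{u}{1}{K-1})\Mtransw{\phi}{u_K}$ and apply \Cref{prop:density-one-iteration} with the inductively constructed density $\margindensmu{\phi}{K-1}{\cdot}{\chunk{u}{1}{K-1}}$ playing the role of the initial density $\margindensm{\phi}{0}$ and $\Mtransw{\phi}{u_K}$ the role of the single-step kernel. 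This yields two terms, one proportional to $\alpha_{\phi,u_K}(\fwdtransfoparam{\phi}{u_K}^{-1}(z)) J_{\fwdtransfoparam{\phi}{u_K}^{-1}}(z)$ evaluated at $\fwdtransfoparam{\phi}{u_K}^{-1}(z)$, and one proportional to $1-\alpha_{\phi,u_K}(z)$ evaluated at $z$.

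Next, I would substitute the inductive hypothesis \eqref{eq:def_marginal_given_u} (at level $K-1$) into each of the two terms, which breaks each term into a sum over $\chunk{a}{1}{K-1} \in \{0,1\}^{K-1}$. Identifying the first term with $a_K=1$ and the second with $a_K=0$, and using $\fwdtransfoparam{\phi}{u_K}^{-a_K}=\Id$ when $a_K=0$, produces the full sum over $\chunk{a}{1}{K}\in\{0,1\}^K$. The arguments at which $\alpha_{\phi,u_i}^{a_i}$ and $\margindensm{\phi}{0}$ are evaluated match those in \eqref{eq:def_marginal_given_u} because $\bigcirc_{j=i}^{K-1}\fwdtransfoparam{\phi}{u_j}^{-a_j}\circ \fwdtransfoparam{\phi}{u_K}^{-1} = \bigcirc_{j=i}^{K}\fwdtransfoparam{\phi}{u_j}^{-a_j}$ when $a_K=1$, and trivially when $a_K=0$. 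The Jacobian factor collapses by the chain rule $J_{f\circ g}(z) = J_f(g(z))\,J_g(z)$ applied to $\bigcirc_{j=1}^{K-1}\fwdtransfoparam{\phi}{u_j}^{-a_j}$ composed with $\fwdtransfoparam{\phi}{u_K}^{-1}$, which yields exactly $J_{\bigcirc_{j=1}^{K}\fwdtransfoparam{\phi}{u_j}^{-a_j}}(z)$.

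Finally, \eqref{eq:def_marginal_family_distribution} follows from Fubini--Tonelli: by construction, the joint law of the innovation noises $\chunk{U}{1}{K}$ and of the state $Z_K$ admits the density $\chunk{h}{1}{K}(\chunk{u}{1}{K})\,\margindensmu{\phi}{K}{z}{\chunk{u}{1}{K}}$ with respect to $\mu_{\msu}^{\otimes K} \otimes \Leb$, so the marginal density of $Z_K$ is obtained by integrating over $\chunk{u}{1}{K}$.

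The main obstacle is not conceptual but notational: one must carefully track that, under the $a_K=1$ branch, the composition index shifts from $\bigcirc_{j=i}^{K-1}$ to $\bigcirc_{j=i}^{K}$ uniformly for every $i\in\{1,\dots,K\}$, and that the chain rule for Jacobians merges the new $J_{\fwdtransfoparam{\phi}{u_K}^{-1}}(z)$ factor with the inductively produced Jacobian evaluated at $\fwdtransfoparam{\phi}{u_K}^{-1}(z)$ into the single Jacobian of the full composition at $z$. Once this bookkeeping is done, the identification with \eqref{eq:def_marginal_given_u} is immediate and the induction closes.
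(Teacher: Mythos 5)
Your proposal is correct and follows essentially the same route as the paper's own proof: induction on $K$ with \Cref{prop:density-one-iteration} as the base case and as the one-step update, the identification of the two accept/reject terms with $a_K=1$ and $a_K=0$, and the chain rule $J_{\psi_1\circ\psi_2}(z)=J_{\psi_1}(\psi_2(z))J_{\psi_2}(z)$ to merge the Jacobians. The only (harmless) addition is your explicit Fubini--Tonelli justification of \eqref{eq:def_marginal_family_distribution}, which the paper leaves implicit.
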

  
  We can now apply the VI approach the family $\mathcal{Q}$ defined in~\eqref{eq:variational_family}.
  Consider a family of inference function
  \begin{equation*}
      \{\rho(\chunk{a}{1}{K}, \chunk{u}{1}{K}\mid z)\colon z \in \rset^D, \chunk{a}{1}{K}\in \{0,1\}^K, \chunk{u}{1}{K} \in \msu^K\}\eqsp.
  \end{equation*}
  This family may depend upon some parameters, implicit in this notation. As shown below, our objective is to take very simple expressions for those functions.
  We define our ELBO $\lowerboundaux(\phi)$, using now the extended space $(z_K, \chunk{a}{1}{K}, \chunk{u}{1}{K})$, by
  \begin{multline}
    \lowerboundaux(\phi) = \sum_{\chunk{a}{1}{K} \in \{0,1\}^K} \int_{} \chunk{h}{1}{K}(\chunk{u}{1}{K})\margindensmu{\phi}{K}{z_K, \chunk{a}{1}{K}}{\chunk{u}{1}{K}} \\
    \label{eq:Our_auxiliary_variational}
    \times \log \left(\frac{\tilde{\pi}(z_K) \rho(\chunk{a}{1}{K}, \chunk{u}{1}{K}| z_K) }{\margindensmu{\phi}{K}{z_K, \chunk{a}{1}{K}}{\chunk{u}{1}{K}}\chunk{h}{1}{K}(\chunk{u}{1}{K})} \right) \rmd z_K \rmd \mu^{\otimes K}_{\msu}(\chunk{u}{1}{K}) \eqsp.
  \end{multline}
  %which is tractable and can be estimated unbiasedly using %\eqref{eq:mk_z_a1K}.
  Note that $\lowerboundaux$ is a lower bound of $\lowerbound$ expressed in~\eqref{eq:lower-bound-marginal} since defining $\margindensm{\phi}{K}(z,\chunk{a}{1}{K}, \chunk{u}{1}{K})= \margindensm{\phi}{K}(z,\chunk{a}{1}{K}|\chunk{u}{1}{K}) \chunk{h}{1}{K}(\chunk{u}{1}{K})$ and $\margindensm{\phi}{K}(\chunk{a}{1}{K}, \chunk{u}{1}{K}|z) = \margindensm{\phi}{K}(z,\chunk{a}{1}{K}, \chunk{u}{1}{K})/\margindensm{\phi}{K}(z)$, we obtain
  \begin{multline*}
%    \label{eq:Our_auxiliary_variational}
    \lowerboundaux(\phi) = \lowerbound(\phi)\\
    -\int_{\rset^D} \margindensm{\phi}{K}(z_K)\KL{\margindensmu{\phi}{K}{\bullet}{z_K}}{\rho(\bullet|z_K)}\rmd z_K  \eqsp,
  \end{multline*}
  where $\KL{\margindensmu{\phi}{K}{\bullet}{z_K}}{\rho(\bullet|z_K)}$ denotes the KL divergence between $\margindensmu{\phi}{K}{\chunk{a}{1}{K}, \chunk{u}{1}{K}}{z_K}$ and $\rho(\chunk{a}{1}{K}, \chunk{u}{1}{K}| z_K)$.
  We specify the inference functions $\rho$. In particular, a simple choice is $\rho(\chunk{a}{1}{K}, \chunk{u}{1}{K}| z)= r(\chunk{a}{1}{K}| z, \chunk{u}{1}{K})\chunk{h}{1}{K}(\chunk{u}{1}{K})$, where $r$ is a similar family of inference function on $\{0,1\}$. This architecture is based on the representation of the Markov kernel~\eqref{eq:indexed_kernel} we built and simplifies our ELBO. In the following, we always assume the form of such inference function.
  %that case, the ELBO given above simplifies greatly to
%\begin{multline}
%    \lowerboundaux(\phi) = \sum_{\chunk{a}{1}{K} \in \{0,1\}^K} \int_{} \log \left(\frac{\tilde{\pi}(z) r(\chunk{a}{1}{K}| z, \chunk{u}{1}{K}) }{\margindensmu{\phi}{K}{z, \chunk{a}{1}{K}}{\chunk{u}{1}{K}}} \right) \\
%    \label{eq:Our_auxiliary_variational1}
%    \times \margindensmu{\phi}{K}{z, \chunk{a}{1}{K}}{\chunk{u}{1}{K}}\chunk{h}{1}{K}(\chunk{u}{1}{K}) \rmd z \rmd \mu^{\otimes K}(\chunk{u}{1}{K}) \eqsp.
% \end{multline}
  Note here that the key step of our approach for defining $\lowerboundaux$ is to rely on the representation~\eqref{eq:def_M}, allowing us to write explicitly our marginals $\margindensm{\phi}{K}$ compared to~\cite{salimans2015markov}.  % In general, note that it is not necessary to take $u_1, \ldots, u_K$ \iid, and they could for example all follow a different distribution $h_1, \ldots, h_K$ respectively.

  The ELBO $\lowerboundaux$ can be optimized \wrt~$\phi$, typically by stochastic gradient methods, which requires an unbiased estimator of the gradient $\nabla \lowerboundaux(\phi)$. Such estimator can be obtained using the reparameterization trick~\cite{rezende2014stochastic}.
  The implementation of this procedure is a bit more involved in our case, as the integration is done on a mixture of the components $\margindensmu{\phi}{K}{z,\chunk{a}{1}{K}}{\chunk{u}{1}{K}}$. To develop an understanding of the methodology, we consider first the case $K=1$. Denote by $\varphibf$ the density of the $D$-dimensional standard Gaussian distribution. Suppose for example that we can write $m_\phi^0(z) = \varphibf(V_\phi^{-1}(z)) J_{V_\phi^{-1}}(z)$ with $V_\phi(y) = \mu_\phi + \Sigma_\phi^{-1/2}y$. Other parameterization could be handled as well. With two changes of variables, we can integrate \wrt~$\varphibf$, which implies that
  \begin{align}
    & \lowerboundaux(\phi) =\hspace{-5pt} \sum_{a_1 \in \{0,1\}} \int  \rmd y \rmd \mu_\msu(u_1)\left[\alpha^{a_1}_{u_1}\bigl(V_\phi(y)\bigr) \varphibf(y) h(u_1)\right.
        \nonumber
    \\
    \label{eq:reparameterization_trick}
    &\left.\log \left(\frac{\tilde{\pi}\bigl(\fwdtransfoparam{\phi}{u_1}^{a_1}\bigl(V_\phi(y)\bigr)\bigr) r\bigl(a_1 | \fwdtransfoparam{\phi}{u_1}^{a_1}\bigl(V_\phi(y)\bigr), u_1\bigr)}{\margindensmu{\phi}{1}{\fwdtransfoparam{\phi}{u_1}^{a_1}\bigl(V_\phi(y)\bigr), a_1}{u_1}}\right)\right].
  \end{align}
  Justification and extension to the general case $K \in \nsets$ is given in the supplementary paper, see \Cref{SPsec:reparam_trick}.
  From~\eqref{eq:reparameterization_trick}, using the general identity $\nabla \alpha = \alpha \nabla \log(\alpha)$, the gradient of $\lowerboundaux$ is given by
  \begin{align*}
    &\nabla \lowerboundaux(\phi) = \hspace{-10pt}\sum_{a_{1}\in \{0,1\}}\hspace{-3pt} \int \rmd y \rmd \mu_\msu(u_1) \alpha^{a_1}_{u_1}\bigl(V_\phi(y)\bigr) \varphibf(y) h(u_1)
    \\
    &\times \left[ \nabla\log \left(\frac{\tilde{\pi}(\fwdtransfoparam{\phi}{u_1}^{a_1}\bigl(V_\phi(y)\bigr) r\bigl(a_1 | \fwdtransfoparam{\phi}{u_1}^{a_1}\bigl(V_\phi(y)\bigr), u_1\bigr)}{\margindensmu{\phi}{1}{\fwdtransfoparam{\phi}{u_1}^{a_1}\bigl(V_\phi(y)\bigr), a_1}{u_1}}\right)\right.
    \\
    &+ \nabla \log\bigl(\alpha^{a_1}_{u_1}\bigl(V_\phi(y)\bigr)\bigr)
    \\
    & \times\left. \log \left(\frac{\tilde{\pi}(\fwdtransfoparam{\phi}{u_1}^{a_1}\bigl(V_\phi(y)\bigr) r\bigl(a_1 | \fwdtransfoparam{\phi}{u_1}^{a_1}\bigl(V_\phi(y)\bigr), u_1\bigr)}{\margindensmu{\phi}{1}{\fwdtransfoparam{\phi}{u_1}^{a_1}\bigl(V_\phi(y)\bigr), a_1}{u_1}}\right)\right] \eqsp.
  \end{align*}
  Therefore, an unbiased estimator of $\nabla \lowerboundaux(\phi)$ can be obtained by sampling independently the proposal innovation $u_1 \sim h_1$ and the starting point $y \sim \mathcal{N}(0,\mathrm{I})$, and then the acceptation $a_1 \sim \Ber\bigl(\alpha^{a_1}_{u_1}\bigl(V_\phi(y)\bigr)\bigr)$.
  A complete derivation for the case $K \in \nsets$ is given in the supplementary paper, \Cref{SPsec:gradient_ELBO}.

\section{MetFlow: MCMC and Normalizing Flows}
\label{sec:deterministic_MCMC}
  In this section, we extend the construction above to a new class of MCMC methods, for which the proposal mappings are Normalizing Flows (NF). Our objective is to capitalize on the flexibility of NF to represent distributions, while keeping the exactness of MCMC. This new class of MCMC are referred to as \emph{MetFlow}, standing for Metropolized Flows.

  Consider a flow $\fwdtransfo_\phi\colon \rset^D \times \tmsu \to \rset^D$ parametrized by $\phi \in \msphi$. It is assumed that for any $\tu \in \tmsu$, $\fwdtransfoparam{\phi}{\tu}\colon z \mapsto \fwdtransfo_{\phi}(z,\tu)$ is a $\rmC^1$ diffeomorphism. Set $\msv=\{-1,1\}$. For any $\tu \in \tmsu$, consider the involution $\invtransfo_{\phi,u}$ on $\rset^D\times\msv$, \ie~$\invtransfo_{\phi,u}\circ\invtransfo_{\phi,u}= \Id$ , defined for $\z \in \rset^D$, $v\in\{-1,1\}$ by
  \begin{equation}
  \label{eq:involution}
    \invtransfo_{\phi,\tu}(z,v) = (\fwdtransfoparam{\phi}{\tu}^v(z),-v)\eqsp.
  \end{equation}
  The variable $v$ is called the direction. If $v=1$ (respectively $v=-1$), the ``forward''(resp. ``backward'') flow $\fwdtransfoparam{\phi}{\tu}$ (resp. $\fwdtransfoparam{\phi}{\tu}^{-1}$) is used.
  For any $\z \in \rset^D$, $v\in\{-1,1\}$, $\msa \in \mcb{\rset^D}$, $\msb \subset \msv$, we define the kernel
  \begin{multline}
  \label{eq:def_markov_Q}
    R_{\phi, \tu}\bigl((z,v), \msa \times \msb\bigr) = \dmhratio(z,v) \updelta_{\fwdtransfoparam{\phi}{\tu}^v(z)}(\msa)\otimes\updelta_{-v} (\msb)\\ + \{1-\dmhratio(z,v)\} \updelta_{z} (\msa)\otimes\updelta_{v} (\msb) \eqsp,
  \end{multline}
  where $\dmhratio\colon \rset^D\times\msv\to\ccint{0,1}$ is the acceptance function.

  \begin{proposition}
  \label{propo:2}
    Let $\nu$ be a distribution on $\msv$, and $(\tu,\phi) \in \tmsu\times\msphi$. Assume that $\dmhratio\colon \rset^D\times\msv \to \ccint{0,1}$ satisfies for any $(z,v) \in \rset^D\times\msv$,
    \begin{multline}
      \label{eq:condition_alpha}
      \dmhratio(z, v) \pi(z) \nu(v) \\
      = \dmhratio\bigl(\invtransfo_\phi(z,v)\bigr) \pi\bigl(\fwdtransfoparam{\phi}{\tu}^v(z)\bigr) \nu(-v) J_{\fwdtransfoparam{\phi}{\tu}^v}(z) \eqsp.
    \end{multline}
    Then for any $(\tu,\phi) \in \tmsu\times\msphi$, $R_{\phi,\tu}$ defined by~\eqref{eq:def_markov_Q} is reversible with respect to $\pi\otimes \nu$. In particular, if for any $(z,v) \in \rset^D\times\msv$,
    \[
      \dmhratio(z,v) = \varphi\left(\pi\bigl(\fwdtransfoparam{\phi}{\tu}^v(z)\bigr) \nu(-v) J_{\fwdtransfoparam{\phi}{\tu}^{v}}(z) / \pi(z) \nu(v)\right)\eqsp,
    \]
    for $\varphi\colon \brset_+ \to \brset_+$, then \eqref{eq:condition_alpha} is satisfied if $\varphi(\plusinfty)=1$ and for any $t \in \brset_+$,
    $t\varphi(1/t) = \varphi(t)$.
  \end{proposition}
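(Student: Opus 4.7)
The proposition has two parts. For the reversibility claim, my plan is to verify the detailed balance identity $(\pi\otimes\nu)(\rmd w)R_{\phi,\tu}(w,\rmd w')=(\pi\otimes\nu)(\rmd w')R_{\phi,\tu}(w',\rmd w)$ by testing against an arbitrary bounded measurable $F$ on $(\rset^D\times\msv)^2$. The kernel $R_{\phi,\tu}$ is a convex combination of two Dirac masses: a \emph{move} atom of weight $\dmhratio(z,v)$ at $\invtransfo_{\phi,\tu}(z,v)=(\fwdtransfoparam{\phi}{\tu}^v(z),-v)$, and a \emph{stay} atom of weight $1-\dmhratio(z,v)$ at $(z,v)$. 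The stay contribution to $\int F(w,w')R_{\phi,\tu}(w,\rmd w')\rmd(\pi\otimes\nu)(w)$ involves $F((z,v),(z,v))$, which is automatically symmetric under swapping the two arguments of $F$, so only the move contribution requires work.

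For the move part, I will change variables via the involution itself: setting $(z',v')=\invtransfo_{\phi,\tu}(z,v)$, the relation $\invtransfo_{\phi,\tu}\circ\invtransfo_{\phi,\tu}=\Id$ forces $\fwdtransfoparam{\phi}{\tu}^{-v}=(\fwdtransfoparam{\phi}{\tu}^v)^{-1}$, and the inverse function theorem yields $J_{\fwdtransfoparam{\phi}{\tu}^{-v'}}(z')=1/J_{\fwdtransfoparam{\phi}{\tu}^{v'}}(\fwdtransfoparam{\phi}{\tu}^{-v'}(z'))$. Performing this substitution term by term in the sum over $v\in\msv$ rewrites the move contribution as a sum of integrals whose integrand carries the factor $\dmhratio(\invtransfo_{\phi,\tu}(z',v'))\pi(\fwdtransfoparam{\phi}{\tu}^{v'}(z'))\nu(-v')J_{\fwdtransfoparam{\phi}{\tu}^{v'}}(z')$ multiplying $F(\invtransfo_{\phi,\tu}(z',v'),(z',v'))$. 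Equating this with the swapped move contribution for arbitrary $F$ then yields exactly the pointwise identity \eqref{eq:condition_alpha}, which is the hypothesis, and detailed balance follows. For the second part, I will set $r(z,v)\defeq \pi(\fwdtransfoparam{\phi}{\tu}^v(z))\nu(-v)J_{\fwdtransfoparam{\phi}{\tu}^v}(z)/(\pi(z)\nu(v))$ so that $\dmhratio=\varphi\circ r$; the same Jacobian identity gives $r(\invtransfo_{\phi,\tu}(z,v))=1/r(z,v)$, and substituting into \eqref{eq:condition_alpha} and dividing by $\pi(z)\nu(v)$ reduces the required identity to $\varphi(t)=t\,\varphi(1/t)$ at $t=r(z,v)$, which is the assumed functional equation. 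The boundary cases $t\in\{0,+\infty\}$ arising when $\pi$ or the Jacobian vanish are absorbed by the convention $\varphi(+\infty)=1$ together with the multiplicative form of the relation.

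The main difficulty is the careful book-keeping in Part 1: the involution couples a continuous flip on $\rset^D$ with the discrete flip $v\mapsto -v$, so the change of variables must be performed separately for each value of $v$, and the Jacobian factor $J_{\fwdtransfoparam{\phi}{\tu}^v}(z)$ must land on the correct side of \eqref{eq:condition_alpha} so that the naive symmetry $\dmhratio(z,v)\pi(z)\nu(v)=\dmhratio(\invtransfo_{\phi,\tu}(z,v))\pi(\fwdtransfoparam{\phi}{\tu}^v(z))\nu(-v)$ is corrected by exactly one Jacobian factor. Once this accounting is done, Part 2 is essentially a one-line algebraic verification.
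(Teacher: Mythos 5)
Your proposal is correct and follows essentially the same route as the paper: the stay atom is trivially symmetric, the move atom is handled by changing variables through the involution $\invtransfo_{\phi,\tu}$ (with the Jacobian inverse identity), which reduces detailed balance to the pointwise condition \eqref{eq:condition_alpha}, and the second part is the same algebraic verification via $r(\invtransfo_{\phi,\tu}(z,v))=1/r(z,v)$ and $t\varphi(1/t)=\varphi(t)$. The only cosmetic difference is that you test against an arbitrary bounded measurable $F$ rather than against indicators of product sets $\msa_1\times\msb_1$, $\msa_2\times\msb_2$ as the paper does, which is an equivalent formulation.
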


  \begin{remark}
  \label{remark:ratios_discussion}
    The condition~\eqref{eq:condition_alpha} on the acceptance ratio $\dmhratio$ has been reported in~\citep[Section 2]{tierney:1998} (see also~\citep[Proposition 3.5]{andrieu2019peskun} for extensions to the non reversible case). Standard choices for the acceptance function $\dmhratio$ are the Metropolis-Hastings and Barker ratios which correspond to $\varphi\colon t \mapsto \min(1,t)$ and $t \mapsto t / (1 + t)$ respectively.
  \end{remark}

  If we define for $\tu \in \tmsu$, $v \in \msv$, $z \in \rset^D$, $\msa \in \mcb{\rset^D}$,
  \begin{align}
    \label{eq:link_Rphi_Qphi}
    &Q_{\phi,(\tu, v)}(z, \msa) = R_{\phi,\tu}((z,v), \msa\times \msv) \\
    \nonumber
    & = \dmhratio(z, v) \updelta_{\fwdtransfoparam{\phi}{\tu}^{v}(z)}(\msa)
    + \{1-\dmhratio(z, v)\} \updelta_{z}(\msa) \eqsp,
  \end{align}
  we retrieve the framework defined in~\Cref{sec:cond-metr-kern}.
  In turn, from a distribution $\nu$ for the direction, the family $\{Q_{\phi,(\tu, v)}\colon (\tu, v) \in \tmsu\times\msv\}$ defines a MH kernel, given for $\tu \in \tmsu$, $z \in \rset^D$, $\msa \in \mcb{\rset^D}$ by
  \begin{equation*}
    M_{\phi,u,\nu}(z,\msa) \hspace{-2pt}= \nu(1)Q_{\phi,(\tu, 1)}(z, \msa)+\nu(-1)Q_{\phi,(\tu, -1)}(z, \msa)\eqsp.
  \end{equation*}
  The key result of this section is
  \begin{corollary}
  \label{coro:Reversibility_Mphi_Rphi}
   For any $\tu \in \tmsu$ and any distribution $\nu$, the kernel $M_{\phi,u,\nu}$ is reversible \wrt~$\pi$.
  \end{corollary}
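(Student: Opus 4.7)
The plan is to deduce reversibility of $M_{\phi,u,\nu}$ with respect to $\pi$ directly from the reversibility of $R_{\phi,\tu}$ with respect to $\pi\otimes\nu$ established in Proposition~2. The key observation is that, from the definition of $M_{\phi,u,\nu}$ together with identity~\eqref{eq:link_Rphi_Qphi}, one has
\[
M_{\phi,u,\nu}(z,\msa) = \sum_{v \in \msv} \nu(v)\, R_{\phi,\tu}\bigl((z,v),\, \msa\times\msv\bigr),
\]
so that $M_{\phi,u,\nu}$ is precisely the $\rset^D$-marginal of the extended kernel $R_{\phi,\tu}$ when the initial direction is drawn from $\nu$.

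Given this, I would write out detailed balance for $R_{\phi,\tu}$ under $\pi\otimes\nu$ and integrate out the direction variable on both sides. Explicitly, Proposition~2 guarantees that for any $\msa,\msa'\in\mcb{\rset^D}$ and any $\msb,\msb'\subset\msv$,
\[
\int_{\msa}\pi(\rmd z)\sum_{v\in\msb}\nu(v)\,R_{\phi,\tu}\bigl((z,v),\msa'\times\msb'\bigr) = \int_{\msa'}\pi(\rmd z')\sum_{v'\in\msb'}\nu(v')\,R_{\phi,\tu}\bigl((z',v'),\msa\times\msb\bigr).
\]
Specialising to $\msb=\msb'=\msv$ collapses the inner sums on both sides to $M_{\phi,u,\nu}(z,\msa')$ and $M_{\phi,u,\nu}(z',\msa)$ respectively, yielding
\[
\int_{\msa}\pi(\rmd z)\,M_{\phi,u,\nu}(z,\msa') = \int_{\msa'}\pi(\rmd z')\,M_{\phi,u,\nu}(z',\msa).
\]
Since $\msa$ and $\msa'$ are arbitrary Borel sets, the detailed balance for $M_{\phi,u,\nu}$ with respect to $\pi$ follows.

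I do not expect any genuine obstacle here: the argument is essentially the elementary fact that marginalising one coordinate of a kernel reversible under a product measure produces a kernel reversible under the corresponding marginal. The only ingredients needed — the explicit representation~\eqref{eq:link_Rphi_Qphi} linking $Q_{\phi,(\tu,v)}$ to $R_{\phi,\tu}$, and the reversibility statement of Proposition~2 — are already in hand, so no additional conditions on $\dmhratio$, $\fwdtransfoparam{\phi}{\tu}$, or $\nu$ need to be imposed beyond those of Proposition~2.
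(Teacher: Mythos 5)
Your argument is correct and is essentially identical to the paper's own proof: both write out detailed balance for $R_{\phi,\tu}$ under $\pi\otimes\nu$, specialise the direction sets to $\msb_1=\msb_2=\msv$, and use the identity~\eqref{eq:link_Rphi_Qphi} to recognise the resulting sums as $M_{\phi,\tu,\nu}$, yielding detailed balance with respect to $\pi$. No gap to report.
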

  Consider for example the MALA proposal mapping $T_\phi^\MALA$. If we set
  \begin{equation*}
    \dmhratio^\MALA(z,v) = 1\wedge\left\{ \pi\bigl(\fwdtransfoparam{\phi}{\tu}^v(z)\bigr) \nu(-v) J_{\fwdtransfoparam{\phi}{\tu}^{v}}(z) / \pi(z) \nu(v)\right\}
  \end{equation*} with $\fwdtransfo_\phi \leftarrow \fwdtransfo_\phi^\MALA$, then for any $\tu \in \rset^D$ and any distribution $\nu$, $M_{\phi,u,\nu}^\MALA$ is reversible \wrt~$\pi$, which is not the case for $Q_{\phi,u}^\MALA$ defined in~\eqref{eq:indexed_kernel} with acceptance function $\alpha_\phi^\MALA$ given by~\eqref{eq:forward-transform-MALA}. Recall indeed that the reversibility is only satisfied for the kernel $\int_{} Q_{\phi,u}^\MALA(z, \msa) \varphibf(u)\rmd u$.

  As the reversibility is satisfied for any $\chunk{\tu}{1}{K} \in \tmsu^K$, we typically get rid of the integration \wrt~the innovation noise $\chunk{h}{1}{K}$ and rather consider a fixed sequence $\chunk{\mathbf{u}}{1}{K}$ of proposal noise. For RWM or MALA, this sequence could be a completely uniformly distributed sequence as in Quasi Monte Carlo method for MCMC, see~\cite{schwedes:calderhead:2018, chen:dick:owen:2011}.
  Using definition~\eqref{eq:link_Rphi_Qphi} and \Cref{coro:induction-argument}, we can write the density $\margindensmu{\phi,\chunk{\mathbf{u}}{1}{K}}{K}{\cdot }{\chunk{v}{1}{K}}$ of the distribution $\marginal[\phi,\chunk{\mathbf{u}}{1}{K}]{K}(\cdot\mid \chunk{v}{1}{K})= \marginal[\phi]{0}\Mtransw{\phi}{(\mathbf{u}_1,v_1)}\cdots \Mtransw{\phi}{(\mathbf{u}_K,v_K)}$.
   Setting $\alpha_{\phi,\tu,v}(z)= \dmhratio(z, v)$ as in the previous section, we can write, for any $z \in \rset^D$, $\chunk{a}{1}{K} \in \{0,1\}^K$, $\chunk{v}{1}{K} \in \{-1,1\}^K$, $\chunk{\mathbf{u}}{1}{K} \in \tmsu^{K}$
  \begin{align}
    &\margindensmu{\phi,\chunk{\mathbf{u}}{1}{K}}{K}{z, \chunk{a}{1}{K} }{\chunk{v}{1}{K}} = \margindensm{\phi}{0}\bigl(\bigcirc_{j=1}^K\fwdtransfoparam{\phi}{\mathbf{u}_j}^{-v_j a_j}(z)\bigr)   \\
    \nonumber
    & \quad \times J_{\bigcirc_{j=1}^K\fwdtransfoparam{\phi}{\mathbf{u}_j}^{-v_j a_j}}(z) \prod_{i=1}^K \alpha^{ a_i}_{\mathbf{u}_i, v_i}\bigl(\bigcirc_{j=i}^K\fwdtransfoparam{\phi}{\mathbf{u}_j}^{-v_j a_j}(z)\bigr)\eqsp.
  \end{align}
  Moreover, as reversibility is satisfied for any distribution $\nu$, we could let it depend upon some parameters also denoted $\phi$ and write $\chunk{\nu}{\phi,1}{K}= \prod_{i=1}^{K}\nu_{\phi,i}$. Defining an inference function $r_{\chunk{\mathbf{u}}{1}{K}}(\chunk{a}{1}{K}|z, \chunk{v}{1}{K})$, we can thus obtain the lower bound parametrized by the fixed sequence $\chunk{\mathbf{u}}{1}{K}$ and $\phi$:
  {\small\begin{multline}
  \label{eq:Transfo_auxiliary_variational}
    \lowerboundaux(\phi;\chunk{\mathbf{u}}{1}{K}) =
      \int_{} \sum_{\chunk{v}{1}{K}} \sum_{\chunk{a}{1}{K}}  \margindensmu{\phi,\chunk{\mathbf{u}}{1}{K}}{K}{z, \chunk{a}{1}{K} }{\chunk{v}{1}{K}} \\
    \times \log \left(\frac{\tilde{\pi}(z) r_{\chunk{\mathbf{u}}{1}{K}}(\chunk{a}{1}{K}|z, \chunk{v}{1}{K}) }{\margindensmu{\phi,\chunk{\mathbf{u}}{1}{K}}{K}{z, \chunk{a}{1}{K} }{\chunk{v}{1}{K}}}\right) \chunk{\nu}{\phi,1}{K}(\chunk{v}{1}{K})\rmd z \eqsp,
    \end{multline}}
  for which stochastic optimization can be performed using the same reparametrization trick~\eqref{eq:reparameterization_trick}.

  The choice of the transformation $\fwdtransfo_\phi$ is really flexible. Let $\{\trueflow_{\phi,i}\}_{i=1}^K$ be a family of $K$ diffeomorphisms on $\rset^D$.  A flow model based on $\{\trueflow_{\phi,i}\}_{i=1}^K$ is defined as a composition $\trueflow_{\phi,K} \circ \cdots \circ \trueflow_{\phi,1}$ that pushes an initial distribution $\marginal[\phi]{0}$ with density $m^0_\phi$ to a more complex target distribution $\marginal[\phi]{K}$ with density $m^K_\phi$, given  for any $z \in \rset^D$ by $m^K_{\phi}(z) = m^0\bigl(\bigcirc_{i=1}^K \trueflow_{\phi,i}^{-1} (z)\bigr) J_{\bigcirc_{i=1}^K \trueflow_{\phi,i}^{-1}}(z)$, see~\cite{tabak2013family,pmlr-v37-rezende15,kobyzev2019normalizing,papamakarios2019normalizing}.
  We now proceed to the construction of \emph{MetFlow}, based on the same deterministic sequence of diffeomorphisms. A \emph{MetFlow} model is obtained by applying successively the Markov kernels $M_{\phi,1,\nu},\ldots, M_{\phi,K,\nu}$, written as, for $z \in \rset^D$, $\msa \in \mcb{\rset^D}$, $i \in \{1,\ldots,K\}$:
  \begin{align*}
    M_{\phi,i,\nu}(z,\msa)= \sum_{\v \in \msv}\nu(v)\mathring{\alpha}_{\phi,i}(z,v)  \updelta_{\trueflow_{\phi,i}^{v}(z)}(\msa)\\
    +(1-\sum_{\v \in \msv}\nu(v)\mathring{\alpha}_{\phi,i}(z,v))\updelta_z(\msa)\eqsp.
  \end{align*}
  Each of those is reversible \wrt~the stationary distribution $\pi$ and thus leaves $\pi$ invariant. In such a case, the resulting distribution $\marginal[\phi]{K}$ is a mixture of the pushforward of $\marginal[\phi]{0}$ by the flows $\{\trueflow_{\phi,K}^{v_K a_K} \circ \dots \circ\trueflow_{\phi,1}^{v_1 a_1}\,,\, \chunk{v}{1}{K}\in \msv^K, \chunk{a}{1}{K}\in \{0,1\}^K\}$. The parameters $\phi$ of the flows $\{\trueflow_{\phi,i}\}_{i=1}^K$ are optimized by maximizing an ELBO similar to~\eqref{eq:Transfo_auxiliary_variational} in which $\margindensm{\phi,\chunk{\mathbf{u}}{1}{K}}{K}$ is substituted by $\margindensm{\phi,1:K}{K}$ with $\fwdtransfo_{\phi, \mathbf{u}_i} \leftarrow \trueflow_{\phi,i}$.
  The kernel $M_{\phi,i,\nu}$ shares some similarity with Transformation-based MCMC (T-MCMC) introduced in~\cite{Dutta_2014}. However, the transformations considered in~\cite{Dutta_2014} and later by~\cite{dey2016geometric} are elementary additive or multiplicative transforms acting coordinate-wise. In our contribution, we considered much more sophisticated transformations, inspired by the recent advances on normalizing flows.

  Among the different flow models which have been considered recently in the literature~\cite{papamakarios2019normalizing}, we chose Real-Valued Non-Volume Preserving (RNVP) flows~\cite{dinh2016density} because they are easy to compute and invert. An extension of our work would be to consider other flows, such as Flow++~\cite{ho2019flow}, which can also be computed and inverted efficiently. We could also use autoregressive models, such as Inverse Autoregressive Flows~\cite{kingma2016improved}, which have a tractable - albeit non parallelizable - inverse. Even more expressive flows like NAF~\cite{huang2018neural}, BNAF~\cite{Cao2019BlockNA} or UMNN~\cite{wehenkel2019unconstrained} could be experimented with. Although they are not invertible analytically, this problem could be solved either by the Distribution Distillation method~\cite{oord2017parallel}, or simply by a classic root-finding algorithm: this is theoretically tractable because of the monotonous nature of these flows, and empirically satisfactory~\cite{wehenkel2019unconstrained}.

\section{Related Work}
\label{sec:related_work}
% !TEX root = main.tex

In this section, we compare our method with the state-of-the-art for combining MCMC and VI.
The first attempt to bridge the gap between MCMC and VI is due to~\cite{salimans2015markov}. The method proposed in~\cite{salimans2015markov} uses a different ELBO, based on the joint distribution of the $K$ steps of the Markov chain $\chunk{z}{0}{K} = (z_0, \dots, z_K)$, whereas MetFlows are based on the marginal distribution of the $K$-th component. % . To cope with that augmented space, it is then necessary to have
\cite{salimans2015markov} introduce an auxiliary inference function $r$ and consider the ELBO:
\begin{align}
	&
  \int \log \left(\frac{\tilde{\pi}(z) r_{}(\chunk{z}{0}{K-1}| z_K)}{\margindens{\phi}{\chunk{z}{0}{K}}{}}\right)
  \margindens{\phi}{\chunk{z}{0}{K}}{} \rmd \chunk{z}{0}{K}
  \\
  \nonumber
  &=
  \lowerbound(\phi) - \int \margindens{\phi}{z_K}{}[K]
  \KL{\margindens{\phi}{\cdot \mid z_K}{}}{r_{}(\cdot \mid z_K)}\rmd z_K \eqsp,
\end{align}
where $  \margindens{\phi}{\chunk{z}{0}{K}}{}$ is the joint distribution of the path $\chunk{z}{0}{K}$ \wrt~the Lebesgue measure.
An optimal choice of the auxiliary inference distribution is $r_{}(\chunk{z}{0}{K-1} \mid z_K)= \margindens{\phi}{\chunk{z}{0}{K-1}|z_K}{}$, the conditional distribution of the Markov chain path $\chunk{z}{0}{K-1}$  given its terminal value $z_K$, but this distribution is in most cases intractable.
\cite{salimans2015markov,wolf2016variational} discuss several way to construct sensible approximations of $\margindens{\phi}{\chunk{z}{0}{K-1} \mid z_K}{}$ by introducing learnable time inhomogeneous backward Markov kernels. This introduces additional parameters to learn and degrees of freedom in the choice on the reverse kernels which are not easy to handle. On the top of that, this increases significantly the computational budget.
% Taking into account the acceptance / rejection which are essential in the Metropolis-Hastings scheme require to introduce additional auxiliary variables, again at the expense of  computational complexity; see~\citep[Section 4]{salimans2015markov}.

%Recent work has explored methods to combine MCMC algorithms with VI. For example, \cite{ruiz2019contrastive} and \cite{li2017approximate} use MCMC steps to guide the optimization of the variational parameter $\phi$. Other authors build more expressive variational approximations by applying $K$ MCMC steps to samples from the variational prior $q^0_{\phi}$. Due to the stochasticity of the MCMC steps, the marginal distribution of the final sample $z_K$ becomes the intractable mixture $\margindens{\phi}{z_K}{}[K]= \iint \margindens{\phi}{\chunk{z}{0}{K}}{} \rmd \chunk{z}{0}{K-1}$. ELBO-based approaches, which rely on computing $\margindens{\phi}{z_K}{}[K]$, therefore face a problem (problem \star).

% Hoffman paper's is only tackling the problem in the VAE context, which means that it's also using the MCMC steps to optimize theta (with another metric than the ELBO)

% In a Variational Auto Encoder setting,
\cite{pmlr-v70-hoffman17a} also suggests to build a Markov Chain to enrich the approximation of $\pi$. % the posterior distribution of his generative model (which plays exactly the role, in Bayesian Inference, of the target which he can only access up to a normalizing constant, the likelihood).
More precisely, \cite{pmlr-v70-hoffman17a} optimizes the ELBO with respect to the initial distribution $\margindensw{\phi}{0}{}$, and only uses the MCMC steps to produce ``better'' samples to the target distribution.
However, there is no feedback from the resulting marginal distribution  there to optimize the parameters of the variational distribution $\phi$. This method does not thus directly and completely unifies VI and MCMC, even though it simplifies the process by avoiding the use of the extended space and the reverse kernels.
%Training the variational prior is tantamount to devising better MCMC initialisations. Moreover, applying MCMC steps can only reduce the total variation distance and KL divergence between our approximation and the target ~\citep[p.~81]{cover2012elementsInformationTheory}. \cite{hoffman2019neutra} also separate MCMC and VI, by using VI to build a target that is more amenable to MCMC. However, these approaches don't automatically tune the parameters of the MCMC kernels, and don't use feedback from the final samples to optimize $\phi$.
\cite{ruiz:titsias:2019} refines~\cite{pmlr-v70-hoffman17a} by using a contrastive divergence approach; compared to the methodology presented in this paper, \cite{ruiz:titsias:2019} do not capitalize on the expression of the marginal density.  

Another solution to avoid reverse kernels is considered in~\cite{caterini2018hamiltonian} which amounts to remove randomness from an Hamiltonian MC algorithm. % The procedure might be seen as a HMC-inspired normalizing flow: the variational distribution is obtained by applying successive leapfrog and tempering steps to the mean-field variational prior: the marginal distribution $\margindensw{\phi}{K}{}$ is tractable via a change of variables formula.
However, by getting rid of the accept-reject step and the resampling of the momenta in the HMC algorithm, this approach forgoes the guarantees that come with exact MCMC algorithms.

\section{Experiments}
\label{sec:experiments}
% !TEX root = main.tex

  In this section, we illustrate our findings. We present examples of sampling from complex synthetic distributions which are often used to benchmark generative models, such as a mixture of highly separated Gaussians and other non-Gaussian 2D distributions. We also present posterior inference approximations and inpainting experiments on MNIST dataset, in the setting outlined by~\cite{levy2017generalizing}. Many more examples are given in the supplementary paper.

  We implement the \textit{MetFlow} algorithm described in \Cref{sec:deterministic_MCMC} to highlight the efficiency of our method. For our learnable transitions $T_\phi$, we use RNVP flows.

  We consider two settings. In the \emph{deterministic} setting, we use $K$ different RNVP transforms $\{\trueflow_{\phi,i}\}_{i=1}^K$, and the parameters for each individual transform $\trueflow_{\phi,i}$ are different. In the \emph{pseudo-randomized} setting, we define global transformation $\fwdtransfo_\phi$ on $\rset^D\times\msu$ and set $\trueflow_{\phi,i}= \fwdtransfo_\phi(\cdot, \mathbf{u}_i)$, where $\chunk{\mathbf{u}}{1}{K}$ are $K$ independent draws from a standard normal distribution. In such case, the parameters are the same for the flows $\trueflow_{\phi,i}$, only the innovation noise $\mathbf{u}_i$ differs. Typically, RNVP are encoded by neural networks. In the second setting, the network will thus take as input $z$ and $\mathbf{u}$ stacked together.

  In the second setting, once training has been completed and a fit \(\hat{\phi}\) of the parameters has been obtained, we can sample additional noise innovations \(\mathbf{u}_{(K+1):mK}\). We then consider the distribution given by $\marginal[\hat{\phi}]{mK} =\marginal[\hat{\phi}]{K}M_{\hat{\phi},\mathbf{u}_{K+1},\nu},\ldots, M_{\hat{\phi},\mathbf{u}_{mK},\nu}$ where $\nu$ is typically the uniform on $\{-1,1\}$, as defined as in Section~\ref{sec:deterministic_MCMC}. \(mK\) corresponds to the length of the final Markov chain we consider. In practice, we have found that sampling additional noise innovations this way yields a more accurate approximation of the target, thanks to the asymptotic guarantees of MCMC.

% is this amount of detail necessary?
  Barker ratios (see \Cref{remark:ratios_discussion}) have the advantage of being differentiable everywhere. Metropolis-Hastings (MH) ratios are known to more efficient than Barker ratios in the Peskun sense, see~\cite{peskun:1973}. Moreover, although they are not differentiable at every point $z \in \rset^D$, differentiating MH ratios is no harder than differentiating a ReLu function. We thus use MH ratios in the following.

  All code is written with the Pytorch~\cite{pytorch} and Pyro~\cite{bingham2018pyro} libraries, and all our experiments are run on a GeForce GTX 1080 Ti.

\subsection{Synthetic data. Examples of sampling.}
\label{sec:mixture-gaussians_0}

\subsubsection{Mixture of Gaussians}
\label{sec:mixture-gaussians}
  The objective is to sample from a mixture of 8 Gaussians in dimension $2$, starting from a standard normal prior distribution $q^0$, and compare MetFlow to RNVP. %Note here that if other flows bring more flexibility, the properties brought by our method will conceptually not be recovered by any others flows, to our knowledge.
  We are using an architecture of five RNVP flows ($K = 5$), each of which is parametrized by two three-layer fully-connected neural networks with LeakyRelu (0.01) activations. In this example, we consider the pseudo-randomized setting.
  The results for MetFlow and for RNVPs alone are shown on \Cref{fig:8gmm_all}.
%  \begin{figure}[h]
%    \begin{minipage}{0.4\linewidth}
%      \includegraphics[width=0.98\textwidth]{pics/gmm8_all_dpi400-target.png}
%    \end{minipage}
%    \begin{minipage}{0.6\linewidth}
%      \includegraphics[width=0.98\textwidth]{pics/gmm8_all_dpi400_result.png}
%    \end{minipage}
%   \caption{Density matching for mixture of 8 Gaussians. Left image: target distribution. Right images: prior distribution and the resulting distributions of different methods.} %MetFlow finds all the modes and improves with more iterations, while RNVP fails to find all the modes.}
%  \label{fig:8gmm_all}
%  \end{figure}
%
  \begin{figure}[h!]
    \begin{tabular}{ccc}
      \includegraphics[width=0.3\linewidth]{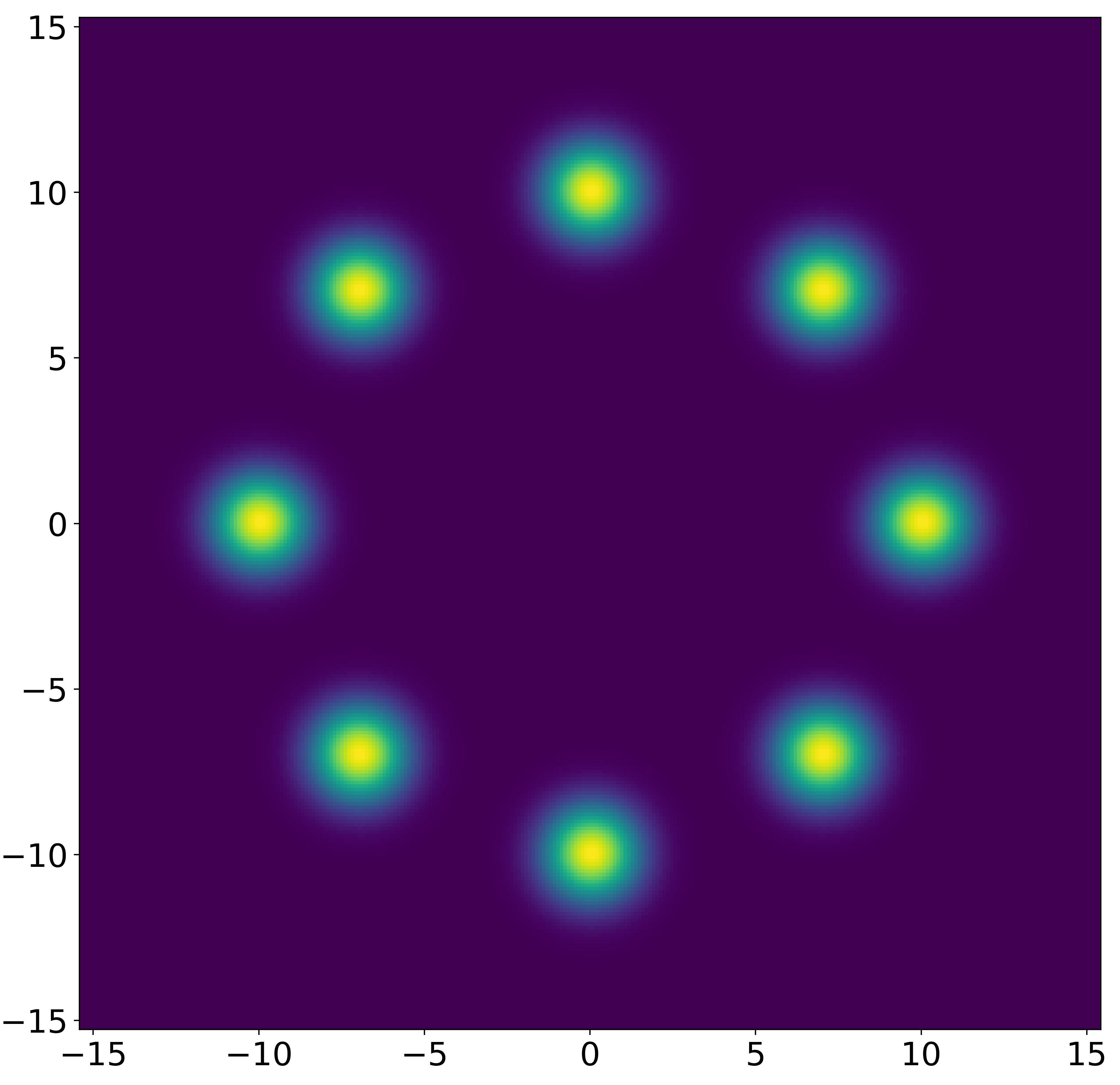} & \includegraphics[width=0.3\linewidth]{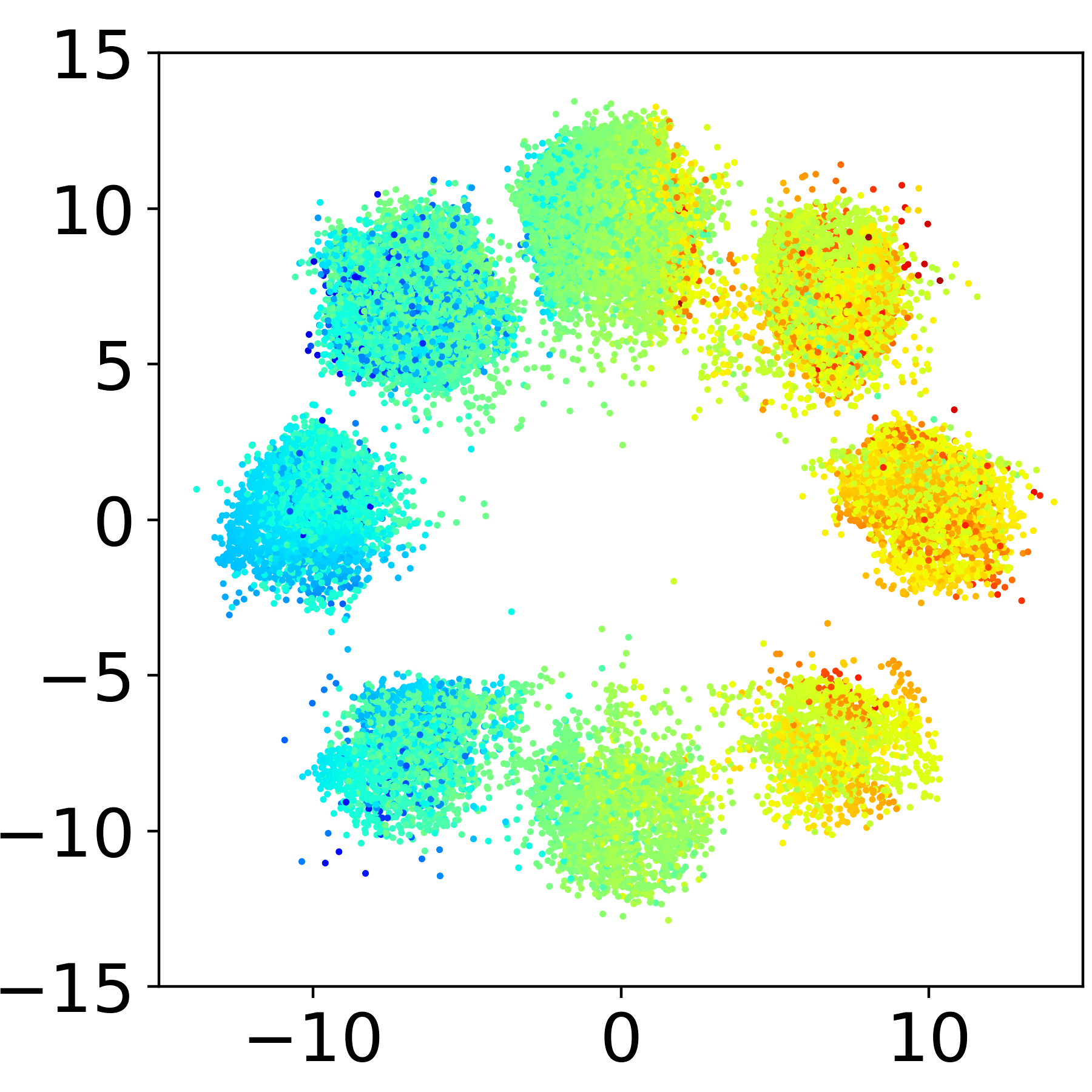} & \includegraphics[width=0.3\linewidth]{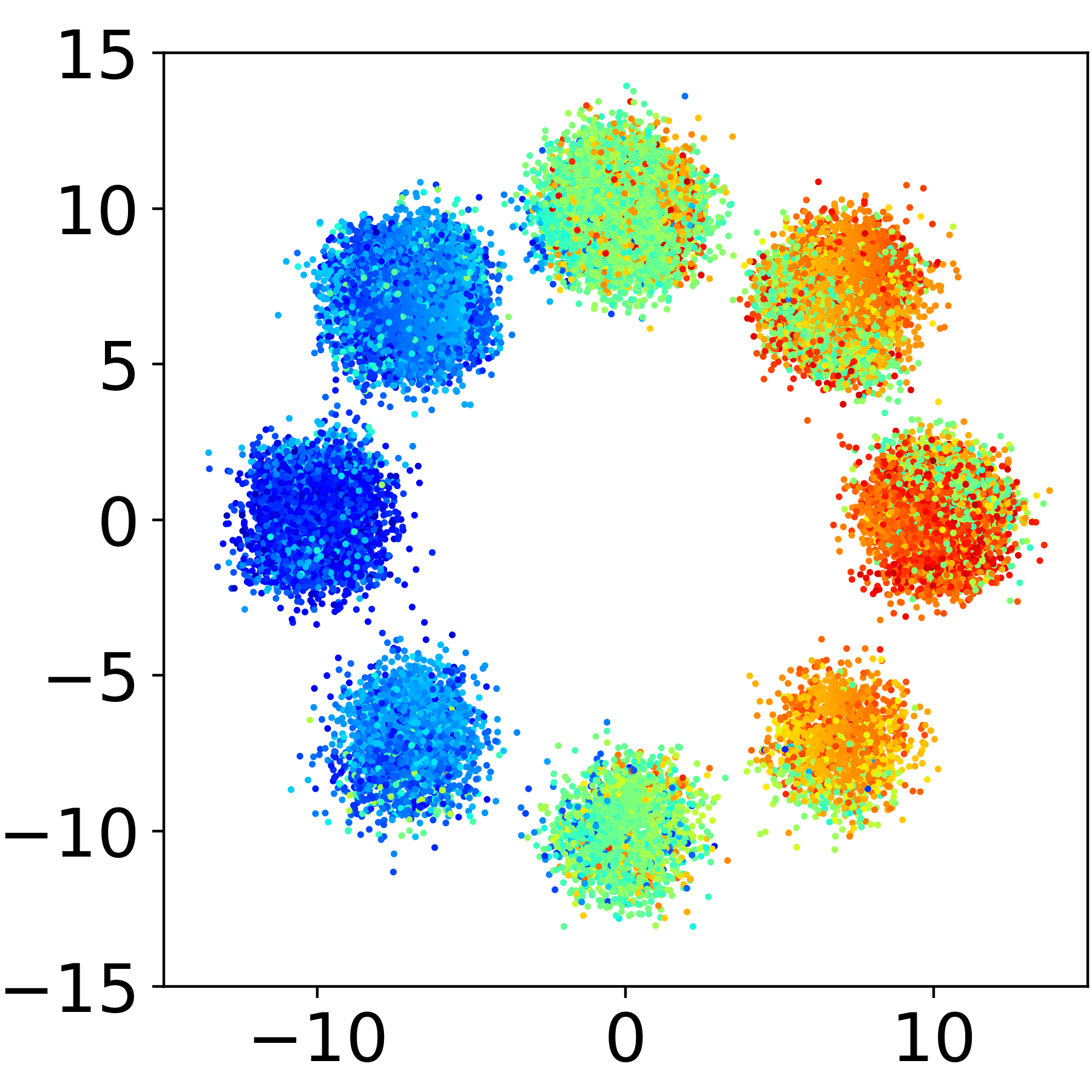} \\
      \includegraphics[width=0.3\linewidth]{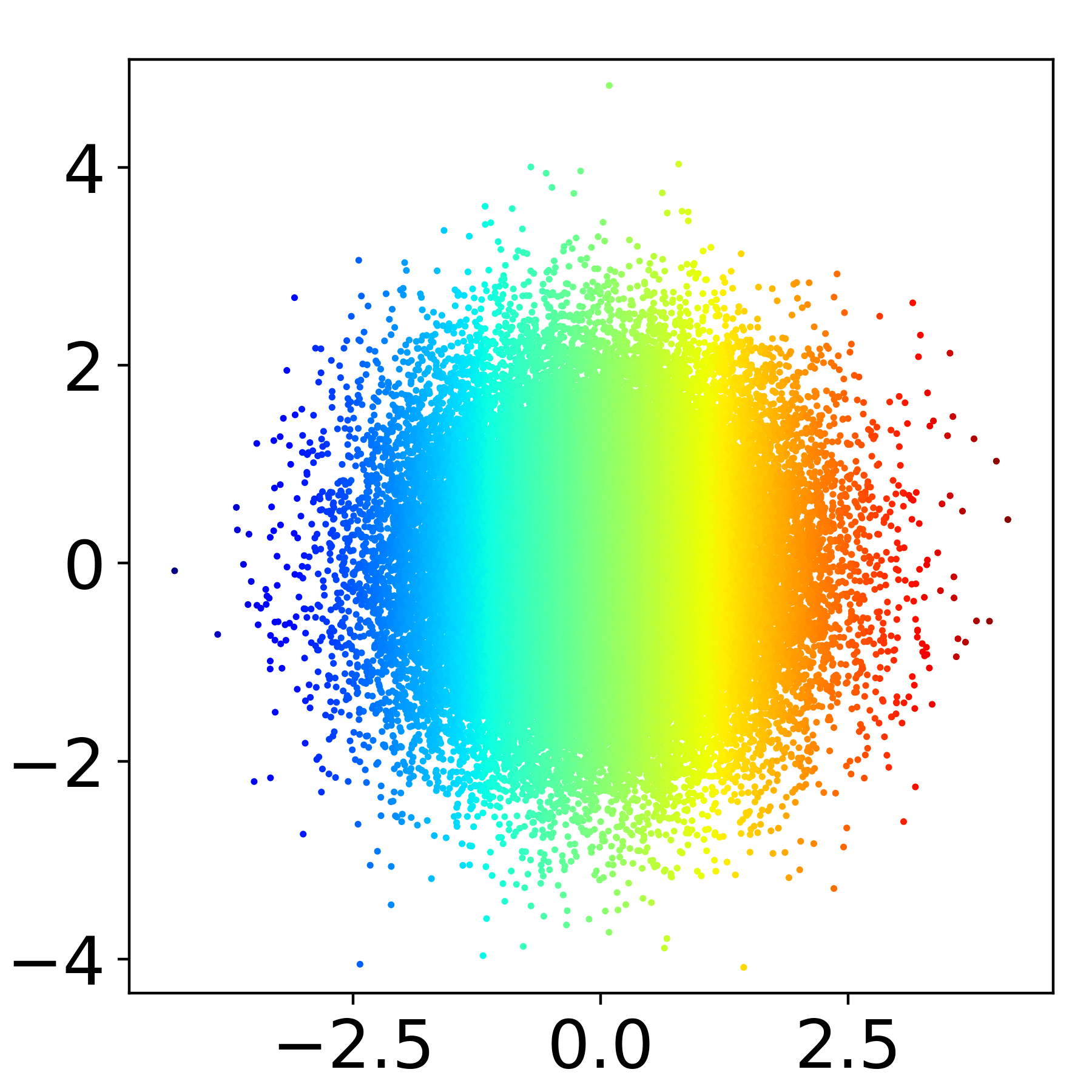} & \includegraphics[width=0.3\linewidth]{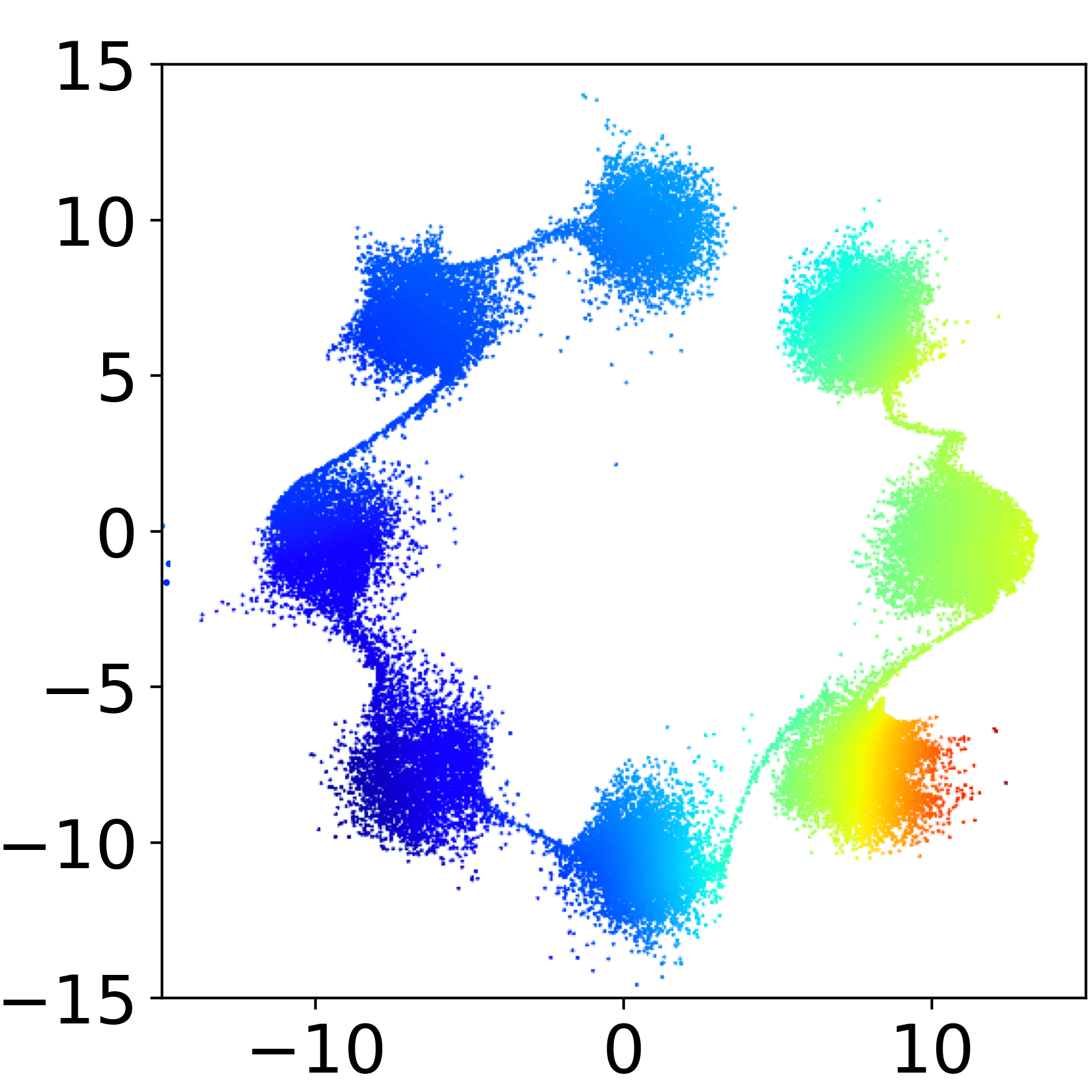} & \includegraphics[width=0.3\linewidth]{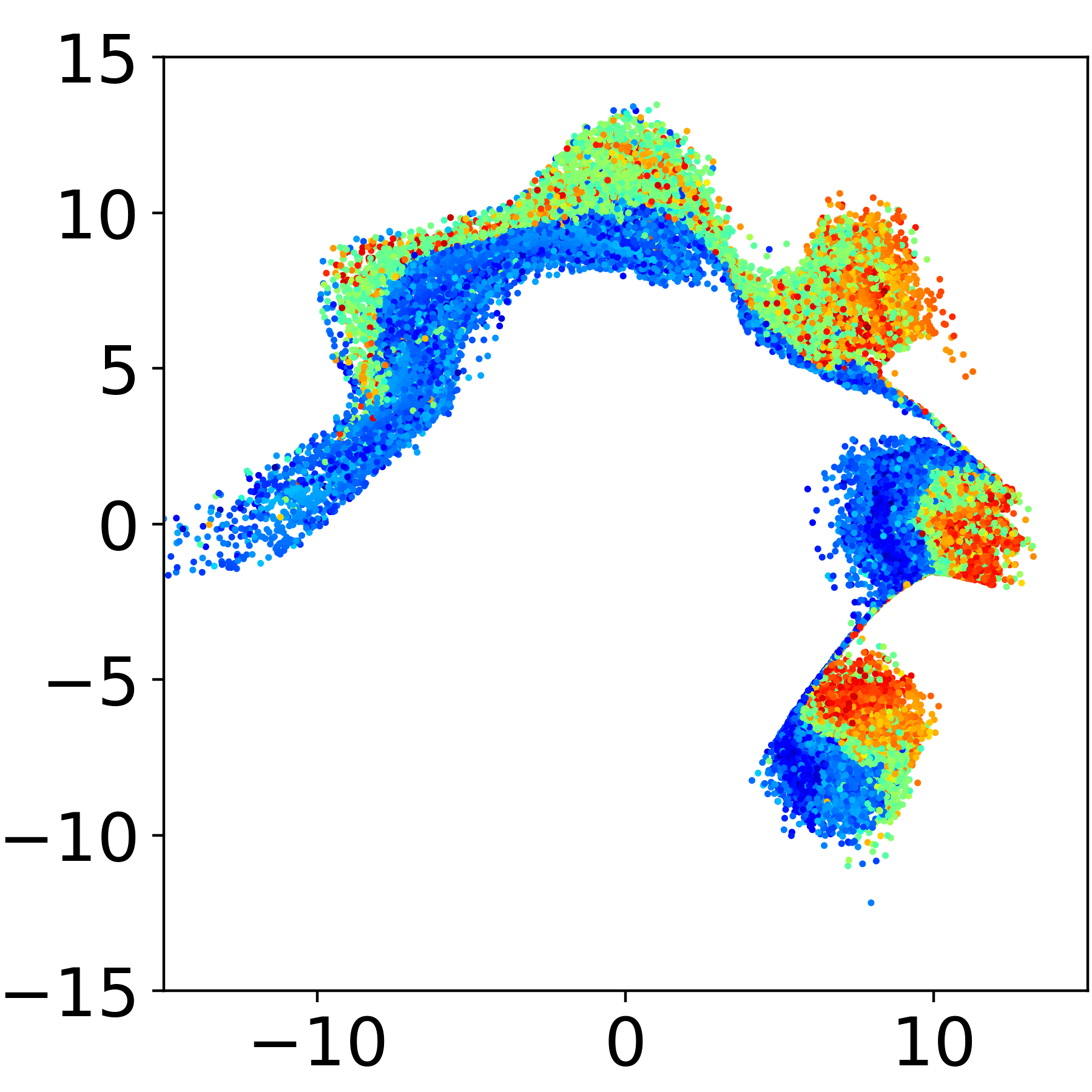}
    \end{tabular}
    \caption{Sampling a mixture of 8 Gaussian distributions. Top row from left to right: Target distribution, MetFlow, MetFlow with 145 resampled innovation noise. Bottom row from left to right: Prior distribution, First run of RNVP, Second run of RNVP. MetFlow finds all the modes and improves with more iterations, while RNVP depend on a good initialization to find all the modes and fails to separate them correctly.}
  \label{fig:8gmm_all}
  \end{figure}  
%  \begin{figure}
%    \begin{center}
%      \includegraphics[width=\linewidth]{pics/gmm8_all_dpi400.png}
%      \caption{Density matching for mixture of 8 Gaussians.}
%    \end{center}
%  \label{fig:8gmm_all}
%  \end{figure}
  First, we observe that while our method successfully finds all modes of the target distribution, RNVP alone struggles to do the same. Our method is therefore able to approximate multimodal distributions with well separated modes. Here, the mixture structure of the distribution (with potentially $3^5 = 243$ modes) produced by MetFlow is very appropriate to such a problem. On the contrary, classical flows are unable to approximate well separated modes starting from a simple unimodal prior, without much surprise. In particular, mode dropping is a serious issue even in small dimension. Moreover, an other advantage of MetFlow in the pseudo randomized setting is to be able to iterate the learnt kernels which still preserve the target distribution.
  Iterating MetFlow kernels widens the gap between both approaches, significantly improving the accuracy of our approximation.
  \begin{figure}[h!]
    \centering
    \includegraphics[width= 0.85\linewidth]{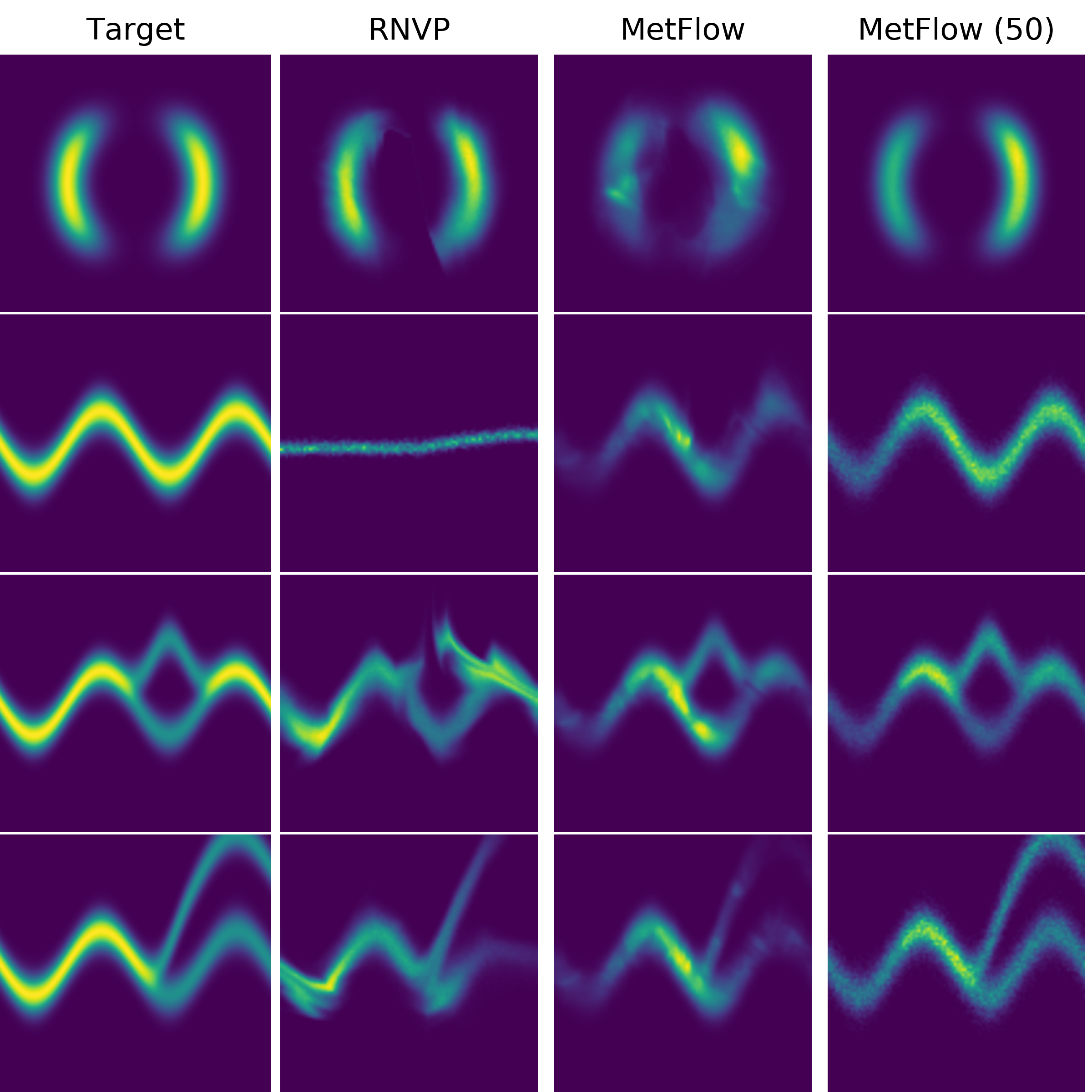}
    \caption{Density matching example~\cite{pmlr-v37-rezende15} and comparison between RNVP and MetFlow.}
    \label{fig:rezende}
  \end{figure}

\subsubsection{Non-Gaussian 2D Distributions}  
  \begin{figure}[h!]
    \centering
    \includegraphics[width=0.4\linewidth]{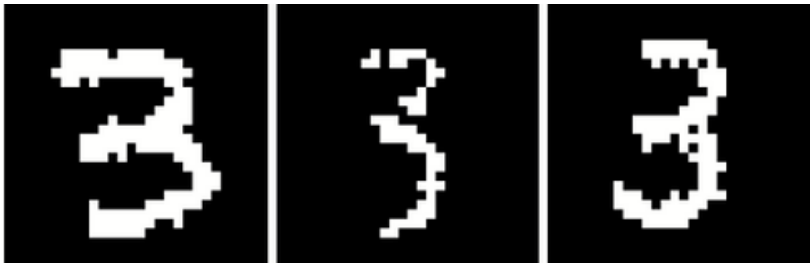}

    \bigskip

    \includegraphics[width=0.4\textwidth]{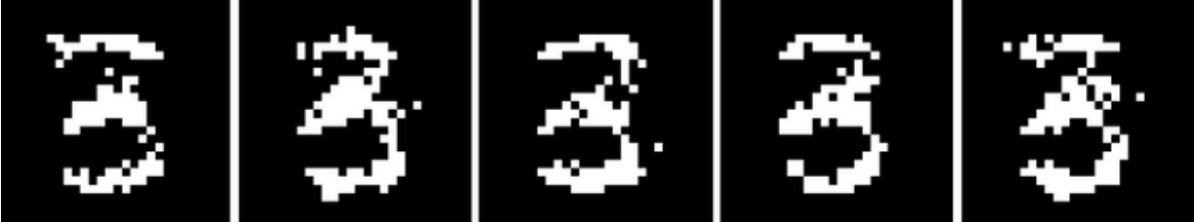}
    \includegraphics[width=0.4\textwidth]{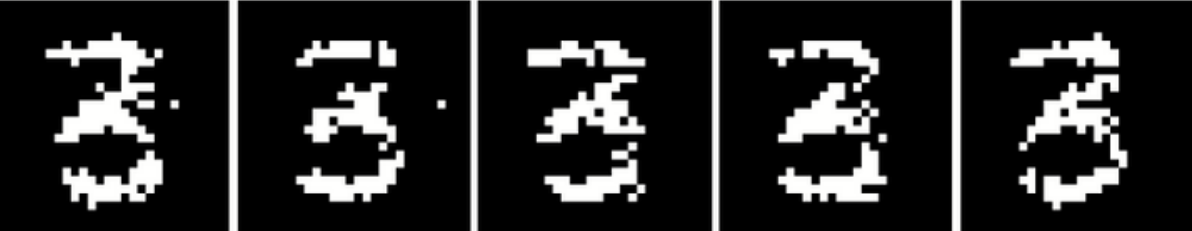}

    \bigskip

    \includegraphics[width=0.4\textwidth]{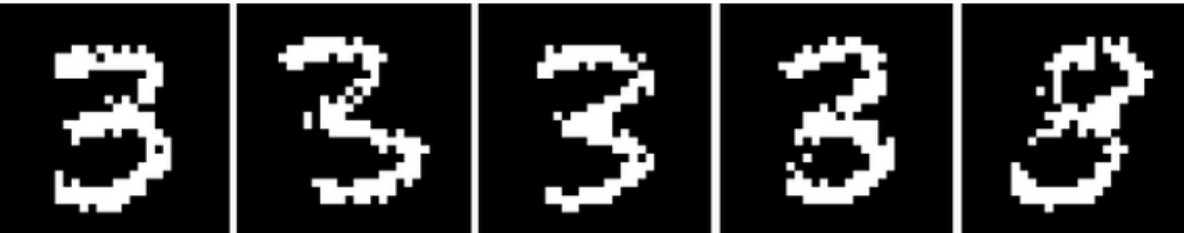}
    \includegraphics[width=0.4\textwidth]{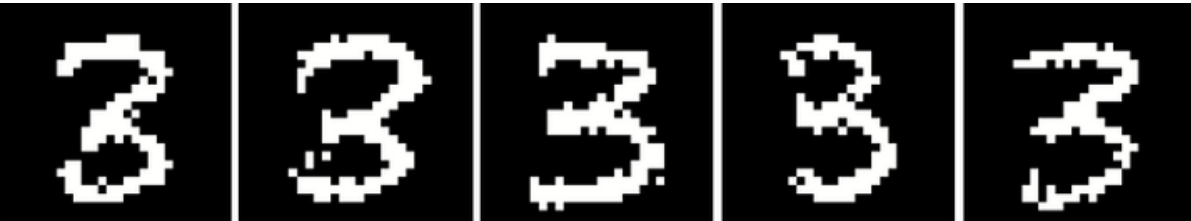}
    \caption{Mixture of '3' digits. Top: Fixed digits, Middle: NAF samples, Bottom: MetFlow samples. Compared to NAF, MetFlow is capable to mix better between these modes, while NAF seems to collapse.}
    \label{fig:mixture_of_3}
  \end{figure}

  In a second experiment, we sample the non-Gaussian 2D distributions proposed in~\cite{pmlr-v37-rezende15}. \Cref{fig:rezende} illustrates the performance of MetFlow compared to RNVP. We are again using 5 RNVPs ($K = 5$) with the architecture described above, and use the pseudo-randomized setting for MetFlow. After only five steps, MetFlow already finds the correct form of the target distribution, while the simple RNVP fails on the more complex distributions. Moreover, iterating again MetFlow kernels allows us to approximate the target distribution with striking precision, after only 50 MCMC steps.
  \begin{figure*}[h!]
    \includegraphics[width=\linewidth]{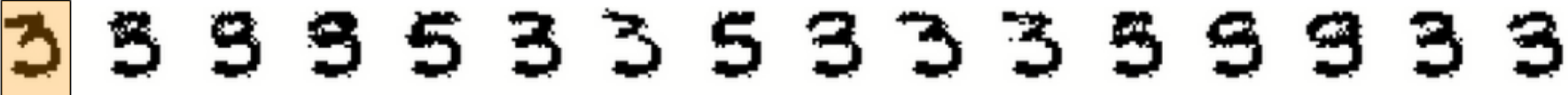}
    \includegraphics[width=\linewidth]{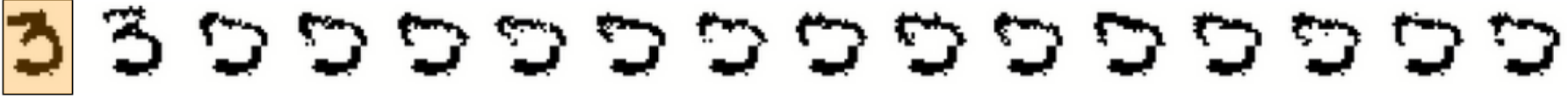}
    \includegraphics[width=\linewidth]{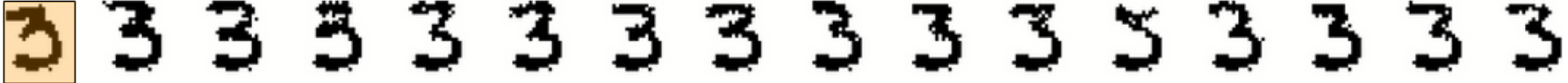}
    \caption{Top line: Mean-Field approximation and MetFlow, Middle line: Mean-Field approximation, Bottom line: Mean-Field Approximation and NAF. Orange samples on the left represent the initialization image. We observe that MetFlow easily mixes between the modes while other methods are stuck in one mode.}
    \label{fig:gibbs_sampling}
  \end{figure*}

\subsection{Deep Generative Models}
  Deep Generative Models (DGM), such as Deep Latent Gaussian Models (see~\citet{kingma2013auto,rezende2014stochastic}) have recently become very popular.
  The basic assumption in a DGM is that the observed data \(x\) is generated by sampling a latent vector \(z\) which is used as the input of a deep neural network.
  This network then outputs the parameters of a family of distributions (e.g., the canonical parameters of exponential family like Bernoulli or Gaussian distributions) from which the data are sampled. Given data generated by a DGM, a classical problem is to approach the conditional distribution $p(z \mid x)$ of the latent variables $z$ given the observation $x$, using variational inference to construct an amortized approximation.

 % \begin{figure*}[h!]
 %   \includegraphics[width=\linewidth]{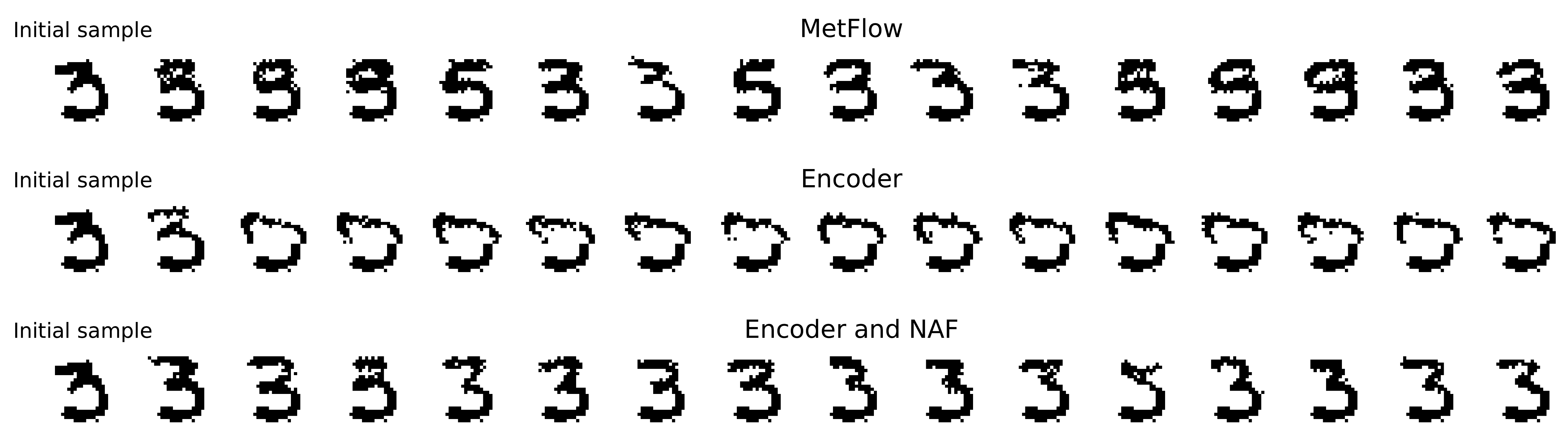}
 %   \caption{Gibbs sampling example showing capability of MetFlow to mix between different modes of underlying posterior distribution.}
 % \label{fig:gibbs_sampling}
 % \end{figure*}

  We consider the binarized MNIST handwritten digit dataset. The generative model is as follows. The latent variable $z$ is a $l=64$ dimensional standard normal Gaussian. The observation $x=(x^j)_{j=1}^D$ is a vector of $D=784$ bits. The bits $(x^j)_{j=1}^D$ are, given the latent variable $z$, conditionally independent Bernoulli distributed random variables with success probability $p_\theta(z)^j$ where $(p_\theta^j)_{j=1}^D$ is the output of a convolutional neural network. In this framework, $p_\theta$ is called the decoder.
  In the following, we show that our method provides a flexible and accurate variational approximation of the conditional distribution of the latent variable given the observation $p_\theta(z\mid x)$, outperforming mean-field and Normalizing Flows based approaches.

  %For the experiment we fix a State of the Art VAE.

  As we are focusing in this paper on the comparison of VI methods to approximate complex distributions and not on learning the Variational Auto Encoder itself, we have chosen to use a fixed decoder for both Normalizing Flows (here, Neural Autoregessive Flows) and MetFlow (with RNVP transforms).
  The decoder is obtained using state-of-the-art method described in the supplementary paper.
  We can illustrate the expressivity of MetFlow in two different ways. We first fix $L$ different samples. In this example, we take $L=3$ images representing the digit ``3''. We are willing to approximate, for a given decoder $p_\theta$, the posterior distribution $p_\theta(z|(x_i)_{i=1}^L)$. We show in \Cref{fig:mixture_of_3} the decoded samples corresponding to the following variational approximations of $p_\theta(\cdot|(x_i)_{i=1}^L)$: (i)
  %the simple mean-field approximation provided by the encoder of the fixed VAE, (ii) the encoder and the NAF of the fixed VAE, (iii)
  a NAF trained from the decoder to approximate $p_\theta(\cdot|(x_i)_{i=1}^L)$ and (ii) \textit{MetFlow} in the deterministic setting with $K=5$ RNVP flows.

  \Cref{fig:mixture_of_3} shows that the samples generated from (i) collapse essentially to one mode corresponding to the first digit. On the contrary, MetFlow is able to capture the three different modes of the posterior and generates much more variability in the decoded samples. The same phenomenon is observed in different settings by varying $L$ and the digits chosen, as illustrated in the supplementary paper.

  We now consider the in-painting set-up introduced in~\citep[Section 5.2.2]{levy2017generalizing}. Formally, we in-paint the top of an image using Block Gibbs sampling.
  Given an image $x$, we denote $x^t$, $x^b$ the top and the bottom half pixels. Starting from an image $\mathbf{x_0}$, we sample at each step $z_t \sim p_\theta(z\mid x_t)$ and then $\tilde{x}\sim p_\theta(x\mid z_t)$. We the set $x_{t+1} = (\tilde{x}^{t}, x_0^b)$. We give the output of this process when sampling from the mean-field approximation of the posterior only, the mean-field pushed by a NAF, or using our method.
  The result for the experiment can be seen on \Cref{fig:gibbs_sampling}.

  We can see that MetFlow mixes easily between different modes, and produces sharp images. We recognize furthermore different digits (3,5,9). It is clear from the middle plot that the mean-field approximation is not able to capture the complexity of the distribution $p_\theta(z\mid x)$. Finally, the NAF improves the quality of the samples but does not compare to MetFlow in terms of mixing.

%  \begin{figure}[ht!]
%    \includegraphics[width=\linewidth]{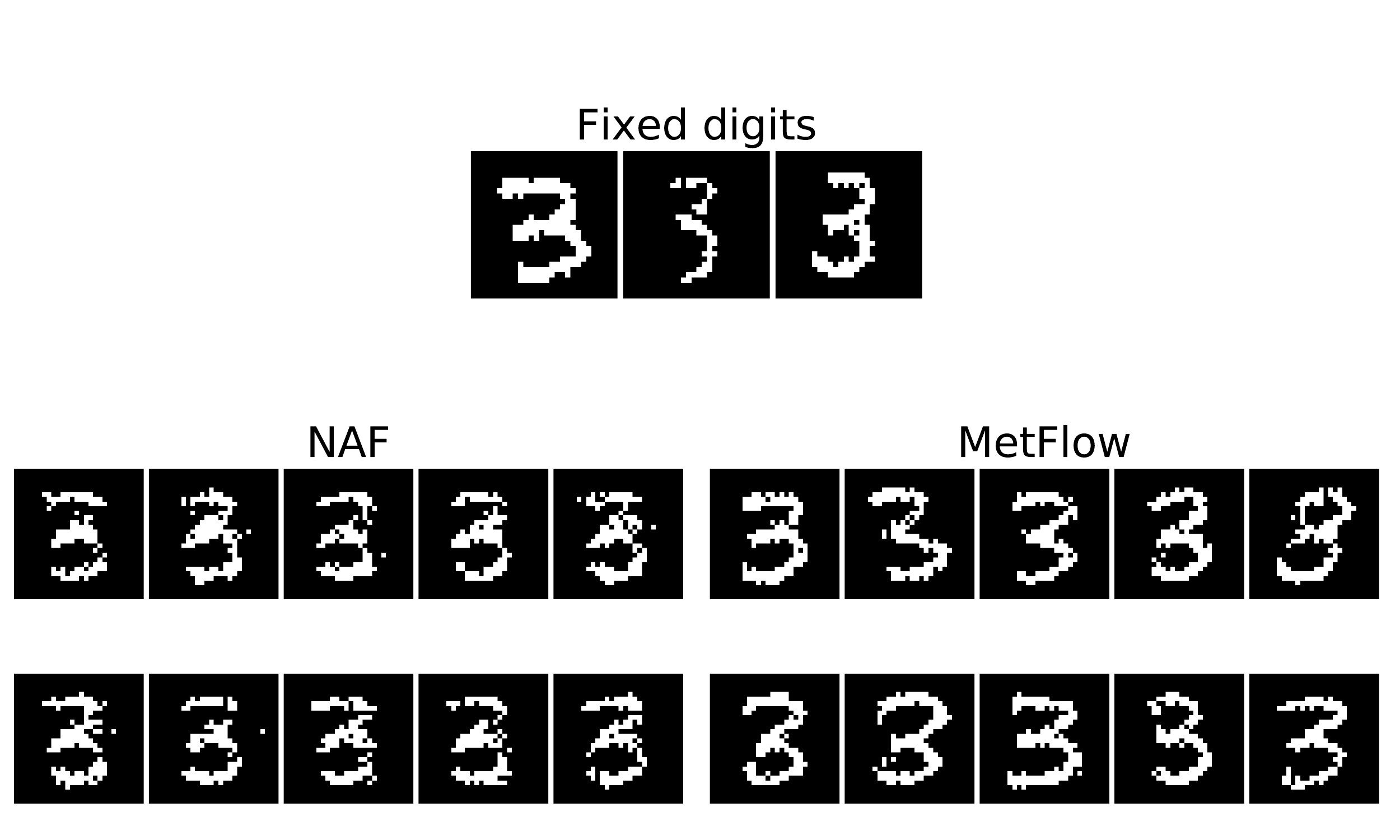}
%    \caption{Mixture of ``3'' digits. Compare to NAF, MetFlow is capable to mix better between these modes, while NAF seems to collapse to something in the middle. }
%  \label{fig:mixture_of_3}
%  \end{figure}

%%% Local Variables:
%%% mode: latex
%%% TeX-master: "main"
%%% End:

\section{Conclusions}
\label{sec:conclusions}
In this paper, we propose a novel approach to combine MCMC and VI which alleviates the computational bottleneck of previously reported methods. In addition, we design \emph{MetFlow}, a particular class of MH algorithms which fully takes advantage of our new methodology while capitalizing on the great expressivity of NF. Finally, numerical experiments highlight the benefits of our method compared to state-of-the-arts VI.

%Further theoretical and numerical comparisons with existing VI schemes are under consideration and more precisely, we aim at comparing the different ELBO related to each method.
This work leads to several natural extensions. All NF applications can be adapted with \emph{MetFlow}, which can be seen as a natural extension of a NF framework. \emph{MetFlow} are very appropriate for VAE by amortizing. Due to lack of space, we did not present applications with Forward KL divergence. The mixture structure of the distribution obtained by \emph{MetFlow} suggests the Variational Expectation Maximization (VEM) is a sensible strategy and in particular a chunky version of VEM in the case where the number of steps $K$ is large~\cite{verbeek:vlassis:2003}.
%First, several MCMC methods do not fall within the framework we set, especially the ones based on data augmentation, such as the full refresh HMC, or some Gibbs samplers \cite{albert:chib:1993}.We would like to generalize our methodology to include such MCMC algorithms.
%In a second line of work, we plan to bypass the evaluation of the MH acceptance ratio using a Variational Expectation Maximization (VEM) strategy and in particular a chunky version of VEM in the case where the number of steps $K$ is large \cite{verbeek:vlassis:2003}. Finally, VAE is a natural application of our variational family and its corresponding ELBO and is the subject of an ongoing work. 

\clearpage
\newpage

%% Bibliography goes here
\bibliography{mcmc}
\bibliographystyle{icml2020}

\newpage

\onecolumn

\appendix
\section*{Supplementary Material}
% !TEX root = main_supplement.tex

\section{Proofs}
\subsection{Proof of \Cref{coro:induction-argument}}
\label{SPsec:proof-coro-induction}
  % \Cref{prop:density-one-iteration} allow us to write the density of the first iterate of the Markov chain defined by the iteration of kernel~\eqref{eq:def_M}, given the ``innovation'' noise $u$, using the initial density $m^0$.
  % In general, \Cref{prop:density-one-iteration} allow us to write the density of the $K$-th iterate of the Markov chain given the ``innovation'' noise at time $K$ using the density of the $K-1$-th iterate of the Markov Chain, supposing it exists. We set the notations for a mapping $\fwdtransfo\colon \rset^D \to \rset^D$, define  $\fwdtransfo^0=\Id$. For a family
  % $\{\fwdtransfo_i\}_{i=1}^D$ of mappings on $\rset^D$, define $\bigcirc_{j=i}^k \fwdtransfo_j = \fwdtransfo_i \circ \dots \circ \fwdtransfo_k$. Finally, set $\alpha^1_{\phi,u}(z) = \alpha_{\phi,u}(z)$ and $\alpha^0_{\phi,u}(z) = 1-\alpha_{\phi,u}(z)$.
  The proof is by induction on $K \in \nsets$. The base case $K=1$ is given by \Cref{prop:density-one-iteration}. Assume now that the statement holds for $K-1 \in \nsets$. Then noticing that $\marginal[\phi]{K}(\cdot|\chunk{u}{1}{K}) =\marginal[\phi]{K-1}(\cdot|\chunk{u}{1}{K-1})\Mtransw{\phi}{u_K}$,   using again \Cref{prop:density-one-iteration} and the induction hypothesis, we get that $\marginal[\phi]{K}(\cdot|\chunk{u}{1}{K})$ admits a density $\margindensmu{\phi}{K}{\cdot}{\chunk{u}{1}{K}}$ \wrt~the Lebesgue measure given for any $z \in \rset^D$ by 
  \begin{align}
  \label{eq:margin_densk-1}
    \margindensmu{\phi}{K}{z}{\chunk{u}{1}{K}} &= \alpha_{\phi,u_K}(\fwdtransfoparam{\phi}{u_K}^{-1}(z)) \margindensmu{\phi}{K-1}{\fwdtransfoparam{\phi}{u_K}^{-1}(z)}{\chunk{u}{1}{K-1}}J_{\fwdtransfoparam{\phi}{u_K}^{-1}}(z)+ \{1-\alpha_{\phi,u_K}(z)\}\margindensmu{\phi}{K-1}{z}{\chunk{u}{1}{K-1}} \eqsp,
    \nonumber
    \\
    \nonumber
    & = \sum_{a_K \in \{0,1\}} \alpha^{a_K}_{\phi, u_K}(\fwdtransfoparam{\phi}{u_K}^{-a_K}(z))J_{\fwdtransfoparam{\phi}{u_K}^{-a_K}}(z)\margindensmu{\phi}{K-1}{\fwdtransfoparam{\phi}{u_K}^{-a_K}(z)}{\chunk{u}{1}{K-1}} \eqsp.
  \end{align}

  Using the induction hypothesis, the density $\margindensmu{\phi}{K-1}{\cdot}{\chunk{u}{1}{K-1 }}$ \wrt~the Lebesgue measure of $\marginal[\phi]{K-1}(\cdot|\chunk{u}{1}{K-1})$ of the form~\eqref{eq:margin_densK}. Therefore, we obtain that
  \begin{align}
    &    \margindensmu{\phi}{K}{z}{\chunk{u}{1}{K}} % = \sum_{a_K \in \{0,1\}} \alpha^{a_K}_{\phi, u_K}(\fwdtransfoparam{\phi}{u_K}^{-a_K}(z))J_{\fwdtransfoparam{\phi}{u_K}^{-a_K}}(z)\margindensmu{\phi}{K-1}{\fwdtransfoparam{\phi}{u_K}^{-a_K}(z)}{\chunk{u}{1}{K-1}}
    % \nonumber
    % \\
    = \sum_{a_K \in \{0,1\}}  \alpha^{a_K}_{\phi, u_K}(\fwdtransfoparam{\phi}{u_K}^{-a_K}(z))J_{\fwdtransfoparam{\phi}{u_K}^{-a_K}}(z) \left[\sum_{a_{1:K-1} \in \{0,1\}^{K-1}} \margindensm{\phi}{0}(\bigcirc_{j=1}^{K-1}\fwdtransfoparam{\phi}{u_j}^{-a_j}(\fwdtransfoparam{\phi}{u_K}^{-a_K}(z))) \right.
    \nonumber
    \\
    &\qquad \qquad \qquad \times \left. J_{\bigcirc_{j=1}^{K-1}\fwdtransfoparam{\phi}{u_j}^{-a_j}}(\fwdtransfoparam{\phi}{u_K}^{-a_K}(z))
    \prod_{i=1}^{K-1} \alpha^{a_i}_{\phi,u_i}(\bigcirc_{j=i}^{K-1}\fwdtransfoparam{\phi}{u_j}^{-a_j}(\fwdtransfoparam{\phi}{u_K}^{-a_K}(z)))\right]
    \nonumber
    \\
    &= \sum_{\chunk{a}{1}{K} \in \{0,1\}^{K}}\margindensm{\phi}{0}(\bigcirc_{j=1}^{K}\fwdtransfoparam{\phi}{u_j}^{-a_j}(z))  J_{\fwdtransfoparam{\phi}{u_K}^{-a_K}}(z)J_{\bigcirc_{j=1}^{K-1}\fwdtransfoparam{\phi}{u_j}^{-a_j}}(\fwdtransfoparam{\phi}{u_K}^{-a_K}(z)) \alpha^{a_K}_{\phi, u_K}(\fwdtransfoparam{\phi}{u_K}^{-a_K}(z))\prod_{i=1}^{K-1} \alpha^{a_i}_{\phi,u_i}(\bigcirc_{j=i}^{K}\fwdtransfoparam{\phi}{u_j}^{-a_j}(z))
    \nonumber
    \\
    \nonumber
    &=\sum_{\chunk{a}{1}{K} \in \{0,1\}^{K}}\margindensm{\phi}{0}(\bigcirc_{j=1}^{K}\fwdtransfoparam{\phi}{u_j}^{-a_j}(z))  J_{\bigcirc_{j=1}^{K}\fwdtransfoparam{\phi}{u_j}^{-a_j}}(\fwdtransfoparam{\phi}{u_K}^{-a_K}(z)) \prod_{i=1}^{K} \alpha^{a_i}_{\phi,u_i}(\bigcirc_{j=i}^{K}\fwdtransfoparam{\phi}{u_j}^{-a_j}(z))\eqsp,
  \end{align}
  where in the last step, we have used that for any differentiable functions $\psi_1,\psi_2\colon \rset^D \to \rset^D$,  $J_{\psi_1\circ \psi_2}(z) = J_{\psi_1}(\psi_2(z)) J_{\psi_2}(z)$ for any $z \in \rset^D$.
   
  % Let us now prove by induction that the distribution $\marginal[\phi]{K}(\cdot|\chunk{u}{1}{K})= \marginal[\phi]{0}\Mtransw{u_1}{\phi}\cdots \Mtransw{u_K}{\phi}$ has a density $\margindensm{\phi}{K}$ given by
  % \begin{align}
  % \label{eq:margin_densK}
  %   &\margindensmu{\phi}{K}{z}{\chunk{u}{1}{K}} =  \sum_{\chunk{a}{1}{K} \in \{0,1\}^K}\margindensmu{\phi}{K}{z, \chunk{a}{1}{K}}{\chunk{u}{1}{K}} \eqsp,
  % \end{align}
  % where
  % \begin{align}
  %   \margindensmu{\phi}{K}{z, \chunk{a}{1}{K}}{\chunk{u}{1}{K}} = \margindensm{\phi}{0}(\bigcirc_{j=1}^K\fwdtransfoparam{\phi}{u_j}^{-a_j}(z))J_{\bigcirc_{j=1}^K\fwdtransfoparam{\phi}{u_j}^{-a_j}}(z)
  %   \prod_{i=1}^K \alpha^{a_i}_{\phi,u_i}(\bigcirc_{j=i}^K\fwdtransfoparam{\phi}{u_j}^{-a_j}(z))\eqsp.
  % \end{align}
  % \Cref{prop:density-one-iteration} gives this result for $K=1$. Let us now suppose it is true for some $K-1 \in \nsets$.

  % We have, as observed above
  % \begin{equation}
  %   \margindensmu{\phi}{K}{z}{\chunk{u}{1}{K}} = \sum_{a_K \in \{0,1\}} \alpha^{a_K}_{\phi, u_K}(\fwdtransfoparam{\phi}{u_K}^{-a_K}(z))J_{\fwdtransfoparam{\phi}{u_K}^{-a_K}}(z)\margindensmu{\phi}{K-1}{\fwdtransfoparam{\phi}{u_K}^{-a_K}(z)}{\chunk{u}{1}{K-1}}
  % \end{equation}
  % and we have an expression for $\margindensm{\phi}{K-1}(\fwdtransfoparam{\phi}{u_K}^{-a_K}(z))$ by the induction hypothesis.
  % We can thus derive:

\subsection{Proof of \Cref{propo:2}}
  Let $(u,\phi)\in \msu \times \msphi$.  We want to find a condition such that the kernel $R_{\phi, \tu}$ defined by~\eqref{eq:def_markov_Q} is reversible \wrt~$\pi\otimes \nu$ where $\nu$ is a distribution on $\msv$.
  This means that for any $\msa_1, \msa_2 \in \mcb{\rset^D}$, $\msb_1, \msb_2 \subset \msv$,
  \begin{equation}
  \label{eq:reversibility_Rphi}
    \int_{\msa_1 \times \msa_2}\sum_{(v,v') \in \msb_1 \times \msb_2} \pi(z) \nu(v) R_{\phi, \tu}((z,v), \rmd z'\times \{v'\}) \rmd z =  \int_{\msa_2 \times \msa_1}\sum_{(v,v') \in \msb_2 \times \msb_1} \pi(z) \nu(v) R_{\phi, \tu}((z,v), \rmd z'\times \{v'\}) \rmd z \eqsp. 
  \end{equation}
  By definition of $R_{\phi, \tu}$ \eqref{eq:def_markov_Q}, the left-hand side simplifies to
  \begin{align}
    \nonumber
    &\int_{\msa_1 \times \msa_2}\sum_{(v,v') \in \msb_1 \times \msb_2} \pi(z) \nu(v) R_{\phi, \tu}((z,v), \rmd z'\times \{v'\}) \rmd z =\int_{\msa_1 }\sum_{v \in \msb_1 } \pi(z) \nu(v) R_{\phi, \tu}((z,v), \msa_2\times\msb_2) \rmd z\\
      \nonumber
    & \qquad \qquad = \int_{\msa_1 }\sum_{v \in \msb_1 }\pi(z) \nu(v) \left\{ \dmhratio(z,v) \updelta_{\invtransfo_{\phi, \tu}(z,v)}(\msa_2\times\msb_2) + \{1-\dmhratio(z,v)\} \updelta_{(z,v)} (\msa_2\times\msb_2) \right\}\rmd z  \\
    \label{eq:rev_proof_first_term_1}
    & \qquad \qquad  = \mathfrak{I}+\int_{\rset^D} \sum_{v \in \msv }\pi(z) \nu(v)   \{1-\dmhratio(z,v)\} \Ind_{\msa_2\times\msb_2}(z,v)  \Ind_{\msa_1\times\msb_1}(z,v)\rmd z \eqsp,
    % \int_{\rset^D} \sum_{v \in \msv }\pi(z) \nu(v) \left\{ \dmhratio(z,v) \Ind_{\msa_2\times\msb_2}(\invtransfo_{\phi,\tu}(z,v)) + \{1-\dmhratio(z,v)\} \Ind_{\msa_2\times\msb_2}(z,v) \right\} \Ind_{\msa_1\times\msb_1}(z,v)\rmd z.
  \end{align}
  where using that $\invtransfo_{\phi,\tu}$ is an involution, and   for $v \in \msv$, the change of variable $\tilde{z} = \fwdtransfo^{-v}_{\phi,\tu}(z)$,
  %  Then we can separate this integrals in two terms. Let us consider the first one:
  \begin{align}
    \nonumber
    \mathfrak{I}& = \int_{\rset^D} \sum_{v \in \msv }\pi(z) \nu(v) \dmhratio(z,v) \Ind_{\msa_2\times\msb_2}(\invtransfo_{\phi,\tu}(z,v))\Ind_{\msa_1\times\msb_1}(z,v)\rmd z  \\
    \nonumber
    &= \int_{\rset^D}\sum_{v \in \msv }\dmhratio(\fwdtransfo^{v}_{\phi,\tu}(\tilde{z}),v) J_{\fwdtransfo^{-v}_{\phi,\tu}}(\tilde{z})\pi(\fwdtransfo^{v}_{\phi,\tu}(\tilde{z}))\nu(v) \Ind_{\msa_2\times\msb_2}(\tilde{z},-v)\Ind_{\msa_1\times\msb_1}(\fwdtransfo^{v}_{\phi,\tu}(\tilde{z}),v)\rmd \tilde{z} \\
    \label{eq:rev_proof_first_term_2}
    &= \int_{\rset^D}\sum_{\tilde{v} \in \msv }\dmhratio(\invtransfo_{\phi,\tu}(\tilde{z},\tilde{v})) J_{\fwdtransfo^{\tilde{v}}_{\phi,\tu}}(\tilde{z})\pi(\fwdtransfo^{\tilde{v}}_{\phi,\tu}(\tilde{z}))\nu(-\tilde{v}) \Ind_{\msa_2\times\msb_2}(\tilde{z},\tilde{v})\Ind_{\msa_1\times\msb_1}(\invtransfo_{\phi,\tu}(\tilde{z},\tilde{v}))\rmd \tilde{z}  \eqsp,
  \end{align}
  where in the last step, we have the change of variable $\tilde{v} = -v$. As for the  right-hand side of~\eqref{eq:reversibility_Rphi}, we have by definition~\eqref{eq:def_markov_Q},  
  \begin{align}
    \nonumber
    &\int_{\msa_2 \times \msa_1}\sum_{(v,v') \in \msb_2 \times \msb_1} \pi(z) \nu(v) R_{\phi, \tu}((z,v), \rmd z'\times \{v'\}) \rmd z = \int_{\msa_2 }\sum_{v \in \msb_2 } \pi(z) \nu(v) R_{\phi, \tu}((z,v), \msa_1\times\msb_1) \rmd z\\
    \nonumber
    &= \int_{\rset^D}\sum_{v \in \msv }\pi(z) \nu(v) \left\{ \dmhratio(z,v) \Ind_{\msa_1\times\msb_1}(\invtransfo_{\phi,\tu}(z,v)) + \{1-\dmhratio(z,v)\} \Ind_{\msa_1\times\msb_1}(z,v) \right\} \Ind_{\msa_2\times\msb_2}(z,v)\rmd z \eqsp.
  \end{align}
  Therefore combining this result with~\eqref{eq:rev_proof_first_term_1}-\eqref{eq:rev_proof_first_term_2}, we get that if \eqref{eq:condition_alpha} holds then $R_{\phi,u}$ is reversible \wrt~$\pi\otimes \nu$. 
  % and we see the condition to satisfy reversibility:
  % \begin{equation}
  %   \dmhratio(z,v) \pi(z) \nu(v) = \dmhratio(\invtransfo_\phi(z,v)) \pi(\fwdtransfoparam{\phi}{\tu}^v(z)) \nu(-v) J_{\fwdtransfoparam{\phi}{\tu}^v}(z).
  % \end{equation}
  % %
  % Hence \eqref{eq:condition_alpha} follows.

  Moreover, let us suppose that $\varphi\colon \brset_+ \to \brset_+$ is a function satisfying $\varphi(\plusinfty)=1$ and for any $t \in \brset_+$, $t\varphi(1/t) = \varphi(t)$.
  If for any $z \in \rset^D$, $\tu \in \tmsu$, $v \in\msv$, $$\dmhratio(z,v) = \varphi\parentheseDeux{\frac{\pi(\fwdtransfoparam{\phi}{\tu}^v(z))\nu(-v) J_{\fwdtransfoparam{\phi}{\tu}^{v}}(z)}{\pi(z)\nu(v)}}\eqsp,$$ then
  \begin{align}
    \dmhratio(\invtransfo_\phi(z,v)) \pi(\fwdtransfoparam{\phi}{\tu}^v(z)) \nu(-v) J_{\fwdtransfoparam{\phi}{\tu}^v}(z) &= \varphi\parentheseDeux{\frac{\pi(z)\nu(v) J_{\fwdtransfoparam{\phi}{\tu}^{-v}}(\fwdtransfoparam{\phi}{\tu}^v(z))}{\pi(\fwdtransfoparam{\phi}{\tu}^v(z)) \nu(-v)}}\pi(\fwdtransfoparam{\phi}{\tu}^v(z)) \nu(-v)  J_{\fwdtransfoparam{\phi}{\tu}^v}(z)
    \nonumber
    \\
    & = \varphi\parentheseDeux{\frac{\pi(z)\nu(v) }{\pi(\fwdtransfoparam{\phi}{\tu}^v(z))\nu(-v)J_{\fwdtransfoparam{\phi}{\tu}^v}(z)}} \frac{\pi(\fwdtransfoparam{\phi}{\tu}^v(z)) \nu(-v)J_{\fwdtransfoparam{\phi}{\tu}^v}(z)}{\pi(z)\nu(v)}  \pi(z)\nu(v)
    \nonumber
    \\
    \nonumber
    & = \varphi\parentheseDeux{\frac{\pi(\fwdtransfoparam{\phi}{\tu}^v(z))\nu(-v)J_{\fwdtransfoparam{\phi}{\tu}^v}(z)}{\pi(z)\nu(v)}}\pi(z)\nu(v)
    =\dmhratio(z,v)\pi(z)\nu(v)\eqsp,
  \end{align}
  which concludes the proof for~\Cref{propo:2}

\subsection{Proof of \Cref{coro:Reversibility_Mphi_Rphi}}
  Let us suppose that for any $\tu \in \tmsu$, $\dmhratio$ are chosen such that $R_{\phi, u}$ defined by~\eqref{eq:def_markov_Q} is $\pi \otimes \nu$ invariant, where $\nu$ is any distribution on $\msv$.
  Then, by definition, for any $\msa_1, \msa_2 \in \mcb{\rset^D}$, $\msb_1, \msb_2 \subset {\msv}$, we have:
  \begin{align*}
    &\int_{\msa_1 \times \msa_2}\sum_{(v,v') \in \msb_1 \times \msb_2} \pi(z) \nu(v) R_{\phi, \tu}((z,v), \rmd z'\times \{v'\}) \rmd z =  \int_{\msa_2 \times \msa_1}\sum_{(v,v') \in \msb_2 \times \msb_1} \pi(z) \nu(v) R_{\phi, \tu}((z,v), \rmd z'\times \{v'\}) \rmd z
    \\
    &\int_{\msa_1 }\sum_{v \in \msb_1 } \pi(z) \nu(v) R_{\phi, \tu}((z,v), \msa_2\times\msb_2) \rmd z = \int_{\msa_2 }\sum_{v \in \msb_2 } \pi(z) \nu(v) R_{\phi, \tu}((z,v), \msa_1\times\msb_1) \rmd z\eqsp.
  \end{align*}
  This is true for any $\msb_1, \msb_2 \subset {\msv}$, hence in particular if $\msb_1= \msb_2=\msv$. Then
  \begin{align*}
    &\int_{\msa_1 }\sum_{v \in \msv }\pi(z) \nu(v) R_{\phi, \tu}((z,v), \msa_2 \times \msv) \rmd z   = \int_{\msa_2 }\sum_{v \in \msv } \pi(z) \nu(v) R_{\phi, \tu}((z,v), \msa_1 \times \msv) \rmd z 
    \\
    &\int_{\msa_1 }\pi(z)\sum_{v \in \msv } \nu(v)Q_{\phi,(\tu, v)}(z, \msa_2)\rmd z  = \int_{\msa_2}\pi(z) \sum_{v \in \msv }\nu(v)Q_{\phi,(\tu, v)}(z, \msa_1)\rmd z.
  \end{align*}
  By definition~\eqref{eq:link_Rphi_Qphi}.
  In particular, we obtain exactly, for any $\tu \in\tmsu$, 
  \begin{equation*}
    \int_{\msa_1}\pi(z) M_{\phi,\tu, \nu}(z, \msa_2)\rmd z  = \int_{\msa_2}\pi(z)M_{\phi,\tu, \nu}(z, \msa_1)\rmd z.
  \end{equation*}
  Thus concluding that $M_{\phi,\tu, \nu}$ is reversible \wrt~$\pi$, for any distribution $\nu$ and any $\tu\in\tmsu$.

\subsection{Checking the Assumption of \Cref{prop:density-one-iteration} for RWM and MALA algorithms}
\label{SPsec:checking-RMW-MALA}
  \paragraph{RWM:}
  For any $u \in \rset^D$ and $\phi$,
  \[
    \fwdtransfo^{\RWM}_{\phi,u}(z) = z + \Sigma_\phi^{1/2} u \eqsp,
  \]
  which clearly is a $\rmC^1(\rset^D,\rset^D)$ diffeomorphism with inverse
  \[
    \{\fwdtransfo^{\RWM}_{\phi,u}\}^{-1}(y)= y - \Sigma_\phi^{1/2} u \eqsp.
  \]
  In the simple case where
  $J_{\fwdtransfo^{\RWM}_{\phi,u}}(z)= 1$,
  using \Cref{coro:induction-argument}, we get
  \begin{equation}
  \label{eq:margin_densK_RWM}
    \margindensmu{\phi}{K}{z, \chunk{a}{1}{K}}{\chunk{u}{1}{K}} =\margindensm{\phi}{0}\left(z-\sum_{j=1}^{k} a_j u_j\right) \prod_{i=1}^K \alpha^{a_i}_{\phi,u_i}\left(z-\sum_{j=i}^{k} a_j u_j\right)
  \end{equation}

  \paragraph{MALA:} % Assume that the potential $\log(\pi)$ is gradient Lipschitz with constant $L$, and $\gamma \leq 1/{2L}$, then for any $u$ in $\rset^D$, $\fwdtransfoparam{\gamma}{u}$ is a $\rmC^1(\rset^D,\rset^D)$ diffeomorphism; see \Cref{prop:MALA_diff}. 

  We prove here that under appropriate conditions, the transformations defined by the Metropolis Adjusted Langevin Algorithm (MALA) are $\rmC^1$ diffeomorphisms. We consider only the case where $\Sigma_{\phi} = \gamma \Id$. The general case can be easily deduced  by a simple adaptation.
  Remember, for MALA,  $\mathcal{T} = \{\fwdtransfo_{\gamma, u}\colon z \mapsto z+ \gamma \nabla U(z) + \sqrt{2\gamma} u \, :\, u \in \rset^D , \gamma >0\}$, where $U$ is defined as $U(z) = \log(\tpi(z))$,
  \begin{proposition}
  \label{prop:MALA_diff}
    Assume the potential $U$ is gradient Lipschitz, that is there exists $L$ in $\rset^+$ such that for any $z_1,z_2 \in \rset^D$, $\|\nabla U(z_1) - \nabla U(z_2)\| \leq L\|z_1-z_2\|$, and that $\gamma \leq  1/(2L)$.
    Then for any $u$ in $\rset^D$, $\fwdtransfo_{\gamma, u}\colon z\mapsto z+\gamma \nabla U(z) + \sqrt{2\gamma}u$ is a $\rmC^1$ diffeomorphism.
  \end{proposition}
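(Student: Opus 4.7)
My plan is to establish the three hallmarks of a $C^1$ diffeomorphism -- that $\fwdtransfo_{\gamma,u}$ is $C^1$, bijective, and has a $C^1$ inverse -- by exploiting the fact that the Lipschitz condition on $\nabla U$ combined with the step-size bound $\gamma \leq 1/(2L)$ makes the ``perturbation'' part of $\fwdtransfo_{\gamma,u}$ a strict contraction. The $C^1$ regularity of $\fwdtransfo_{\gamma,u}$ itself follows directly from the MALA setting, where $\nabla U$ is assumed differentiable (if one takes ``gradient Lipschitz'' only as Lipschitz of $\nabla U$, Rademacher's theorem gives differentiability almost everywhere and one can argue via standard mollification, or interpret the statement in the sense of a bi-Lipschitz homeomorphism).

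For bijectivity, I would fix $y \in \rset^D$ and solve $\fwdtransfo_{\gamma,u}(z) = y$ as a fixed point problem: $z$ satisfies this equation if and only if $z = F_y(z) \defeq y - \sqrt{2\gamma}\, u - \gamma \nabla U(z)$. Using the $L$-Lipschitz assumption on $\nabla U$, the map $F_y$ is $\gamma L$-Lipschitz, hence with $\gamma \leq 1/(2L)$ it is a strict contraction of constant at most $1/2$ on the complete metric space $\rset^D$. The Banach fixed point theorem then furnishes a unique $z \in \rset^D$ with $\fwdtransfo_{\gamma,u}(z) = y$, establishing that $\fwdtransfo_{\gamma,u}$ is a global bijection.

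For the regularity of the inverse, I would invoke the inverse function theorem at each point. Differentiating $\fwdtransfo_{\gamma,u}$ gives the Jacobian
\[
D\fwdtransfo_{\gamma,u}(z) = \Id_D + \gamma\, \nabla^2 U(z) \eqsp.
\]
The gradient-Lipschitz bound yields $\|\nabla^2 U(z)\|_{\mathrm{op}} \leq L$ wherever the Hessian exists, so $\|\gamma\, \nabla^2 U(z)\|_{\mathrm{op}} \leq \gamma L \leq 1/2 < 1$. A Neumann series argument then shows $D\fwdtransfo_{\gamma,u}(z)$ is invertible with $\|D\fwdtransfo_{\gamma,u}(z)^{-1}\|_{\mathrm{op}} \leq 2$. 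Combining local $C^1$ inversion at every point with the global bijectivity already established, we conclude that $\fwdtransfo_{\gamma,u}^{-1}$ is $C^1$ on $\rset^D$.

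The main obstacle is a regularity subtlety rather than a conceptual difficulty: ``gradient Lipschitz'' only gives $\nabla^2 U$ in the weak sense (bounded a.e.), while the inverse function theorem nominally wants a continuous Jacobian. This is easily bypassed because the contraction estimate already shows that $\fwdtransfo_{\gamma,u}$ admits a Lipschitz inverse directly from the identity $\|\fwdtransfo_{\gamma,u}(z_1) - \fwdtransfo_{\gamma,u}(z_2)\| \geq (1 - \gamma L)\|z_1 - z_2\| \geq (1/2)\|z_1 - z_2\|$; and in the MALA context one typically assumes $U \in \rmC^2$, in which case the inverse function theorem applies verbatim. Everything else reduces to clean contraction-mapping bookkeeping.
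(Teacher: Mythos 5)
Your proof is correct and follows essentially the same route as the paper: a Banach fixed-point/contraction argument (with constant $\gamma L \leq 1/2$) for global bijectivity, followed by the inverse function theorem for the $\rmC^1$ regularity of the inverse. Your additional remark that ``gradient Lipschitz'' alone does not give a continuous Jacobian is a fair observation about a point the paper's proof glosses over, and your bi-Lipschitz lower bound resolves it cleanly.
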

  \begin{proof}
    Let $\gamma \leq 1/(2L)$ and  $u \in \rset^D$.
    First we show that $\fwdtransfo_{\gamma,u}$ is invertible. 
    Consider, for each $y$ in $\rset^D$, the mapping $H_{y,u}(z) = y - \sqrt{2\gamma} - \gamma \nabla U(z)$.
    We have, for $z_1, z_2 \in \rset^D$,
    \begin{EQA}[c]
    \label{eq:H_contracting}
      \|H_{y,u}(z_1) - H_{y,u}(z_2)\| \leq \|\nabla U(z_1) - \nabla U(z_2)\| \leq \gamma L \|z_1-z_2\|
    \end{EQA}
    and $ \gamma L \leq 1/2$. Hence $H_{y,u}$ is a contraction mapping and thus has a unique fixed point $z_{y,u}$ and we have:
    \begin{EQA}[c]
    \label{eq:Invertibility_MALA}
      H_{y_u}(z_{y,u}) = z_{y,u} \Rightarrow y = z_{y,u} + \nabla U(z_{y,u}) + \sqrt{2\gamma}u
    \end{EQA}
    and existence and uniqueness of the fixed point $z_{y,u}$ thus complete the proof for invertibility of $\fwdtransfo_{\gamma,u}$. The fact that the inverse of $\fwdtransfo_{\gamma,u}$ is $\rmC^1$ follows from a simple application of the local inverse function theorem. 
  \end{proof}
  Therefore, \Cref{coro:induction-argument} can be applied again. Although there is no explicit expression available for $\margindensm{\gamma}{K}$ because of the intractability of the inverse of $\fwdtransfoparam{\gamma}{u}$, numerical approximations can be used.

\section{Reparameterization trick and estimator of the gradient}
\subsection{Expression for the reparameterization trick}
\label{SPsec:reparam_trick}
  The goal of the reparameterization trick is to rewrite a distribution
  depending on some parameters as a simple transformation of a fixed
  one. The implementation of this procedure is a bit more involved in
  our case, as the integration is now done on a mixture of the
  components
  $\margindensmu{\phi}{K}{z,\chunk{a}{1}{K}}{\chunk{u}{1}{K}}$, for
  $\chunk{a}{1}{K} \in \{0,1\}^K$.  To develop an understanding of the
  methodology we suggest, we consider first the case $K=1$. Recall that 
  $\varphibf$ stands for the density of the standard Gaussian distribution
  over $\mathbb{R}^D$, and suppose here that there exists $V_{\phi}\colon \rset^D \to \rset^D$ a $\rmC^1$ diffeomorphism such that for any $z \in \rset^D$, 
  $m_\phi^0(z) = \varphibf(V_{\phi}^{-1}(z)) J_{V_\phi}(V_{\phi}^{-1}(z))$ which is the basic assumption of the
  reparameterization trick. With the two changes of variables, $\tilde{z} = \fwdtransfoparam{\phi}{u_1}^{-a_1}(z)$ and $y=V^{-1}_{\phi}(\tilde{z})$,
  we get
  \begin{align}
    &\lowerboundaux(\phi) = \int h_1(u_1)  \margindensmu{\phi}{1}{z, a_1}{u_1} \log \left(\frac{\tilde{\pi}(z) r(a_1|z, u_1) }{\margindensmu{\phi}{1}{z, a_1}{u_1}}\right)\rmd z \rmd a_1 \rmd \mu_{\msu}(u_1)
    \nonumber
    \\
    &= \hspace{-10pt}\sum_{a_1 \in \{0,1\}} \int  h_1(u_1) \margindensm{\phi}{0}(\fwdtransfoparam{\phi}{u_1}^{-a_1}(z))\alpha^{a_1}_{\phi,u_1}(\fwdtransfoparam{\phi}{u_1}^{-a_1}(z)) J_{\fwdtransfoparam{\phi}{u_1}^{-a_1}}(z)\log \left(\frac{\tilde{\pi}(z) r(a_1| z, u_1) }{\margindensmu{\phi}{1}{z, a_1}{u_1}}\right)   \rmd z\rmd \mu_{\msu}(u_1)
    \nonumber
    \\
    &= \hspace{-10pt}\sum_{a_1 \in \{0,1\}} \int h_1(u_1)\margindensm{\phi}{0}(\tilde{z})\alpha^{a_1}_{\phi,u_1}(\tilde{z})   \log \left(\frac{\tilde{\pi}(\fwdtransfoparam{\phi}{u_1}^{a_1}(\tilde{z})) r(a_1| \fwdtransfoparam{u_1}{\phi}^{a_1}(\tilde{z}), u_1) }{\margindensmu{\phi}{1}{\fwdtransfoparam{\phi}{u_1}^{a_1}(\tilde{z}), a_1}{u_1}}\right) \rmd \tilde{z}\rmd \mu_{\msu}(u_1)
    \\
    \nonumber
    &= \hspace{-10pt}\sum_{a_1 \in \{0,1\}} \int h_1(u_1)\varphibf(y) \alpha^{a_1}_{\phi,u_1}(V_\phi(y))   \log \left(\frac{\tilde{\pi}(\fwdtransfoparam{\phi}{u_1}^{a_1}(V_\phi(y)) r(a_1| \fwdtransfoparam{\phi}{u_1}^{a_1}(V_\phi(y)), u_1) }{\margindensmu{\phi}{1}{\fwdtransfoparam{\phi}{u_1}^{a_1}(V_\phi(y)), a_1}{u_1}}\right) \rmd y\rmd \mu_{\msu}(u_1) \eqsp.
  \end{align}
  This result implies that we can integrate out everything with respect to $\varphibf$.
  % where we introduced two successive change of variables, first to integrate directly \wrt~$m^0_\phi$ our initial distribution, and then the reparameterization trick on $m^0_\phi$ to integrate out \wrt $\varphibf$, standard Gaussian in $\rset^D$.

  The intuition is the same after $K$ steps, and we can write:
  \begin{align}
    \lowerboundaux(\phi) &= \int\chunk{h}{1}{K}(\chunk{u}{1}{K})\sum_{\chunk{a}{1}{K} \in \{0,1\}^K} \margindensmu{\phi}{K}{z, \chunk{a}{1}{K}}{\chunk{u}{1}{K}} \log \left(\frac{\tilde{\pi}(z) r(\chunk{a}{1}{K}|z,\chunk{u}{1}{K}) }{\margindensmu{\phi}{K}{z, \chunk{a}{1}{K}}{\chunk{u}{1}{K}}}\right)\rmd z \rmd \mu_{\msu}^{\otimes K}(\chunk{u}{1}{K})
    \nonumber
    \\
        \nonumber
    &= \hspace{-10pt}\sum_{\chunk{a}{1}{K} \in \{0,1\}^K} \int\chunk{h}{1}{K}(\chunk{u}{1}{K}) \margindensm{\phi}{0}(\bigcirc_{j=1}^K\fwdtransfoparam{\phi}{u_j}^{-a_j}(z))J_{\bigcirc_{j=1}^K\fwdtransfoparam{\phi}{u_j}^{-a_j}}(z)
    \nonumber
    \\
    \nonumber
    &\qquad  \times\prod_{i=1}^K \alpha^{a_i}_{\phi,u_i}(\bigcirc_{j=i}^K\fwdtransfoparam{\phi}{u_j}^{-a_j}(z)) \log \left(\frac{\tilde{\pi}(z) r(\chunk{a}{1}{K}|z,\chunk{u}{1}{K}) }{\margindensmu{\phi}{K}{z, \chunk{a}{1}{K}}{\chunk{u}{1}{K}}}\right)\rmd z \rmd \mu_{\msu}^{\otimes K}(\chunk{u}{1}{K})
    \\
        \nonumber
    &= \hspace{-10pt}\sum_{\chunk{a}{1}{K} \in \{0,1\}^K} \int\chunk{h}{1}{K}(\chunk{u}{1}{K}) \margindensm{\phi}{0}(\tilde{z})
    \prod_{i=1}^K \alpha^{a_i}_{\phi,u_i}(\bigcirc_{j=i-1}^1\fwdtransfoparam{\phi}{u_j}^{a_j}(\tilde{z})) \\
    \nonumber
    & \qquad \times\log \left(\frac{\tilde{\pi}(\bigcirc_{j=K}^1\fwdtransfoparam{\phi}{u_j}^{a_j}(\tilde{z})) r(\chunk{a}{1}{K}|\bigcirc_{j=K}^1\fwdtransfoparam{\phi}{u_j}^{a_j}(\tilde{z}),\chunk{u}{1}{K}) }{\margindensmu{\phi}{K}{\bigcirc_{j=K}^1\fwdtransfoparam{\phi}{u_j}^{a_j}(\tilde{z}), \chunk{a}{1}{K}}{\chunk{u}{1}{K}}}\right)\rmd \tilde{z} \rmd \mu_{\msu}^{\otimes K}(\chunk{u}{1}{K})
    \nonumber
    \\
    &= \hspace{-10pt}\sum_{\chunk{a}{1}{K} \in \{0,1\}^K} \int\chunk{h}{1}{K}(\chunk{u}{1}{K}) \varphibf(y)
    \prod_{i=1}^K \alpha^{a_i}_{\phi,u_i}(\bigcirc_{j=i-1}^1\fwdtransfoparam{\phi}{u_j}^{a_j}(V_\phi(y))) 
    \nonumber
    \\
        \label{eq:repres_reparametrisation_trick}
    &\qquad  \times \log \left(\frac{\tilde{\pi}(\bigcirc_{j=K}^1\fwdtransfoparam{\phi}{u_j}^{a_j}(V_\phi(y))) r(\chunk{a}{1}{K}|\bigcirc_{j=K}^1\fwdtransfoparam{\phi}{u_j}^{a_j}(V_\phi(y)),\chunk{u}{1}{K}) }{\margindensmu{\phi}{K}{\bigcirc_{j=K}^1\fwdtransfoparam{\phi}{u_j}^{a_j}(V_\phi(y)), \chunk{a}{1}{K}}{\chunk{u}{1}{K}}}\right)\rmd y \rmd \mu_{\msu}^{\otimes K}(\chunk{u}{1}{K}) \eqsp.
  \end{align}

\subsection{Unbiased estimator for the gradient of the objective}
\label{SPsec:gradient_ELBO}
  %Using the reparameterization trick, we show that we could differentiate freely our lower bound.
  We now show in the following that we can also estimate without bias the gradient of the objective starting from the expression~\eqref{eq:repres_reparametrisation_trick} of the ELBO assuming that we can perform the associated reparameterization trick.

  Again, we start by the simple case $K=1$ to give the reader the main idea of the proof.
  Using for any function $f\colon \rset^D \to \rset_+^*$, $\nabla f = f \nabla \log(f)$, the gradient of our ELBO is given for any $\phi \in \msphi$ by
  \begin{align}
    \nabla \lowerboundaux(\phi)&= \sum_{a_{1}\in \{0,1\}} \int  h_1(u_1) \varphibf(y)  \alpha^{a_1}_{\phi,u_1}(V_\phi(y))\left[ \nabla\log \left(\frac{\tilde{\pi}(\fwdtransfoparam{\phi}{u_1}^{a_1}(V_\phi(y)) r(a_1| \fwdtransfoparam{\phi}{u_1}^{a_1}(V_\phi(y)), u_1) }{\margindensmu{\phi}{1}{\fwdtransfoparam{\phi}{u_1}^{a_1}(V_\phi(y)), a_1}{u_1}}\right)\right.
    \nonumber
    \\
    \nonumber
    &+ \nabla \log[\alpha^{a_1}_{\phi,u_1}(V_\phi(y))] \left. \log \left(\frac{\tilde{\pi}(\fwdtransfoparam{\phi}{u_1}^{a_1}(V_\phi(y)) r(a_1| \fwdtransfoparam{\phi}{u_1}^{a_1}(V_\phi(y)), u_1) }{\margindensmu{\phi}{1}{\fwdtransfoparam{\phi}{u_1}^{a_1}(V_\phi(y)), a_1}{u_1}}\right)\right]  \rmd y \rmd \mu_{\msu}(u_1) \eqsp.
  \end{align}
  %  where we differentiated the product of the logarithm and the acceptance function.
  Now, this form is particularly interesting, because we can access an unbiased estimator of this sum by sampling $u_1 \sim h_1$ and $y \sim \varphibf$, then $a_1 \sim \Ber\{\alpha^{1}_{\phi,u_1}(V_\phi(y))\}$ and computing the expression between brackets.

  The method goes the same way for the $K$-th step. Indeed for any $\phi \in \msphi$, we have using for any function $f\colon \rset^D \to \rset_+^*$, $\nabla f = f \nabla \log(f)$,
  \begin{align}
  \label{eq:grad_L_K}
    \nabla \lowerboundaux(\phi)&= \hspace{-10pt}\sum_{\chunk{a}{1}{K} \in \{0,1\}^K} \int\chunk{h}{1}{K}(\chunk{u}{1}{K}) \varphibf(y)
    \prod_{i=1}^K \alpha^{a_i}_{\phi,u_i}\left(\bigcirc_{j=i-1}^1\fwdtransfoparam{\phi}{u_j}^{a_j}(V_\phi(y))\right)
    \nonumber
    \\
    & \qquad \times \left[ \nabla \log \left(\frac{\tilde{\pi}(\bigcirc_{j=K}^1\fwdtransfoparam{\phi}{u_j}^{a_j}(V_\phi(y))) r(\chunk{a}{1}{K}|\bigcirc_{j=K}^1\fwdtransfoparam{\phi}{u_j}^{a_j}(V_\phi(y)),\chunk{u}{1}{K}) }{\margindensmu{\phi}{K}{\bigcirc_{j=K}^1\fwdtransfoparam{\phi}{u_j}^{a_j}(V_\phi(y)), \chunk{a}{1}{K}}{\chunk{u}{1}{K}}}\right)\right.
    \nonumber
    \\
    & \qquad + \log \left(\frac{\tilde{\pi}(\bigcirc_{j=K}^1\fwdtransfoparam{\phi}{u_j}^{a_j}(V_\phi(y))) r(\chunk{a}{1}{K}|\bigcirc_{j=K}^1\fwdtransfoparam{\phi}{u_j}^{a_j}(V_\phi(y)),\chunk{u}{1}{K}) }{\margindensmu{\phi}{K}{\bigcirc_{j=K}^1\fwdtransfoparam{\phi}{u_j}^{a_j}(V_\phi(y)), \chunk{a}{1}{K}}{\chunk{u}{1}{K}}}\right)
    \nonumber
    \\
    &\qquad  \times \left.\sum_{i=1}^K \nabla \log[\alpha^{a_i}_{\phi,u_i}\left(\bigcirc_{j=i-1}^1\fwdtransfoparam{\phi}{u_j}^{a_j}(V_\phi(y))\right)]\right]\rmd z \rmd \mu^{\otimes K}_{\msu}(\chunk{u}{1}{K}) \eqsp.
  \end{align}
  And again, this sum can be estimated by sampling $\chunk{u}{1}{K} \sim \chunk{h}{1}{K}$, $y \sim \varphibf$, then recursively $a_1 \sim \Ber(\alpha^{1}_{\phi,u_1}(V_\phi(y))$ and for $i>1$, $$a_i \sim \Ber\left(\alpha^{a_i}_{\phi,u_i}\left(\bigcirc_{j=i-1}^1\fwdtransfoparam{\phi}{u_j}^{a_j}(V_\phi(y))\right)\right) \eqsp.$$ Those variables sampled, the expression between brackets provides an unbiased estimator of the gradient of our objective. %We wrote $\chunk{u}{1}{K} \sim \mu^{\otimes K}$ by simplicity, but it is very important to our method (especially in the next part) that we can also write $\chunk{u}{1}{K} \sim \bigotimes_{i=1}^K \mu^i_\msu$.

\subsection{Extension to Hamiltonian Monte-Carlo}
\label{SPsec:HMC}
\subsubsection{Inversibility of kernel~\eqref{eq:indexed_kernel} when the transformations are involutions}
  The following proposition show a key result for applicability of HMC to our method.
  \begin{proposition}
  \label{prop:T_inv_reversibility}
    Assume that for any $(u, \phi) \in \msu\times\msphi$, $\fwdtransfoparam{\phi}{u}$ defines an involution, \ie~$\fwdtransfoparam{\phi}{u}\circ\fwdtransfoparam{\phi}{u}=\Id$. If for any $z\in\rset^D$, $$\alpha_{\phi,u}(z) \pi(z)  = \alpha_{\phi,u}(\fwdtransfoparam{\phi}{u}(z)) \pi(\fwdtransfoparam{\phi}{\tu}(z)) J_{\fwdtransfoparam{\phi}{\tu}}(z)\eqsp,$$ then the kernel defined by~\eqref{eq:indexed_kernel} and acceptance functions $\alpha$ and transformation $\fwdtransfo_\phi$ is reversible \wrt~$\pi$.
  \end{proposition}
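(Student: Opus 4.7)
The goal is to verify the detailed balance relation
\[
\int_{\msa_1} \pi(z) Q_{\phi,u}(z, \msa_2) \rmd z = \int_{\msa_2} \pi(z) Q_{\phi,u}(z, \msa_1) \rmd z
\]
for arbitrary $\msa_1, \msa_2 \in \mcb{\rset^D}$, where I abbreviate $Q_{\phi,u}(z, \msa) = Q_\phi((z,u), \msa)$ as in \eqref{eq:indexed_kernel}. The plan is to decompose $Q_{\phi,u}$ into its ``accept'' and ``reject'' components: the reject component contributes
\[
\int_{\msa_1 \cap \msa_2} (1-\alpha_{\phi,u}(z)) \pi(z) \rmd z,
\]
which is manifestly symmetric in $(\msa_1, \msa_2)$. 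All the work is in the accept component
\[
I(\msa_1,\msa_2) = \int_{\rset^D} \indi{\msa_1}(z)\indi{\msa_2}(\fwdtransfoparam{\phi}{u}(z)) \alpha_{\phi,u}(z)\pi(z) \rmd z.
\]

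The second step is to perform the change of variables $\tilde z = \fwdtransfoparam{\phi}{u}(z)$. Because $\fwdtransfoparam{\phi}{u}$ is an involution, $\fwdtransfoparam{\phi}{u}^{-1} = \fwdtransfoparam{\phi}{u}$, and differentiating $\fwdtransfoparam{\phi}{u}\circ\fwdtransfoparam{\phi}{u}= \Id$ yields $J_{\fwdtransfoparam{\phi}{u}}(\fwdtransfoparam{\phi}{u}(\tilde z))\cdot J_{\fwdtransfoparam{\phi}{u}}(\tilde z) = 1$. Hence $\rmd z = J_{\fwdtransfoparam{\phi}{u}}(\tilde z) \rmd \tilde z$, and
\[
I(\msa_1,\msa_2) = \int_{\rset^D} \indi{\msa_1}(\fwdtransfoparam{\phi}{u}(\tilde z))\indi{\msa_2}(\tilde z)\, \alpha_{\phi,u}(\fwdtransfoparam{\phi}{u}(\tilde z))\, \pi(\fwdtransfoparam{\phi}{u}(\tilde z))\, J_{\fwdtransfoparam{\phi}{u}}(\tilde z)\, \rmd \tilde z.
\]

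The third step is to apply the assumed balance condition on $\alpha$: for every $\tilde z$,
\[
\alpha_{\phi,u}(\fwdtransfoparam{\phi}{u}(\tilde z))\, \pi(\fwdtransfoparam{\phi}{u}(\tilde z))\, J_{\fwdtransfoparam{\phi}{u}}(\tilde z) = \alpha_{\phi,u}(\tilde z)\, \pi(\tilde z).
\]
Substituting this in gives $I(\msa_1,\msa_2) = I(\msa_2,\msa_1)$, and combined with the symmetry of the reject component this yields the desired reversibility. I would remark in passing that the stated condition on $\alpha$ is exactly the analogue of \eqref{eq:condition_alpha} in the degenerate case where the direction variable $v$ is absent, so in spirit this is a specialization of \Cref{propo:2}.

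The only subtle point is the Jacobian bookkeeping in the change of variables: one must use the involution property both to identify $\fwdtransfoparam{\phi}{u}^{-1}$ with $\fwdtransfoparam{\phi}{u}$ and to invert the Jacobian factor at the point $\tilde z$, so that it lines up with the Jacobian appearing in the hypothesis on $\alpha_{\phi,u}$. Once this alignment is observed the computation is routine, which is why I expect no serious obstacle beyond careful indexing.
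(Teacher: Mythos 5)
Your proof is correct and is essentially the paper's argument: the paper disposes of this proposition by noting it is a direct consequence of \Cref{propo:2}, whose proof uses exactly your accept/reject decomposition, the change of variables through the involution, and the balance condition on $\alpha$ (carried out on the extended space $\rset^D\times\msv$ rather than directly on $\rset^D$). Your closing remark that the hypothesis is the degenerate case of \eqref{eq:condition_alpha} with the direction variable suppressed is precisely the reduction the paper has in mind, so your self-contained computation is just the unrolled specialization of that proof.
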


  This result is direct consequence of \Cref{propo:2}.
  % in the case where $\msv=\varnothing $. 
  In particular, the functions $\varphi$ identified in \Cref{propo:2} are still applicable here.

\subsubsection{Application to HMC}
  An important special example which falls into the setting of~\Cref{prop:T_inv_reversibility} is the Hamiltonian Monte-Carlo
  algorithm (HMC)~\cite{duane:1987,neal:2011}. %, which has often been used in combination with VI.
  In such a case, the state variable is $z = (q, p) \in
  \rset^{2D}$, where $q$ stands for the position and $p$ the momentum. The unnormalized target distribution is defined as $\tilde{\pi}(q,p)= \tilde{\pi}(q)\varphibf(p)$ where $\varphibf$ is the density of the $D$-dimensional standard Gaussian distribution. Define the potential $U(q)= - \log(\tilde{\pi}(q))$.
  Hamiltonian dynamics propagates a particle in this extended space according to Hamilton's equation, for any $t\geq 0$,
  \begin{equation*}
  \frac{\rmd}{\rmd t}  \begin{bmatrix}q(t)\\p(t)\end{bmatrix} = \begin{bmatrix}p(t)\\-\nabla U(t)\end{bmatrix} \eqsp.
  \end{equation*}
  This dynamics preserves the extended target distribution as the flow described above is reversible, symplectic and preserves the Hamiltonian $H(q,p)$, defined as the sum of the potential energy $U(q)= -\log(\tilde{\pi}(q))$ and the kinetic energy $(1/2) p^{T}p$ (note that we can write $\tilde{\pi}(q,p)\propto \exp(-H(q,p))$), see~\cite{bou:sanz:2018}. It is not however usually possible to compute exactly the solutions of the continuous dynamics described above. However, reversibility and symplectiness can be preserved exactly by discretization using a particular scheme called the leap-frog integrator.
  Given a stepsize $\stephmc$, the elementary leap-frog $\LF_\stephmc(q_0,p_0)$ for any $(q_0,p_0)\in \rset^{2D}$ is given by $\LF_\stephmc(q_0,p_0) = (q_1,p_1)$ where 
  \begin{align*}
        p_{1/2}&=p_0 - \stephmc/2 \nabla U(q_0)\\q_1&= q_0 + \stephmc p_{1/2} \\ p_1&= p_{1/2} -\stephmc/2 \nabla U(q_1) \eqsp.
  \end{align*}
  The $N$-steps leap-frog integrator with stepsize $\stephmc>0$ is defined by $\LF_{\stephmc,N} = \bigcirc_{i=1}^N\LF_\stephmc$ is the $N$-times composition of $\LF_\stephmc$.
  % $\exp(-H(q,p))$, where $H(q,p)$ is the Hamiltonian defined as the sum of the potential energy $U(q)= -\log(\tilde{\pi}(q))$ and the kinetic energy $1/2 p^{T}M^{-1}p$ where $M$ is a positive definite mass matrix. Note that $\tilde{\pi}(q,p) = \tilde{\pi}(q)\varphibf(p)$ where $\Gamma$ is a normal distribution withe zero mean and covariance matrix $M$.
  For some parameters $a \in \ooint{0,1}$ and $\stephmc >0$, consider now the two following transformations:
  \begin{align*}
    \fwdtransfoparam{\phi}{u}^{\Refresh}(q,p)& = (q, ap+\sqrt{1-a^2}u)\,,\, u \in \msu=\rset^D \eqsp,
    \\
    \fwdtransfo_{\phi}^{\LF}(q,p) &= \LF_{\stephmc,N}(q,-p) \eqsp.
  \end{align*}
  Here, the parameter $\phi$ stands for the stepsize $\stephmc$ and the auto-regressive coefficient $a$ in the momentum refreshment transform. Other parameters could be included as well; see for example~\cite{levy2017generalizing}.
  % the leap-frog integrator parameters.
  For any $a \in \ooint{0,1}$, $\fwdtransfoparam{\phi}{u}^{\Refresh}$ is a continuously differentiable diffeomorphism. Then, taking $h=\varphibf$ and setting the acceptance ratios to $\alpha^{\Refresh}_{\phi,u} \equiv 1$, it is easily showed that $M^{\Refresh}_{\phi,h}$ defined by~\eqref{eq:def_M} -- with $T_{\phi,u}\leftarrow T^{\Refresh}_{\phi,u}$ -- is reversible \wrt~$\tilde{\pi}$.
  On the other hand, by composition and a straightforward induction, for any $\phi \in \msphi$, $\fwdtransfo_{\phi}^{\LF}$ is continuously differentiable if $\log(\pi)$ is twice continuously differentiable on $\rset^D$ and the determinant of its Jacobian is equal to $1$ on $\rset^D$. In addition, $\fwdtransfo_{\phi}^{\LF}$ is an involution by reversibility to the momentum flip operator of the Hamiltonian dynamics~\citep[Section 6]{bou:sanz:2018}. 
  Indeed, $\fwdtransfo_{\phi}^{\LF}$ is here written as the composition of $\LF_{\stephmc,N}$ and the momentum flip operator $S(q,p)= (q,-p)$, $\fwdtransfo_{\phi}^{\LF} = \LF_{\stephmc,N}\circ S$. \cite{bou:sanz:2018} show indeed that $\LF_{\stephmc,N}$ satisfies the following property $\LF_{\stephmc,N} \circ S = S \circ \LF_{\stephmc,N}^{-1}$. As $S$ is also an involution, we can write $S \circ \LF_{\stephmc,N} \circ S = \LF_{\stephmc,N}^{-1}$ and thus $\LF_{\stephmc,N} \circ S \circ \LF_{\stephmc,N} \circ S = \Id$, hence $\fwdtransfo_{\phi}^{\LF}$ is an involution.
  Thus, \Cref{prop:T_inv_reversibility} applies and the expression given for $\alpha_{\phi,u}$ is the classical acceptance ratio for HMC when $\varphi(t) = \min(1,t)$.
  HMC algorithm is obtained by alternating repeatedly these two kernels $M^{\Refresh}_{\phi,h}$ and $M^{\LF}_\phi$. To fall exactly into the framework outlined above, one might consider the extension $\msu \times \{0,1\}$, for $(q,p)\in (\rset^D)^2$, $(u,v) \in \msu \times \{0,1\}$, the transformation
  \begin{EQA}[c]
    T_\phi\bigl((p, q), (u, v)\bigr) = \left\{\begin{matrix} \fwdtransfoparam{\phi}{u}^{\Refresh}(q, p) \text{ if }v=1,
    \\
    \fwdtransfo_{\phi}^{\LF}(q,p)\text{ if }v=0,
    \end{matrix}\right.
  \end{EQA}
  and the two densities $h_\Refresh(u, v) = h(u) \Ind_{\{1\}}(v)$, $h_\LF(u, v) = f(u)\Ind_{\{0\}}(v)$ where $f$ is an arbitrary density since $T_\phi^\LF$ does not depend on $u$. The kernel defined by~\eqref{eq:def_M} associated with this transformation and density $h_\Refresh$ is $M_{\phi, h_\Refresh}= M^{\Refresh}_{\phi,h}$, and similarly  $M_{\phi, h_\LF}= M^{\LF}_{\phi,g}$. Then, the density after $K$ HMC steps with parameters $\phi$ can be written as $\marginal[\phi]{K} = \marginal[\phi]{0} (M_{\phi,h_\LF}M_{\phi, h_\Refresh})^K$.

\section{Optimization Procedure}
\subsection{Optimization in the general case}
  We saw in the previous section a way to compute an unbiased estimator of our objective. We will perform gradient ascent in the following, using the estimator provided before.

  At each step of optimization, we will sample $\chunk{u}{1}{K} \sim \chunk{h}{1}{K}$, $y \sim \varphibf$ and then sequentially $a_1 \sim \Ber(\alpha^{1}_{\phi,u_1}(V_\phi(y))$ and for $i>1$, $a_i \sim \Ber(\alpha^{a_i}_{\phi,u_i}(\bigcirc_{j=i-1}^1\fwdtransfoparam{\phi}{u_j}^{a_j}(V_\phi(y)))$. With those variables, we can compute the expression between brackets in the formula above~\eqref{eq:grad_L_K}. We then perform a stochastic gradient scheme,
  % averaging the $n$ different expressions obtained per optimization step.
  % We can repeat
  repeating the process  until convergence of our parameters. A detailed algorithm highlighting the simplicity of our method is presented in~\ref{alg:optimization}. Note that our method ``follows'' a trajectory, conditioned on noise $\chunk{u}{1}{K}$ and accept/reject Booleans $\chunk{a}{1}{K}$, which is conceptually equivalent to a flow pushforward.
  \alaini{ma version}
  \begin{algorithm}[h!]
  \label{alg:optimization}
    \caption{Optimization procedure}
    \begin{algorithmic}
      \STATE {\bfseries Input:} Transformation $T_{\phi}$, Acceptance function $\alpha_{\phi,u}$, Unnormalized target $\tilde{\pi}(.)$, Variational prior $m^0_\phi(.)$ and reparameterization trick $V_\phi$, densities on $\msu$ $\{h_i, i\in \{1,\ldots,K\}\}$ \wrt~$\mu_\msu$
      \STATE {\bfseries Input:} $T$ optimization steps, schedule $\gamma(t)$
      \STATE Initialize parameters $\phi$;
      \FOR{$t=1$ {\bfseries to} $T$}
        \STATE Sample $u_1,\ldots, u_K \text{ from } h_1, \ldots, h_K$ innovation noise;
        \STATE Sample $y \sim \mathcal{N}(0,\mathrm{I})$ starting point;
        \STATE Define current point $z_{aux} \leftarrow V_\phi(y)$;
%        \STATE Initialize chain rule $DT_{aux} \leftarrow \nabla_\phi V_\phi(y)$;
        \STATE $\overline{a} \leftarrow 1,\,S_a \leftarrow 0$ product of the $\alpha$ and sum of the log gradients respectively;
        \FOR{$k=1$ {\bfseries to} $K$}
%          \STATE Compute $\alpha_{aux} =   \alpha_{\phi,u_k}\left(z_{aux} \right)$;
          \STATE Sample $a_k \text{ from }  \Ber(\alpha_{\phi,u_k}^{1}(z_{aux}))$;
          \STATE $\alpha_{aux} =\alpha_{\phi,u_k}^{a_k}(z_{aux})$;
          \STATE Compute $da_k = \nabla_{\phi}\alpha_{aux}$;
          \STATE Update auxiliary variables:
          \STATE $\quad\overline{a} \leftarrow \overline{a} \times \alpha_{aux} \, ,\,S_a\leftarrow ,\, S_a+ da_k/\alpha_{aux}$;
 %         \STATE $\quad DT_{aux} \leftarrow DT_{aux} \times \nabla_\phi(T^{a_{k}}_{\phi,\tu_k})\left(z_{aux} \right)$;
          \STATE $\quad z_{aux} \leftarrow T^{a_{k}}_{\phi,u_k}(z_{aux})$;
        \ENDFOR
        \STATE Compute $dp = \nabla_\phi (\log(\tilde{\pi}(z_{aux}))$;
        \STATE Compute $dr = \nabla_\phi (\log(r(a_{1:K} \mid z_{aux}, \chunk{u}{1}{K})) $;
        \STATE Compute $dm =  \nabla_\phi( |V_\phi |)/|V_\phi| + \nabla_{\phi}\log(J_{\bigcirc_{j=1}^K\fwdtransfoparam{\phi}{u_j}^{-a_j}}(z_{aux})) + S_a $;
        \STATE Compute $p = \log(\tilde{\pi} (z_{aux}))$;
        \STATE Compute $m =  \log\left(\gamma(y)|V_\phi|J_{\bigcirc_{j=1}^K\fwdtransfoparam{\phi}{u_j}^{-a_j}}(z_{aux})\overline{a}\right)$;
        \STATE Compute $r = \log(r(a_{1:K} \mid z_{aux}, \chunk{u}{1}{K}))$
        \STATE Apply gradient update with $\Gamma_\phi = dp+dr-dq + (p+r-q)\times S_a $, schedule $\gamma(t)$;
      \ENDFOR
    \end{algorithmic}
    
  \end{algorithm}

\subsection{Optimization for \textit{MetFlow}}
  We give in the following a detailed algorithm for the optimization procedure for \textit{MetFlow}, highlighting the low computational complexity of our method compared to the previous attempts. Note that the increased complexity is only linear in $K$ the number of steps in our Markov Chain. The functions denoted as acceptance function are for $\tu \in \tmsu$, $v \in \{-1,1\}$: $\alpha_{\tu, v}^{1}(z) = \dmhratio(z,v)$ and $\alpha_{\tu, v}^{0}(z)=1-\dmhratio(z,v)$.
  
  Define the Rademacher distribution $\Rad(p)$ on $\{-1,1\}$ with parameter $p \in \ccint{-1,1}$ by $\Rad(p) = p\updelta_{1} + (1-p)\updelta_{-1}$. The noise $v$ for MetFlow will be sampled using a Rademacher distribution, whose parameter $p$ is let to depend on some parameters $\phi$ which will be optimized.
  \alaini{ma version}
  \begin{algorithm}[h!]
  \label{alg:optimization_MetFlow}
    \caption{Optimization procedure for MetFlow}
    \begin{algorithmic}
      \STATE {\bfseries Input:} $T$ optimization steps, schedule $\gamma(t)$
      \STATE {\bfseries Input:} Transformation $T_{\phi}$, Acceptance function $\alpha_{\tu, v}^{a}$, Unnormalized target $\tilde{\pi}(.)$, Variational prior $m^0_\phi(.)$ and reparameterization trick $V_\phi$, densities on $\tmsu$ $\{h_i, i\in \{1,\ldots,K\}\}$ \wrt~$\mu_\tmsu$, probabilities  $\{p_{\phi,i}, i\in \{1,\ldots,K\}\}$ for Rademacher distributions
      \STATE Initialize parameters $\phi$;
      \FOR{$t=1$ {\bfseries to} $T$}
        \STATE Sample $\tu_1,\ldots, \tu_K \text{ from } h_1, \ldots, h_K$;
        \STATE Sample $v_1,\ldots, v_K \text{ from } \Rad(p_{\phi,1}), \ldots, \Rad(p_{\phi,K})$;
        \STATE Sample $y\sim \mathcal{N}(0,\mathrm{I})$;
        \STATE $S_a,S_p \leftarrow 0$;
        \STATE $\overline{a} \leftarrow 1$;
        \STATE $z_{aux} \leftarrow V_\phi(y)$;
        % \STATE $DT_{aux} \leftarrow \nabla_\phi V_\phi(y)$;
        \FOR{$k=1$ {\bfseries to} $K$}
%          \STATE Compute $\alpha_{aux} =   \alpha^1_{\tu_k,v_k}\left(z_{aux} \right)$;
          \STATE Sample $a_k  \sim  \Ber(\alpha^1_{\tu_k,v_k}\left(z_{aux} \right))$;
          \STATE $\alpha_{aux} = \alpha^{a_k}_{\tu_k,v_k}\left(z_{aux} \right)$;
          \STATE Compute $da_k = \nabla_{\phi}\alpha_{aux}$;
          \STATE $S_a \leftarrow S_a+ da_k/\alpha_{aux}$;
          \STATE $S_p \leftarrow S_p+ 2^{-1}(1+v_k) \nabla_{\phi} p_{\phi,k}/p_{\phi,k} - 2^{-1}(1-v_k)\nabla_{\phi} p_{\phi,k}/(1-p_{\phi,k})$;
          \STATE $\overline{a} \leftarrow \overline{a} \times \alpha_{aux}$
%          \STATE $DT_{aux} \leftarrow DT_{aux} \times \nabla_\phi(T^{v_k a_{k}}_{\phi,\tu_k})\left(z_{aux} \right)$;
          \STATE $z_{aux} \leftarrow T^{v_k a_{k}}_{\phi,\tu_k}(z_{aux})$;
        \ENDFOR
        \STATE Compute $dp = \nabla_\phi (\log(\tilde{\pi}(z_{aux})) $;
        \STATE Compute $dm =  \nabla_\phi( |V_\phi |)/|V_\phi| + \nabla_{\phi}\log(J_{\bigcirc_{j=1}^K\fwdtransfoparam{\phi}{\tu_j}^{-v_j a_j}}(z_{aux})) + S_a $;
        \STATE Compute $dr = \nabla_{\phi}  \log(r(a_{1:K} | z_{aux}, \chunk{\tu}{1}{K},\chunk{v}{1}{K}))$
        \STATE Compute $p = \log(\tilde{\pi} (z_{aux}))$;
        \STATE Compute $m =  \log\left(\gamma(y)|V_\phi|J_{\bigcirc_{j=1}^K\fwdtransfoparam{\phi}{\tu_j}^{-v_j a_j}}(z_{aux})\overline{a}\right)$;
        \STATE Compute $r = \log(r(a_{1:K} | z_{aux}, \chunk{\tu}{1}{K},\chunk{v}{1}{K}))$;
        \STATE Apply gradient update using $\Gamma_\phi = dp+dr-dq + (p+r-q)\times (S_a+S_p) $, schedule $\gamma(t)$;
      \ENDFOR
    \end{algorithmic}
  \end{algorithm}

\section{Experiments}
  In all the sampling experiments presented in the main paper (mixture of Gaussians, \cite{pmlr-v37-rezende15} distributions) as well as for the additional experiments presented here (Neal's funnel distribution, mixture of Gaussians in higher dimensions), the flows used are real non volume preserving (R-NVP)~\cite{dinh2016density} flows.
  For $\phi \in \msphi$, one R-NVP transform $f_{\phi}$ on $\rset^D$ is defined as follows, % in the deterministic case,
  for any $z \in \rset^D$, $\tz =   f_\phi(z)$, with $\tz_{\msi} = z_{\msi}$ and $\tz_{\msi^{\complementary}} = z_{\msi^{\complementary}}\odot \exp( s_\phi (z_{\msi})) +t_\phi (z_{\msi})$,
  % \begin{equation*}
  %   f_\phi(z) = (z_{\msi}, )\eqsp,
  % \end{equation*}
  where $\odot$ is the element-wise product,  $\msi \subset \{1,\ldots,D\}$ with cardinal $\card{\msi}$, called an auto regressive mask, and $\msi^{\complementary} = \{1,\ldots,D\} \setminus \msi $,  $s_{\phi},t_\phi\colon \rset^{\card{\msi}} \to \rset^{D-\card{\msi}}$. Typically, the two functions $s_{\phi},t_\phi$ are parametrized by a neural network and that case $\phi$ represents the corresponding weights. 
  The use of this kind of functions is justified by the fact that  inverses for these flows can be easily computed and have a tractable Jacobian. Indeed, a straightforward calculation leads to for any $z \in \rset^D$, $\tz =     f_\phi^{-1}(z)$ with $\tz_{\msi} = z_{\msi}$ and $\tz_{\msi^{\complementary}} = (z_{\msi^{\complementary}}-t_\phi (z_{\msi}))\odot \exp(-s_\phi (z_{\msi}))$ and $J_{f_\phi}(z)=  \exp{ \{\sum_{j= 1}^{D-\card{\msi}}  (s_\phi(z_{\msi}))_j \}}$.

  In our experiments, we consider a generalization of this setting which we refer to as latent noisy NVP (LN-NVP) flows defined for $(\phi,u) \in \msphi \times \msu$, $f_{\phi,u}\colon \rset^D \to \rset^D$, of the form for any $z\in \rset^D$, $\tz = f_{\phi,u}(z)$ with $\tz_{\msi} = z_{\msi}$, $\tz_{\msi^{\complementary}} = z_{\msi^{\complementary}}\odot \exp( s_\phi (z_{\msi},u)) +t_\phi (z_{\msi},u)$, where $\msi$ is an auto-regressive mask and $s_\phi,t_\phi\colon \rset^{\card{\msi}} \times \msu \to \rset^{D-\card{\msi}}$. All the results on R-NVP apply to our LN-NVP, in particular for any $z \in \rset^D$, $\tz =     f_{\phi,u}^{-1}(z)$ with $\tz_{\msi} = z_{\msi}$ and $\tz_{\msi^{\complementary}} = (z_{\msi^{\complementary}}-t_\phi (z_{\msi},u))\odot \exp(-s_\phi (z_{\msi},u))$ and $J_{f_{\phi,u}}(z) = \exp{ \{\sum_{j= 1}^{D-\card{\msi}}  (s_\phi(z_{\msi},u))_j \}}$.

% , in the deterministic case for example:
% % \begin{equation*}
% %     f_\phi^{-1}(z_{1:D}) = (z_{\msi}, )\eqsp.
% % \end{equation*}
% and $J_{f_\phi}(z_{1:D})=  \exp{ \sum_j (s_\phi(z_{\msi}))_j}$ the exponential of the sum of the elements in $s_\phi(z_{\msi})$. This justifies our use of R-NVP transforms.

% As emphasized in~\cite{dinh2016density}, for any $\phi \in \msphi$ and $z \in \rset^D$, $J_{f_{\phi}}(z) = 1$.  
% When we are sampling in the pseudo deterministic case, a R-NVP transform is defined, for $z_{1:D} \in \rset^D$, $u \in \msu$, as 
% \begin{equation*}
%     f_\phi(z_{1:D}, u) = (z_{\msi}, z_{\msi^{\complementary}}\odot \exp( s_\phi (z_{\msi},u)) +t_\phi (z_{\msi},u)) \eqsp.
% \end{equation*}
% Here, $\msi$ represents an auto regressive mask, and $\msi^{\complementary}$ its complementary in $\{1,\dots,D\}$, as presented in~\cite{dinh2016density}, and $\odot$ the element-wise product.

\subsection{Mixture of Gaussians: Additional Results}
  We use MetFlow with the pseudo random setting. Recall that by this, we mean that we define a unique function $(z,u) \to T_\phi(z, u)$ and before training, we sample innovation noise $u_1, \dots, u_K$ that will be ``fixed'' during all the training process.  We then perform optimization on the ``fixed'' flows $\trueflow_{\phi,i} \leftarrow \fwdtransfo_\phi(\cdot, u_i)\,,\, i \in \{1,\dots, K\}$. The specific setting we consider first is as follows. We compare the sampling based on variational inference with R-NVP Flow and our methodology MetFlow with LN-NVP. In both case, each elementary transformation we consider is the composition of 6 NVP transforms. Here $\msu = \rset^D$ with $D=2$ and the target distribution is the one described in \Cref{sec:mixture-gaussians_0}. 
  Each function $t$ and $s$ in the NVP flows is a neural network with one fully connected hidden layers of size $4$ for R-NVP and LN-NVP, % twice the input dimension (here, we call "input dimension" the latent dimension when there is no noise input, and twice the latent dimension when there is noise input - noise has the size of the latent space)
  and LeakyRelu (0.01) activation functions, and final layers with activation functions $\tanh$ for $s$ and identity for $t$. Each method is trained for 25000 iterations, with ADAM~\cite{kingma2014adam}, with learning rate of $10^{-3}$ and betas = $(0.9,0.999)$, and an early stopping criterion of 250 iterations (If within 250 iterations, the objective did not improve, we stop training). The prior distribution is a standard normal Gaussian. 
  \Cref{fig:5_flows_ours_png} and \Cref{fig:5_flows_rnvp_png} show the results and the effect of each trained flow. The first pictures are gradient-coloured along the $x$-axis, with the colour of each point corresponding to its position in the previous image. This helps us to understand how well the processes mixes and transforms an original distribution.
  \begin{figure*}
    \includegraphics[width=\linewidth]{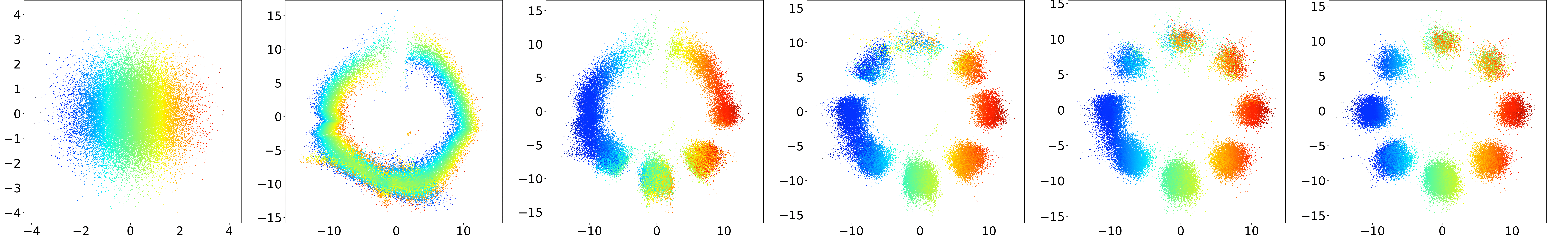}
    \caption{Consecutive outputs of each MetFlow kernel. Left: prior normal distribution, then successive effect of the 5 trained MetFlow kernels.}
  \label{fig:5_flows_ours_png}
  \end{figure*}
  \begin{figure*}
    \includegraphics[width=\linewidth]{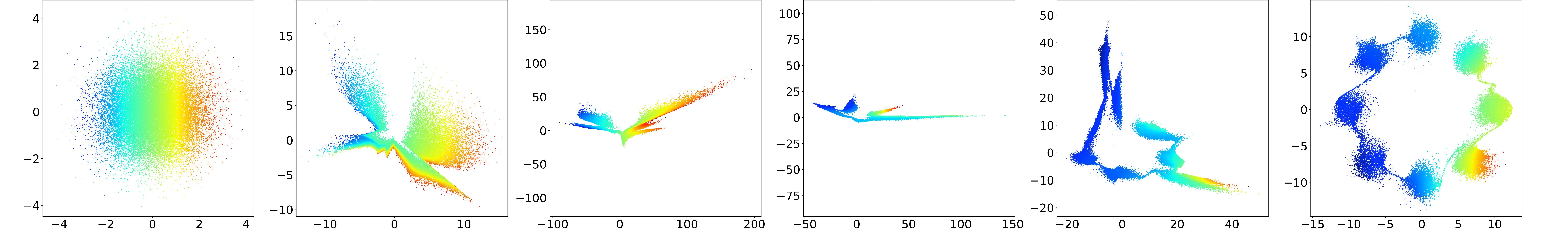}
    \caption{Consecutive outputs of each block of R-NVP. Left: prior normal distribution, then successive effect of the 5 trained R-NVP blocks - 6 transforms each.}
  \label{fig:5_flows_rnvp_png}
  \end{figure*}
  It is interesting to note that our method produces flows that are all relevant and all help the distribution to improve at each step. On the contrary, classical R-NVP and flows in general only take interest in what happens at the very last ($K$-th) step: the pushforward distribution can take any form in between. However, this representation is only interesting when each of the transformations have different sets of parameters, increasing a lot the total numbers of parameters to tune. The evolution of our method can be interpreted as well with the presence of acceptance ratios, that at each step ``filter'' or ``tutor'' the distribution, helping it not to get too far from the target. This allows us as well to introduce the setting where we can iterate our flows to refine more and more the approximation we produce in the end.
  We also show that our method is robust to a change in the prior (even a violent one). In the following figure, we change the standard normal prior to a mixture of two well-separated Gaussian distributions, with a standard deviation of 1 and means $(-50,0)$ and $(50,0)$. \Cref{fig:prior_subs} shows that MetFlow still remains efficient after iterating only a few times the learnt kernel, without retraining it at any point. It obviously does not work for R-NVP. Recall that by "iterating a kernel", we mean (as is explained in the main paper) that additional noise $\{u_i\}_{i\geq K+1}$ is sampled to define other transformations $\trueflow_{\phi,i} \leftarrow\fwdtransfo_\phi(\cdot, u_i)$ and thus additional MetFlow kernels. This particular property could find applications with time-changing data domains. Indeed, it does not require retraining existing models, while remaining an efficient way to sample from a given distribution at low computational cost.
 % \begin{figure*}
 %   \includegraphics[width=\linewidth]{pics/old/repeated_100_5_unique_ours_target_subst-1.png}
 %   \caption{Prior substitution for MetFlow}
 %   \label{fig:our_prior_subst}
 % \end{figure*}
 % \begin{figure*}
 %   \includegraphics[width=\linewidth]{pics/old/repeated_10_5_unique_rnvp_target_subst-1.png}
 %   \caption{Prior substitution for R-NVP}
 %   \label{fig:rnvp_prior_subst}
 % \end{figure*}
  \begin{figure}[h!]
    \begin{tabular}{ccccc}
      \includegraphics[width=0.23\linewidth]{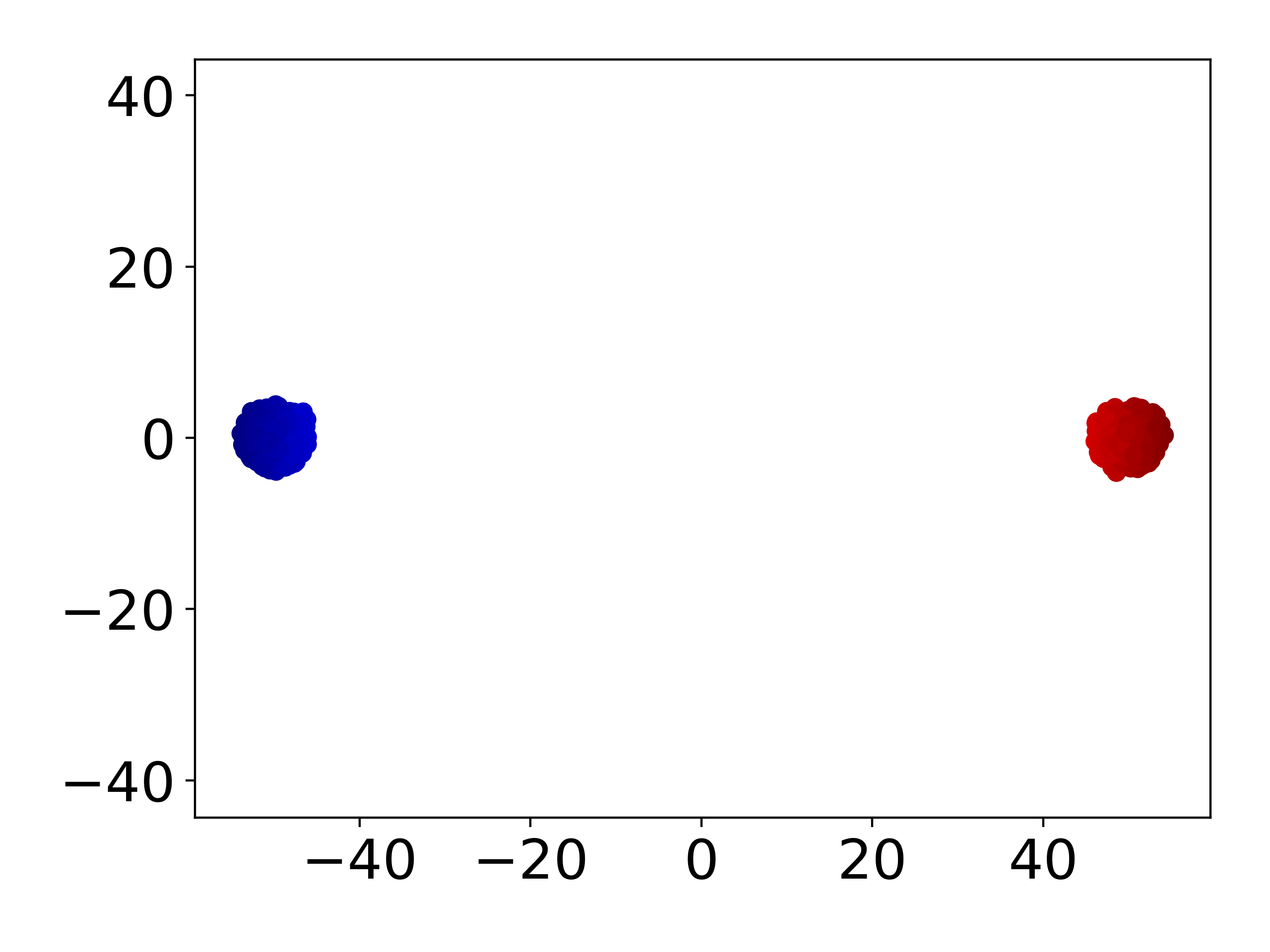} & \includegraphics[width=0.23\linewidth]{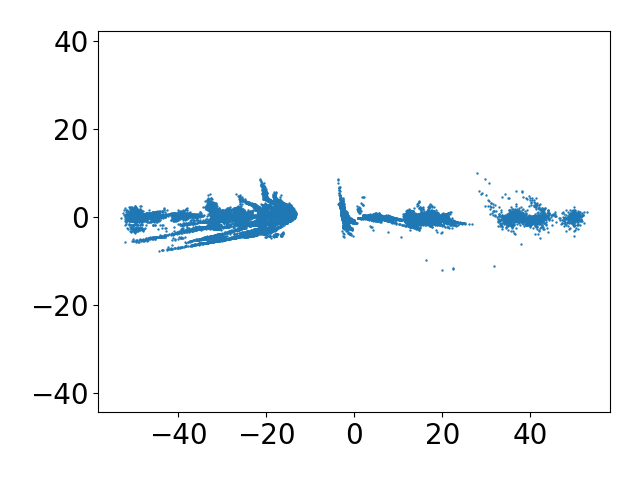} & \includegraphics[width=0.23\linewidth]{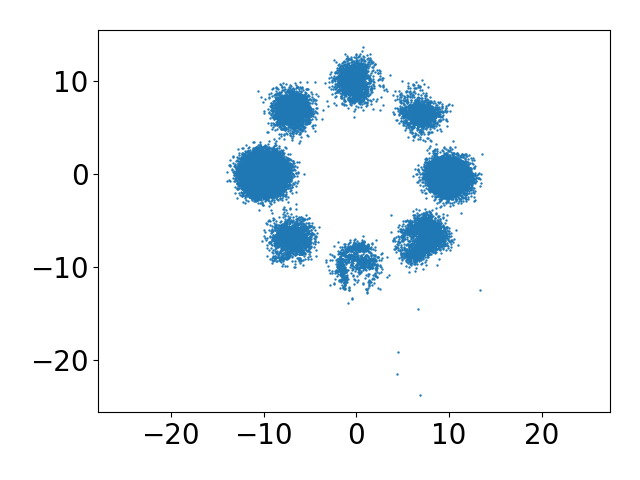}& \includegraphics[width=0.23\linewidth]{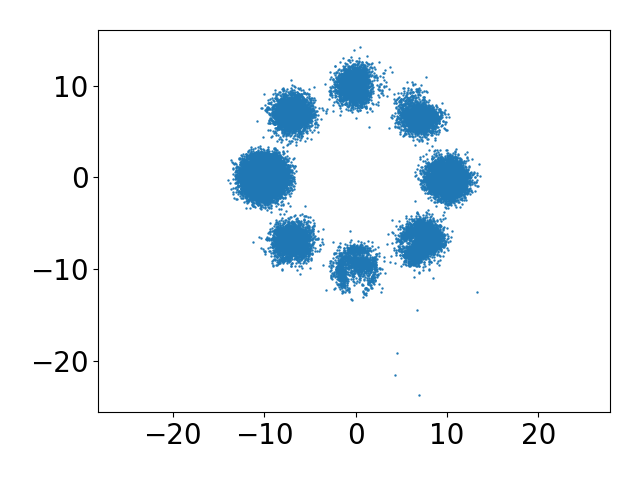}\\
      \includegraphics[width=0.23\linewidth]{pics/supp_gaussian_prior_subst/prior_subst.png} & \includegraphics[width=0.23\linewidth]{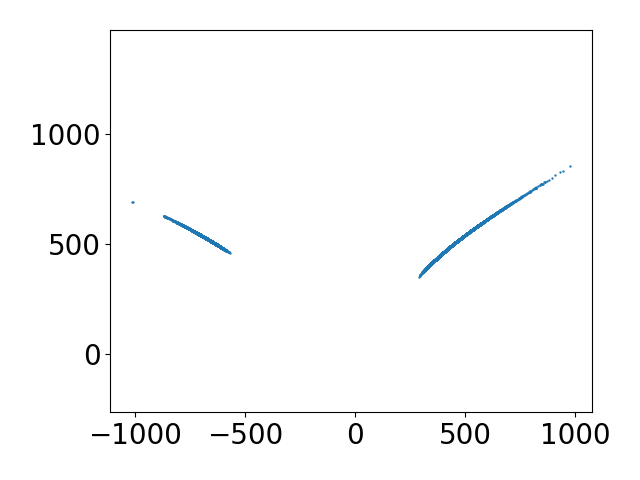} 
    \end{tabular}
    \caption{Changing the prior to a mixture of two separated Gaussians, having trained the method on a standard normal prior. Top row, from left to right: Subsituted prior, 5 trained MetFlow kernels, re-iteration of 100 MetFlow kernels, 200 MetFlow kernels. Bottom row: Substituted prior, R-NVP flow.}
  \label{fig:prior_subs}
  \end{figure}  

  In addition to the 2-dimensional results presented in the paper, we show here the results of our method for mixture of Gaussians of higher dimensions. Figure~\ref{fig:mode_capture} shows the number of Gaussians retrieved by our model when the target distribution is a mixture of 8 Gaussians with variance $1$, located at the corners of the $d$-dimensional hypercube, for different values of $d$. We see that our method significantly outperforms others, be they state-of-the-art Normalizing Flows (NAF) or MCMC methods (NUTS). Furthermore, our method scales more efficiently with dimension, as it is still able to retrieve two of the modes in dim 100 (for some runs), when other methods collapse to one mode for $d \geq 20$. MetFlow here is used in the pseudo random setting, with 7 LN-NVP blocks of two elementary transforms, $t$ and $s$ being as previously two one-layer fully connected neural networks, input dimension of twice the dimension considered $2D$ (again, we take $\msu=\rset^D$) and hidden dimension of $2D$. The activations functions stay the same. NUTS is ran with 1500 warm-up steps and a length of 10 000 samples. 
  \begin{figure}
    \includegraphics[width=\linewidth]{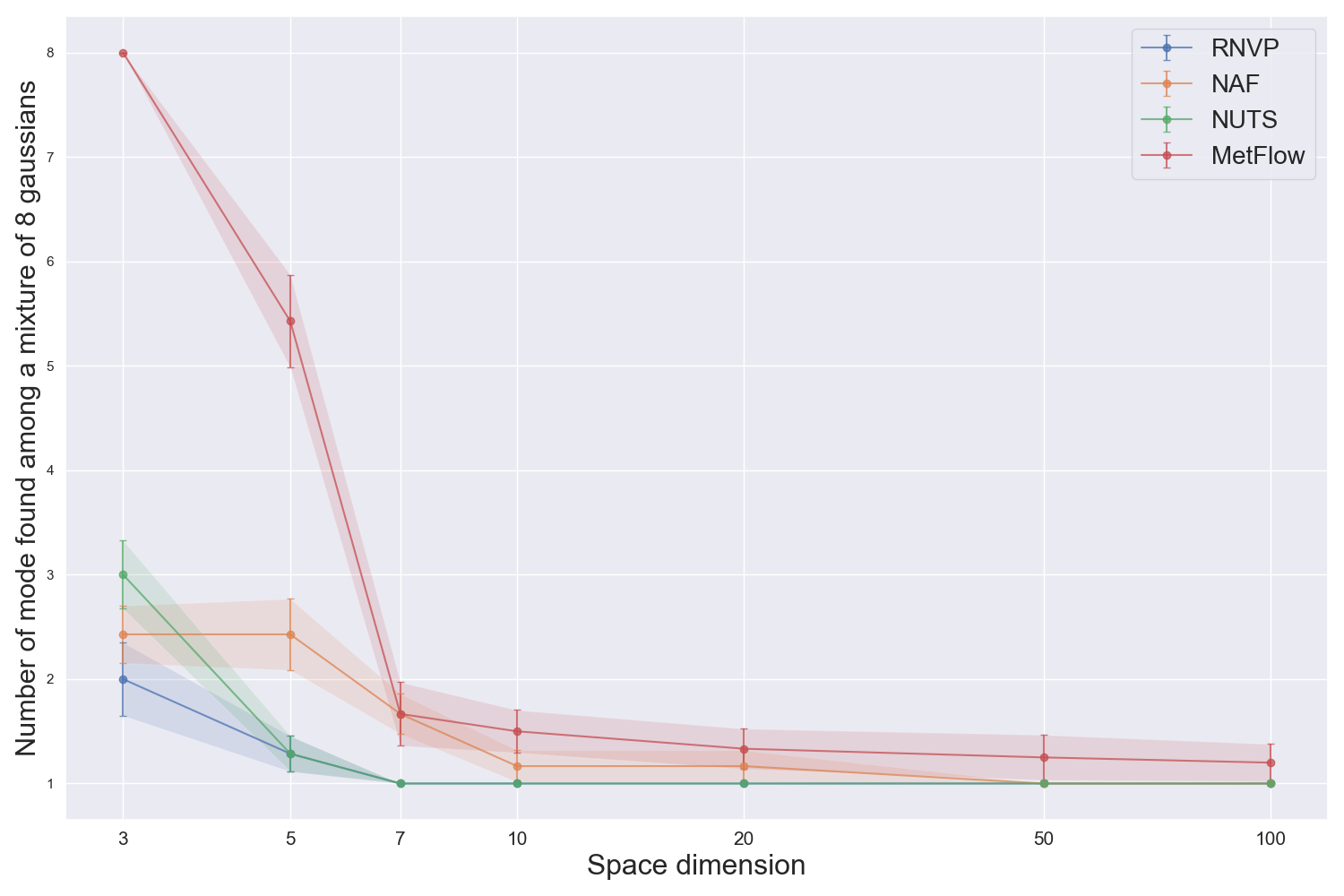}
    \caption{Number of modes retrieved by different methods. The target distribution is a mixture of 8 isotropic Gaussian distributions of variance 1 located at the corners of a $d$-dimensional hypercube. The methods are trained to convergence or retrieval of all modes, and mode retrieval is computed by counting the number of samples in a ball of radius $4d$ around the center of a mode. Error bars represent the standard deviation of the mean number of modes retrieved for different runs of the method (different initialization and random seed).}
  \label{fig:mode_capture}
  \end{figure}

\subsection{Funnel distribution}
  We now test our approach on an other hard target distribution for MCMC proposed in~\cite{neal2003}.
  The target distribution has density Lebesgue density \wrt~the Lebesgue measure given for any $z \in \rset^2$ by
  \begin{equation}
    \label{eq:def_funnel_distribution}
    \pi(z) \propto \exp(z_1^2/2) \exp(z_2\rme^{z_1}/2) \eqsp. 
  \end{equation}
  We are using exactly the same setting for MetFlow with LN-NVP and R-NVP as for the mixture of Gaussian. We train the flows with the same optimizer, for 25000 iterations again. Figure~\ref{fig:funnel_all} show the results of MetFlow and R-NVP flows. Again, as we are in the pseudo random setting, we  sample 95 additional innovation noise $(u_i)_{i \in \{6,\ldots,100\}}$ and show the distribution produced by 100 MetFlow kernels, having only trained the first five transformations $\{\trueflow_{\phi,i}\,,\, i\in\{1,\dots,5\}\}$, as described in \Cref{sec:experiments} of the main document.
  %The results plotted on Figure~\ref{fig:funnel_all} show the marginal distribution after the composition of the five trained kernels again in the pseudo random setting, and after iterating those five trained kernels 50, 100, 150, 200 and 250 times, by resampling 50, 100, 150, 200, 250 more innovation noise, as explained in the previous section. The figures are colored using the same setting.
  We can observe that after only five steps, the distribution has been pushed toward the end of the funnel. However, the amplitude is not recovered fully. It can be interpreted in light of the ``tutoring'' analogy we used previously. As the Accept/Reject control at each step the evolution of the points, if the number of steps is too small, the proposals do not got to the far end of the distribution. However, the plots show that the proposal given by MetFlow are still learnt relevantly, as Figure~\ref{fig:funnel_all} illustrates that iterating only a few more MetFlow kernels matches the target distribution in all its amplitude. %Moreover, as shown by the shuffling of the colors, MetFlow mixes as well in this situation.

  \begin{figure*}
    \includegraphics[width=\linewidth]{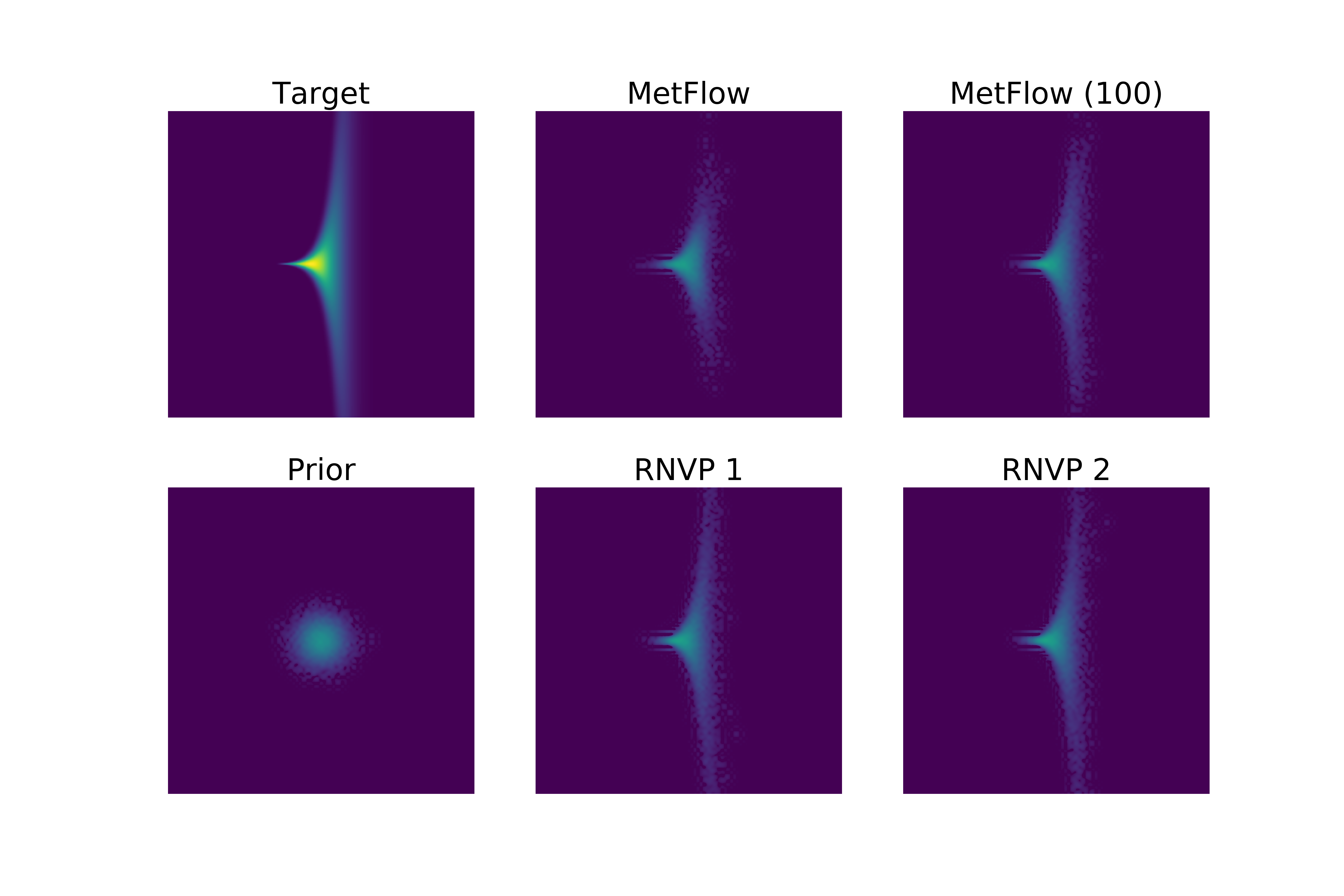}
    \caption{Density matching for funnel. Top row: Target distribution, MetFlow with 5 trained kernels, MetFlow with 5 trained kernel iterated 100 times. Bottom row: Prior distribution, First run of 5 R-NVP, second run of 5 R-NVP}
  \label{fig:funnel_all}
  \end{figure*}
%   \begin{figure*}[h!]
%     \includegraphics[width=\linewidth]{pics/repeated_251_5_unique_ours_funnel-1.png}
%     \caption{Funnel distribution, Our method}
%   \label{fig:repeated_251_5_unique_ours_funnel.pdf}
  % \end{figure*}

\subsection{Real-world inference - MNIST}
  In the experiments ran on MNIST described in the main paper, we fix a decoder $p_\theta$. To obtain our model, we use the method described in~\cite{kingma2016improved}. The mean-field approximation output by the encoder is here pushforward by a flow. In~\cite{kingma2016improved}, the flows used were Inverse Autoregressive Flows, introduced in the same work. We choose here to use a more flexible class of flows neural auto-regressive flows (NAF), introduced in~\cite{huang2018neural}, which show better results in terms of log-likelihood. 
  In practice, we use convolutional networks for our encoder and decoder, matching the architecture described in~\cite{salimans2015markov}. The inference network (encoder) consists of three convolutional layers, each with filters of size $5\times5$ and a stride of 2, and output 16, 32, and 32 feature maps, respectively. The output of the third layer feeds a fully connected layer with hidden dimension 450, which then is fully connected to the outputs, means and standard deviations, of the size of our latent dimension, here 64. Softplus activation functions are used everywhere except before outputting the means. For the decoder, a similar but reversed architecture is used, using upsampling instead of stride, again as described in~\cite{salimans2015markov}. The Neural Autoregressive Flows are given by pyro library. NAF have a hidden layer of 64 units, with an AutoRegressiveNN which is a deep sigmoidal flow~\cite{huang2018neural}, with input dimension 64 and hidden dimension 128. 
  Our data is the classical stochastic binarization of MNIST~\cite{salakhutdinov:murray:2008}. We train our model using Adam optimizer for 2000 epoches, using early stopping if there is no improvement after 100 epoches. The learning rate used is $10^{-3}$, and betas $(0.9,0.999)$. This produces a complex and expressive model for both our decoder and variational approximation.

  We show first using mixture experiments that MetFlow can overcome state-of-the-art sampling methods. The mixture experiment described in the main paper goes as follows. We fix $L$ different samples, and wish to approximate the complex posterior $p_\theta(\cdot|(x_i)_{i=1}^L))\propto p(z)\prod_{i=1}^L p_\theta(x_i|z)$. We give two approximations of this distribution, given by a state-of-the-art method, and MetFlow. The state-of-the-art method is a NAF a hidden layer of 16 units, with an AutoRegressiveNN which is a deep sigmoidal flow~\cite{huang2018neural}, with input dimension 64 and hidden dimension 128. MetFlow is trained here in the deterministic setting, with 5 blocks of 2 R-NVPs, where again each function $t$ and $s$ is a neural network with one fully connected hidden layers of size 128 and LeakyRelu (0.01) activation functions, and final layers with activation functions $\tanh$ for $s$ and identity for $t$. MetFlow and NAF are optimized using 10000 batches of size 250 and early stopping tolerance of 250, with ADAM, with learning rate of $10^{-3}$ and betas = $(0.9,0.999)$. We use Barker ratios as well here. The prior in both cases is a standard $64$-dimensional Gaussian. 
  \begin{figure}[h!]
    \centering
    \includegraphics[width=0.4\linewidth]{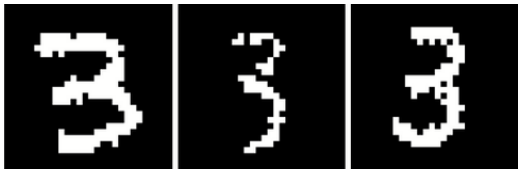}
    \label{fig:fixed_digits}
    \caption{Fixed digits for mixture experiment.}
  \end{figure}
  \begin{figure}[h!]
    \centering
    \begin{minipage}{0.45\textwidth}
      \centering
      \includegraphics[width=\textwidth]{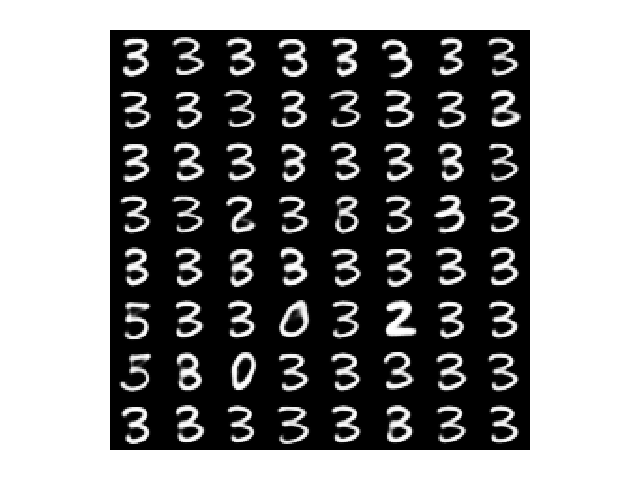}
      \caption{Mixture of 3, MetFlow approximation.}
    \label{fig:mix_metFlow}
    \end{minipage}
    \begin{minipage}{0.45\textwidth}
      \centering
      \includegraphics[width=\textwidth]{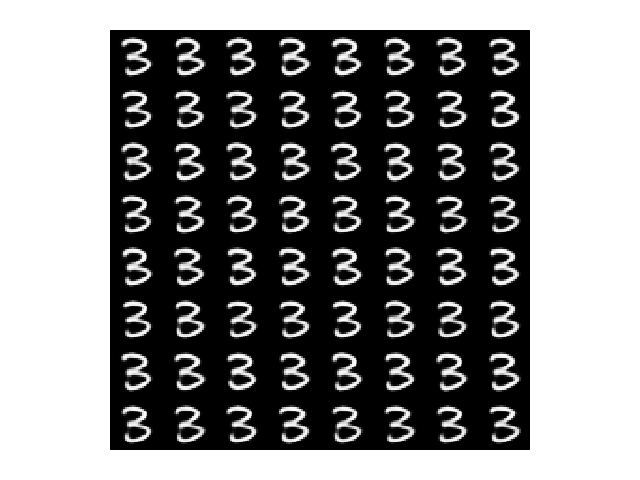}
      \caption{Mixture of 3, NAF approximation.}
    \label{fig:mix_naf}
    \end{minipage}
  \end{figure}

  We see on \Cref{fig:mix_naf} that if NAF ``collapses'' to one fixed digit (thus one specific mode of the posterior), MetFlow \Cref{fig:mix_metFlow} is able to find diversity and multimodality in the posterior, leading even to ``wrong'' digits sometimes, showing that it truly explores the complicated latent space.

  Moreover, we can compare the variational approximations computed for our VAE (encoder - mean field approximation - and encoder and NAF) to MetFlow approximation. MetFlow used here are the composition of 5 blocks of 2 R-NVPs (deterministic setting), where each function $t$ and $s$ is a neural network with one fully connected hidden layers of size 128 and LeakyRelu (0.01) activation functions, and final layers with activation functions $\tanh$ for $s$ and identity for $t$. MetFlow is optimized using $150$ epoches of $192$ batches of size $250$ over the dataset MNIST, with ADAM, with learning rate of $10^{-4}$ and betas = $(0.9,0.999)$ and early stopping with tolerance of 25 (if within 25 epoches the objective did not improve, we stop training). The optimization goes as follows. We fix encoder (mean-field approximation) and decoder (target distribution). For each sample $x$ in a minibatch of the dataset, we optimize MetFlow starting from the prior given by the encoder (mean-field approximation) and targetting posterior $p_\theta(\cdot|x)$. Note that during  all this optimization procedure, the initial distribution of MetFlow is fixed to be the encoder, which corresponds to freeze the corresponding parameters.  This is a simple generalization of our method to amortized inference. In the following, we use Barker ratios. The additional results are given by \Cref{fig:Gibbs1,fig:Gibbs3,fig:Gibbs4,fig:Gibbs5,fig:Gibbs6}.
  \begin{figure*}
    \includegraphics[width=\linewidth]{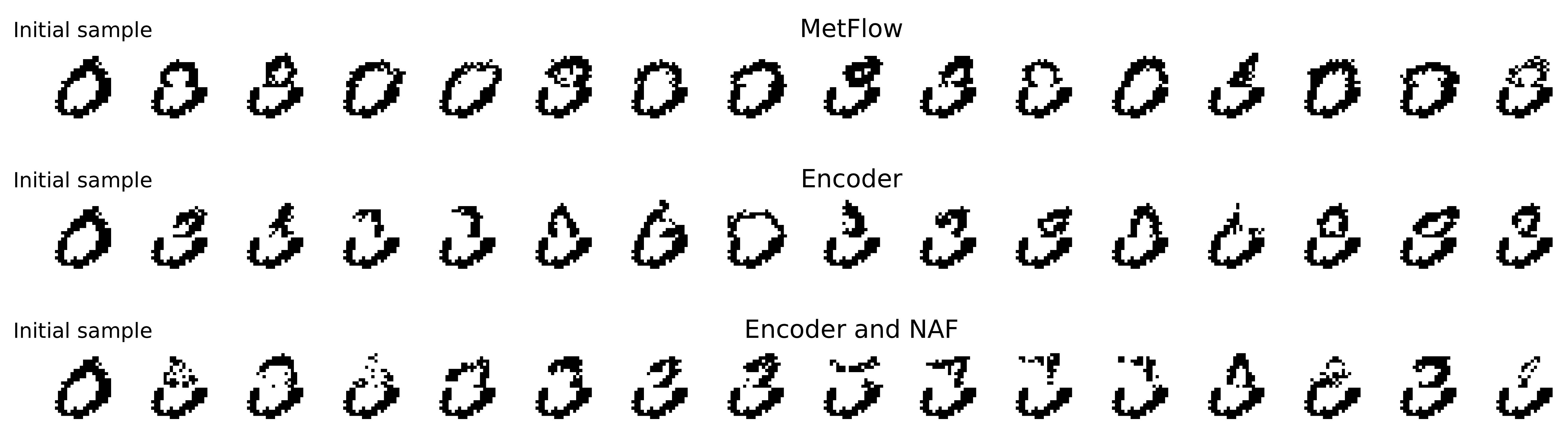}
    \caption{Gibbs inpainting experiments starting from digit 0.}
    \label{fig:Gibbs1}
  \end{figure*}
   \begin{figure*}
    \includegraphics[width=\linewidth]{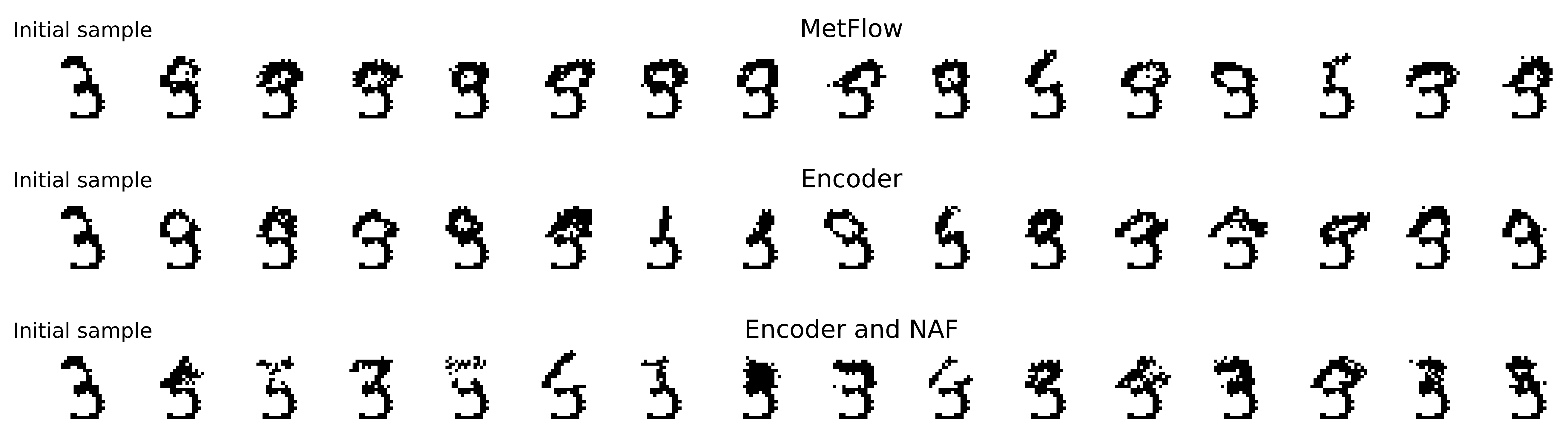}
    \caption{Gibbs inpainting experiments starting from digit 3.}
    \label{fig:Gibbs3}
  \end{figure*}
  \begin{figure*}
    \includegraphics[width=\linewidth]{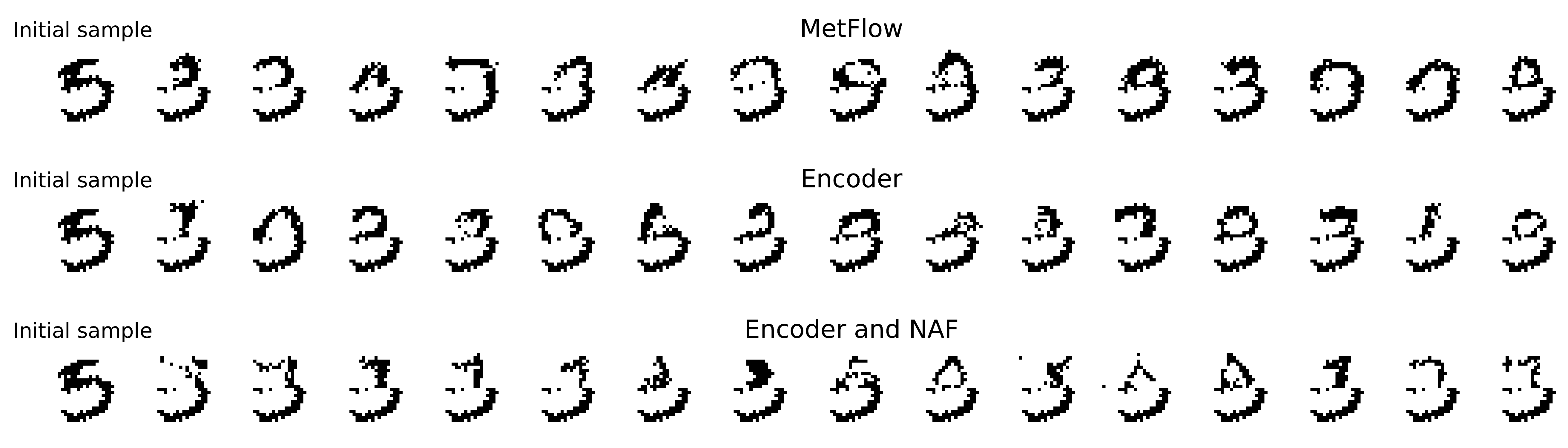}
    \caption{Gibbs inpainting experiments starting from digit 9.}
    \label{fig:Gibbs4}
  \end{figure*}
  \begin{figure*}
    \includegraphics[width=\linewidth]{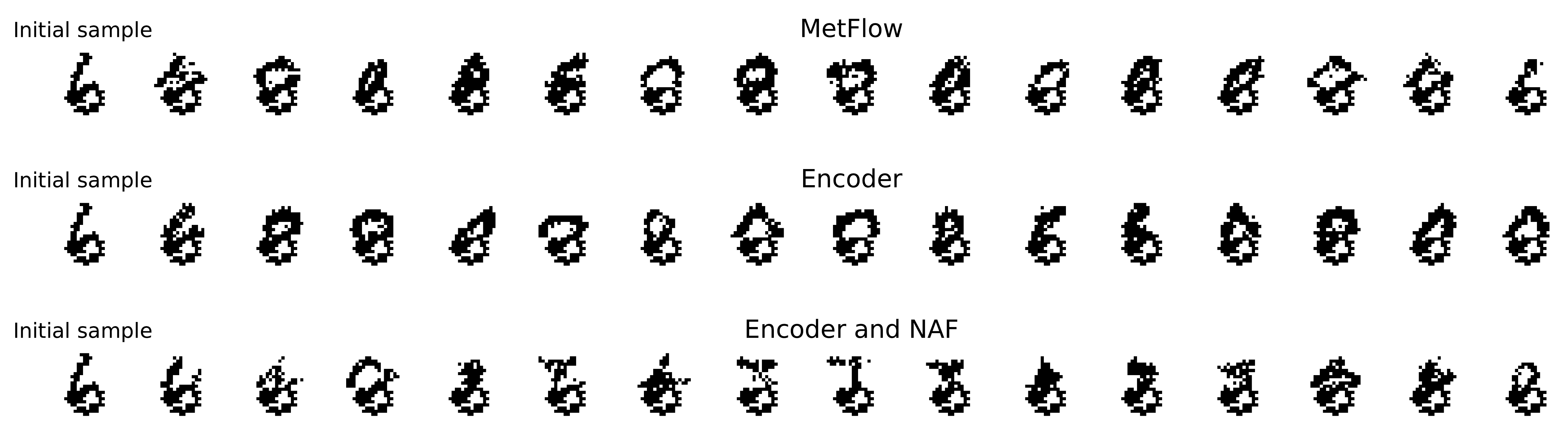}
    \caption{Gibbs inpainting experiments starting from digit 6.}
    \label{fig:Gibbs5}
  \end{figure*}
   \begin{figure*}
    \includegraphics[width=\linewidth]{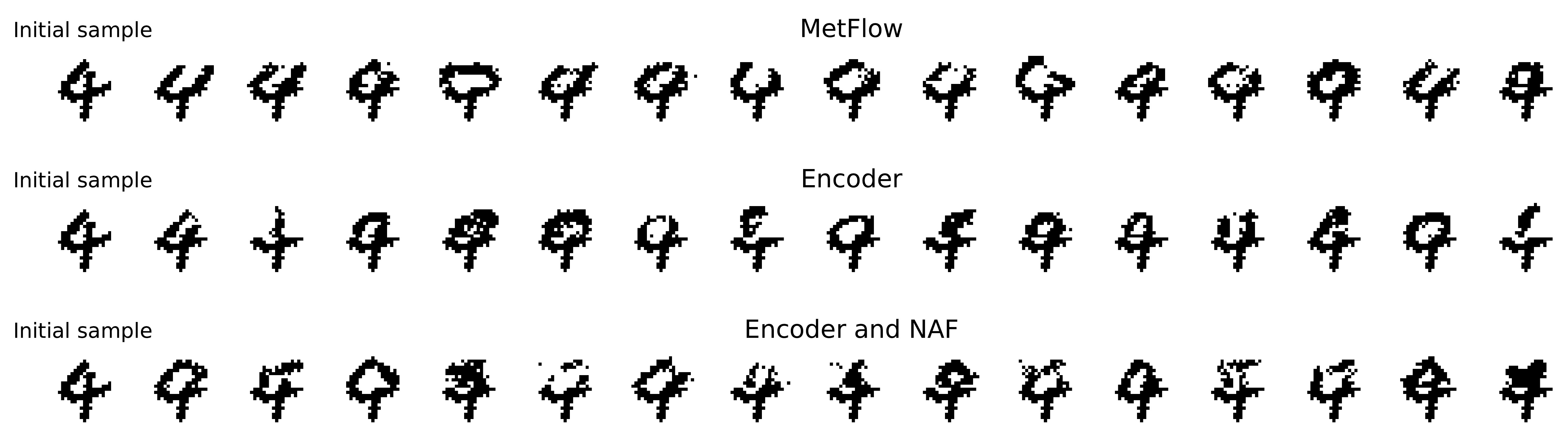}
    \caption{Gibbs inpainting experiments starting from digit 4.}
    \label{fig:Gibbs6}
  \end{figure*}

  The encoder represents just the mean-field approximation here, while encoder and NAF represent the total variational distribution learnt by the VAE described above.

\subsection{Additional setting of experiments}
  So far, we have described two settings. The first one, that we have called deterministic, in which the transformations take no input ``innovation noise'', but all have different sets of parameters.  The second one, pseudo random, defines one global transformation $\fwdtransfo_\phi$ with a unique set of parameters $\phi$, considers $K$ initially sampled at random ``innovation noise'' $\tu_1,\dots, \tu_K$, and considers densities $\chunk{h}{1}{K}$ such that the noise resampled at every step of the optimization is ``fixed''. It then learns the parameters for the global transformation using the considered transformations $\trueflow_{\phi,i} \leftarrow\fwdtransfo_\phi(\cdot, u_i)\,,\,i\in \{1,\dots,K\}$.
  We can consider here a last setting, \emph{fully random}, on which a global transformation $\fwdtransfo_\phi$ with a unique set of parameters $\phi$ is considered. However, now, we consider a unique density $h$ (typically Gaussian) to sample the ``innovation noise'' at every step of the optimization - note that this is still covered by Algorithm~\ref{alg:optimization_MetFlow}. This allow us to consider properly random transformations, and looks more like the classical framework of MCMC. 
  Even if this introduces more noise in our stochastic gradient descent, it encourages MetFlow kernels trained to incorporate a lot more diversity. This can be seen with the two experiments described before, for mixture of different digits in MNIST, or the inpainting experiments. Even though experiments can show more diversity, it is important to note that they typically require a longer number of epoches to reach convergence.
  We give in \Cref{fig:comparison_setting}~a comparison of the three settings for a mixture of digits problem. As we can see, the diversity introduced by our method is highest in the fully random setting. We can see a wider variety of 3, even though other digits tend to appear more when we decode them as well. 
  \begin{figure}[h!]
    \centering
    \begin{minipage}{0.33\textwidth}
      \centering
      \includegraphics[width=\textwidth]{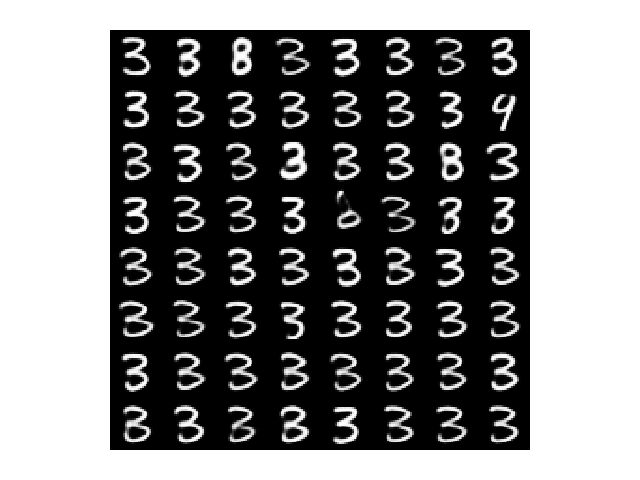}
    \end{minipage}
    \begin{minipage}{0.33\textwidth}
      \centering
      \includegraphics[width=\textwidth]{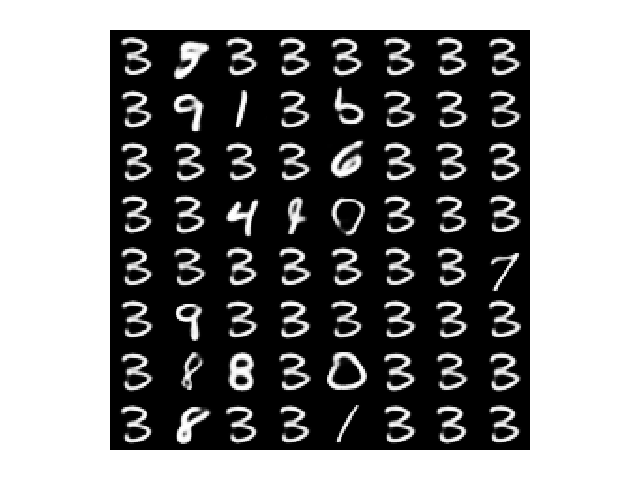}
    \end{minipage}
    \begin{minipage}{0.33\textwidth}
      \centering
      \includegraphics[width=\textwidth]{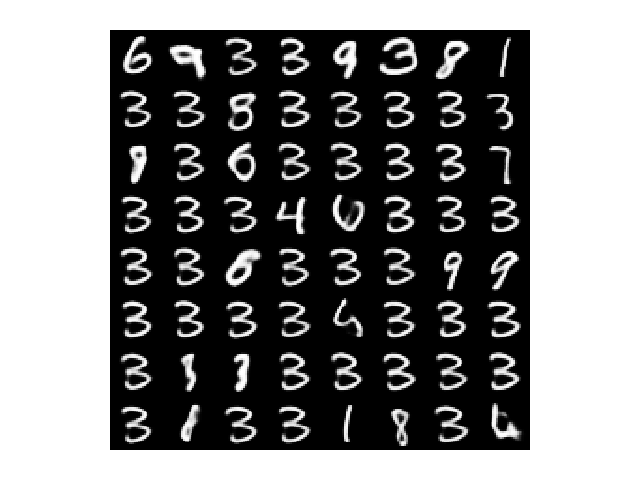}
    \end{minipage}
    \label{fig:comparison_setting}
    \caption{Comparison of the different settings described for a mixture of digits experiment. From left to right, deterministic setting, pseudo-random setting, fully random setting.}
  \end{figure}

%%%%%%%%%%%%%%%%%%%%%%%%%%%%%%%%%%%%%%%%%%%%%%%%%%%%%%%%%%%%%%%%%%%%%%%%%%%%%%%
%%%%%%%%%%%%%%%%%%%%%%%%%%%%%%%%%%%%%%%%%%%%%%%%%%%%%%%%%%%%%%%%%%%%%%%%%%%%%%%
% DELETE THIS PART. DO NOT PLACE CONTENT AFTER THE REFERENCES!
%%%%%%%%%%%%%%%%%%%%%%%%%%%%%%%%%%%%%%%%%%%%%%%%%%%%%%%%%%%%%%%%%%%%%%%%%%%%%%%
%%%%%%%%%%%%%%%%%%%%%%%%%%%%%%%%%%%%%%%%%%%%%%%%%%%%%%%%%%%%%%%%%%%%%%%%%%%%%%%
%\appendix
%\label{sec:appendix}

%\section{Supplementary materials}
% \todo{Here we might put some of the derivations and later we will make a separate file with Supplementary materials out of this.}

% \textbf{\emph{Do not put content after the references.}}
%

\end{document}